\documentclass[10pt,journal,compsoc]{IEEEtran}
\usepackage{amsmath}
\usepackage{amssymb}
\usepackage{epsfig}
\usepackage{graphicx}
\usepackage[utf8]{inputenc}
\usepackage[T1]{fontenc}
\usepackage{url}
\usepackage{microtype}
\usepackage{lipsum}
\usepackage{graphicx}
\usepackage{booktabs}
\usepackage{amsfonts}
\usepackage{verbatim}
\usepackage{makecell}
\usepackage{multirow}

\usepackage{amsthm}
\newtheorem{theorem}{Theorem}
\newtheorem{proposition}{Proposition}
\newtheorem{assumption}{Assumption}

\newtheorem{definition}{Definition}

\usepackage{bm}
\usepackage{caption}
\usepackage{booktabs,subcaption,amsfonts,dcolumn}
\usepackage[switch]{lineno}
\usepackage[pagebackref=false,breaklinks=false,colorlinks,citecolor=blue,bookmarks=false]{hyperref}

\usepackage{dsfont}
\usepackage{pifont}
\DeclareMathOperator*{\argmin}{argmin}
\DeclareMathOperator*{\argmax}{argmax}

\usepackage{xspace}

\makeatletter
\DeclareRobustCommand\onedot{\futurelet\@let@token\@onedot}
\def\@onedot{\ifx\@let@token.\else.\null\fi\xspace}

\def\eg{\emph{e.g}\onedot} 
\def\ie{\emph{i.e}\onedot}

\def\etal{\emph{et al}\onedot}
\makeatother

\usepackage{color}
\usepackage[dvipsnames]{xcolor}
\usepackage{colortbl}

\definecolor{Gray}{gray}{0.9}
\definecolor{LightBlue}{RGB}{236,244,249}

\definecolor{color1}{HTML}{ECF4F9}
\definecolor{color2}{HTML}{FFF1E0}
\definecolor{color3}{HTML}{ECF4E9}
\newcommand{\CC}[1]{\cellcolor{color1}}
\newcommand{\RC}[1]{\rowcolor{color1}}
\newcommand{\RD}[1]{\rowcolor{color2}}
\newcommand{\RE}[1]{\rowcolor{color3}}

\newcommand{\cmark}{\color{green}\ding{51}}%
\newcommand{\xmark}{\color{red}\ding{55}}%

\usepackage{algorithm}
\usepackage{algpseudocode}
\algrenewcommand{\algorithmicrequire}{\textbf{Input:}}
\algrenewcommand{\algorithmicensure}{\textbf{Output:}}

\definecolor{simgcd}{HTML}{3498DB}
\newcommand{\simgcd}[1]{\textbf{{\color{simgcd}#1}}}
\definecolor{protogcd}{HTML}{E67E22}
\newcommand{\protogcd}[1]{\textbf{{\color{protogcd}#1}}}

\hyphenation{op-tical net-works semi-conduc-tor}

\begin{document}

\title{ProtoGCD: Unified and Unbiased Prototype Learning for Generalized Category Discovery}

\author{Shijie~Ma,
	Fei~Zhu,
	Xu-Yao~Zhang,~\IEEEmembership{Senior Member,~IEEE},
	and~Cheng-Lin~Liu,~\IEEEmembership{Fellow,~IEEE}
	\IEEEcompsocitemizethanks{
    \IEEEcompsocthanksitem This work has been supported by the National Science and Technology Major Project (2022ZD0116500),  National Natural Science Foundation of China (62222609, 62076236), CAS Project for Young Scientists in Basic Research (YSBR-083), Key Research Program of Frontier Sciences of CAS (ZDBS-LY-7004), and the InnoHK program. (Corresponding author: Xu-Yao Zhang.)
    \IEEEcompsocthanksitem Shijie Ma, Xu-Yao Zhang and Cheng-Lin Liu are with the State Key Laboratory of Multimodal Artificial Intelligence Systems, Institute of Automation, Chinese Academy of Sciences, 95 Zhongguancun East Road, Beijing 100190, P.R. China, and also with the School of Artificial Intelligence, University of Chinese Academy of Sciences, Beijing 100049, China.
    Email: mashijie2021@ia.ac.cn, \{xyz, liucl\}@nlpr.ia.ac.cn.
    \IEEEcompsocthanksitem Fei Zhu is with the Centre for Artificial Intelligence and Robotics, Hong Kong Institute of Science and Innovation, Chinese Academy of Sciences, Hong Kong 999077, P.R. China.
    Email: zhfei2018@gmail.com.
    \IEEEcompsocthanksitem Code is available at {https://github.com/mashijie1028/ProtoGCD}.
    }
}

\markboth{IEEE TRANSACTIONS ON PATTERN ANALYSIS AND MACHINE INTELLIGENCE}
{Ma \MakeLowercase{\textit{et al.}}: ProtoGCD: Unified and Unbiased Prototype Learning for Generalized Category Discovery}

\IEEEtitleabstractindextext{
\begin{abstract}
Generalized category discovery (GCD) is a pragmatic but underexplored problem, which requires models to automatically cluster and discover novel categories by leveraging the labeled samples from old classes. The challenge is that unlabeled data contain both old and new classes. Early works leveraging pseudo-labeling with parametric classifiers handle old and new classes separately, which brings about imbalanced accuracy between them. Recent methods employing contrastive learning neglect potential positives and are decoupled from the clustering objective, leading to biased representations and sub-optimal results. To address these issues, we introduce a unified and unbiased prototype learning framework, namely ProtoGCD, wherein old and new classes are modeled with joint prototypes and unified learning objectives, {enabling unified modeling between old and new classes}. Specifically, we propose a dual-level adaptive pseudo-labeling mechanism to mitigate confirmation bias, together with two regularization terms to collectively help learn more suitable representations for GCD. Moreover, for practical considerations, we devise a criterion to estimate the number of new classes.
Furthermore, we extend ProtoGCD to detect unseen outliers, achieving task-level unification.
Comprehensive experiments show that ProtoGCD achieves state-of-the-art performance on both generic and fine-grained datasets.
\end{abstract}

\begin{IEEEkeywords}
		Generalized Category Discovery, Open-World Learning, Semi-Supervised Learning, Prototype Learning.
\end{IEEEkeywords}}

\maketitle
\IEEEdisplaynontitleabstractindextext
\IEEEpeerreviewmaketitle
\IEEEraisesectionheading{\section{Introduction}\label{sec:introduction}}

\IEEEPARstart{H}{umans} are capable of discovering and acquiring novel concepts based on what they have learned~\cite{troisemaine2023novel,9464163,zhu2024open}. Consider that a kid has been taught to recognize some species (\eg, ``cat'', ``panda'', ``car'') and gradually grasp some general knowledge, \ie, \emph{what constitutes a class}. Then, the kid could cluster some ``tiger'' images together and regard them as a novel category even without learning them before, as shown in Fig.~\ref{fig:gcd-setting}. Accordingly, it is important to empower such ability to deep learning and make it more applicable in the \emph{open-world}~\cite{zhu2024open,9040673,salehi2022a,ZHAO2024104059}, where samples from new classes might emerge and models are expected to discover them by transferring the knowledge from old classes.

Recently, novel category discovery (NCD)~\cite{troisemaine2023novel,9464163,Han2020Automatically,zhong2021openmix,Zhong_2021_CVPR,zhao21novel,Li_2023_CVPR} has been introduced to solve the aforementioned problem. Formally, NCD aims to automatically cluster the unlabeled novel classes by leveraging the knowledge learned from old classes in the labeled dataset. It assumes that unlabeled data exclusively comprises samples from novel categories, which often fails to hold in reality. By relaxing such a strong assumption, Vaze~\etal~\cite{vaze2022gcd} extended NCD to a more pragmatic setting, called generalized category discovery (GCD). In GCD, images from unlabeled data could contain both old and new classes.

\begin{figure}[!t]
    \centering
    \includegraphics[width=.95\linewidth]{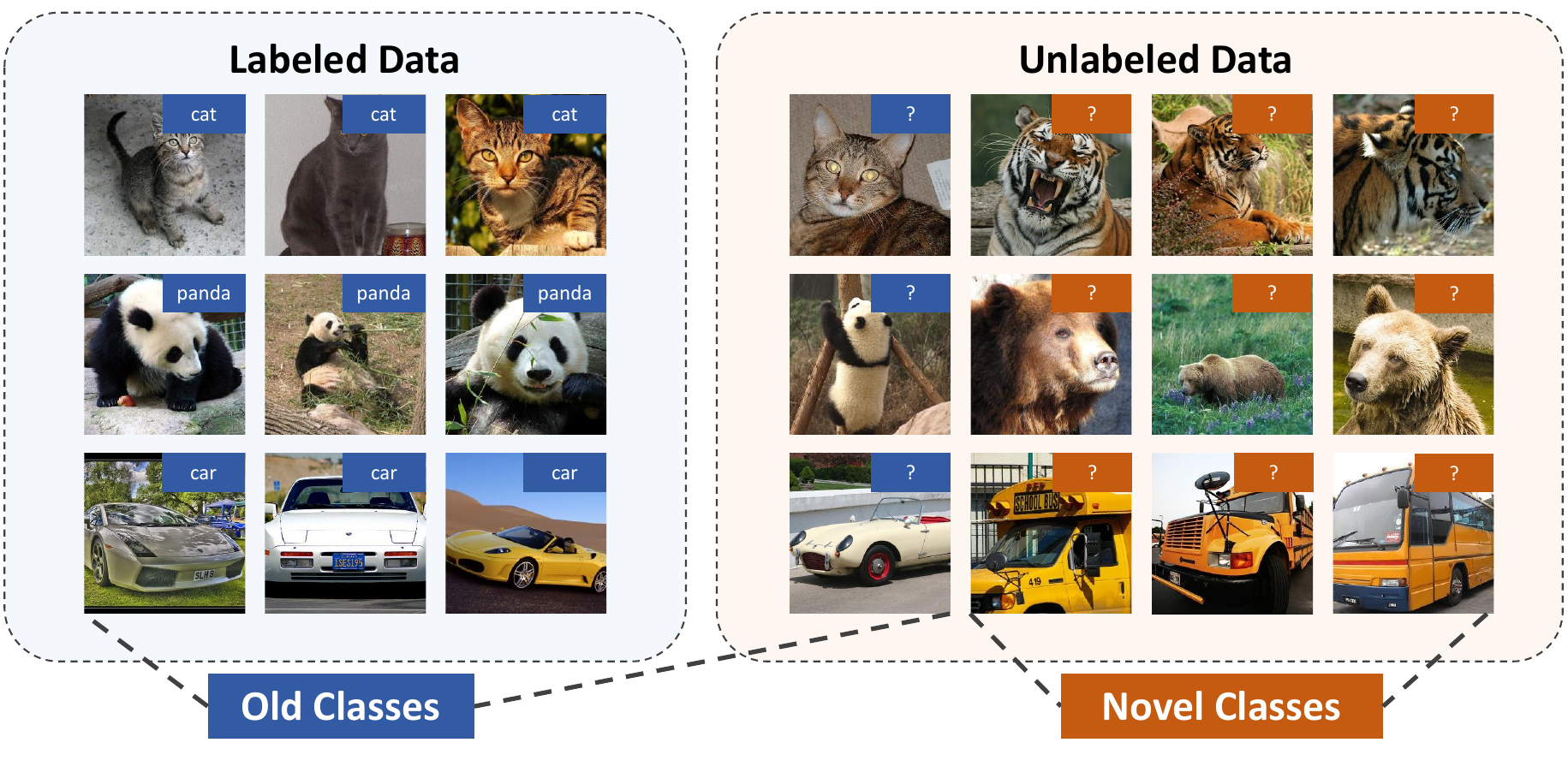}
    \vspace{-5pt}
    \caption{Generalized category discovery. Given a dataset with labeled data from old classes and unlabeled data from both old and novel categories. The objective is to classify old classes and cluster new categories in the unlabeled data.}
    \label{fig:gcd-setting}
    \vspace{-10pt}
\end{figure}

\begin{figure*}[!t]
    \centering
    \includegraphics[width=.95\linewidth]{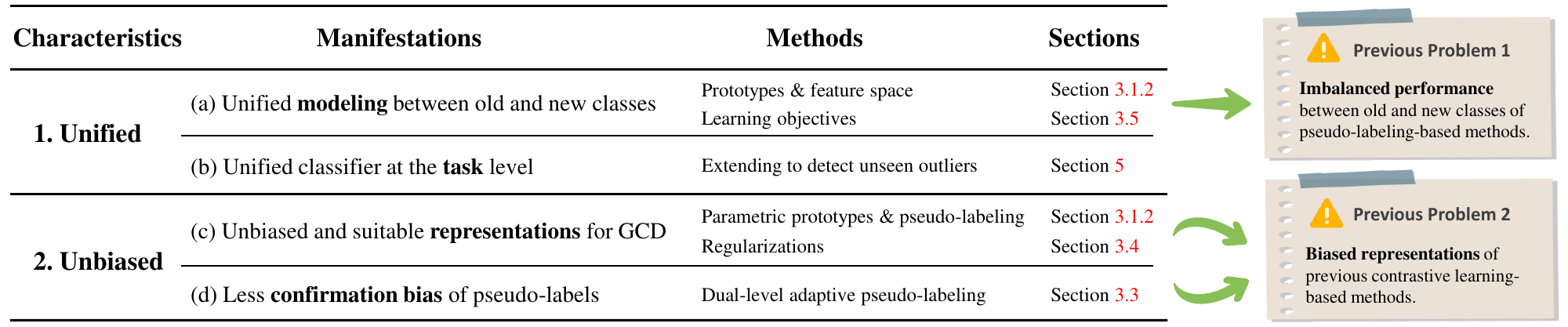}
    \vspace{-5pt}
    \caption{The \textbf{unified} and \textbf{unbiased} characteristics of ProtoGCD, which contribute to addressing the issues of prior methods.}
    \label{fig:two-charasteristics}
\end{figure*}

In this paper, we tackle the task of GCD~\cite{vaze2022gcd,fei2022xcon,Zhao_2023_ICCV,pu2023dynamic} as illustrated in Fig.~\ref{fig:gcd-setting}, which is a challenging \textit{open-world}~\cite{zhu2024open,salehi2022a,ZHAO2024104059} setting in that models need to simultaneously discover novel categories and recognize old classes coexisting in the unlabeled data. Pioneer works~\cite{vaze2022gcd,fei2022xcon} resort to the supervised~\cite{NEURIPS2020_d89a66c7} and unsupervised contrastive learning~\cite{pmlr-v119-chen20j} on labeled data and unlabeled data, respectively. And non-parametric semi-supervised K-means~\cite{vaze2022gcd,arthur2007k} is employed upon the learned features for clustering. However, contrastive learning alone ignores underlying positives and is susceptible to \textit{class collision}~\cite{Zheng_2021_ICCV}. Furthermore, pure contrastive learning is essentially decoupled with the clustering objective of GCD, leading to biased representations and sub-optimal performance.
Another line of works~\cite{9464163,Fini_2021_ICCV} use pseudo-labels and handle old and novel classes with separate classification heads and learning objectives. These methods tend to be biased toward old classes~\cite{vaze2022gcd}, and bring about imbalanced accuracies between old and new classes. The problems of preceding methods are summarized in Fig.~\ref{fig:two-charasteristics}.

To solve the issues above, we propose a unified and unbiased \textbf{Proto}type Learning framework for \textbf{G}eneralized \textbf{C}ategory \textbf{D}iscovery (ProtoGCD), which handles old and novel categories jointly in a shared feature space with the same set of learnable prototypes. {There are two key insights to solve the issues of prior methods:} (1) The first is the unification between old and new classes (Fig.~\ref{fig:two-charasteristics} (a)). We model old and new classes with a joint classifier and unified learning objectives, which helps alleviate the imbalanced performance of prior parametric methods~\cite{9464163,Fini_2021_ICCV}, {as shown in Fig.~\ref{fig:unification} (a) and (b).} (2) Secondly, the model is equipped with a parametric prototypical classifier and self-trained with pseudo-labels, which aligns more closely with the clustering objective and learns more suitable representations for GCD (Fig.~\ref{fig:two-charasteristics} (c)) than contrastive learning-based methods~\cite{vaze2022gcd,fei2022xcon,Zhao_2023_ICCV,pu2023dynamic}. Specifically, considering the challenging annotation conditions in GCD, we propose a dual-level adaptive pseudo-labeling (DAPL) mechanism. The model adaptively adjusts both the type and proportion of pseudo-labels assigned to unlabeled samples, according to the samples' confidence and the model's performance. DAPL ensures efficient and stable self-training while effectively circumventing \textit{confirmation bias}~\cite{pseudoLabel2019} (Fig.~\ref{fig:two-charasteristics} (d)). Additionally, two regularization terms (Fig.~\ref{fig:two-charasteristics} (c)) are further proposed to avoid trivial solutions of clustering and learn better features. Besides, we propose a novel criterion that simultaneously considers the feature space and the classification performance of old classes to precisely estimate the number of new classes, enabling our method to manage the more challenging situation where the number of novel categories is unknown. As a whole, the feature extractor and learnable prototypes are trained together in an end-to-end manner, making ProtoGCD learn efficiently and achieve remarkable performance.

Furthermore, beyond GCD, we explore the unification at the task level (Fig.~\ref{fig:two-charasteristics} (b)), and extend ProtoGCD to detect unseen outliers. As in Fig.~\ref{fig:unification} (c), ProtoGCD could classify both the old classes and the previously discovered new classes, as well as detect unseen outliers, which makes it a potentially unified open-world classifier.

Our main contributions are summarized as follows:
\begin{itemize}
    \item We propose ProtoGCD, a unified and unbiased framework for the task of GCD, which effectively addresses the issues of imbalanced performance and biased representations in previous methods.
    \item The unified modeling helps ProtoGCD achieve balanced accuracy between old and new classes, and we propose dual-level adaptive pseudo-labeling and regularizations to learn unbiased representations.
    \item We devise \emph{Prototype Score} to estimate the number of novel classes, making our method more practical.
    \item At the task level, we extend ProtoGCD to detect outliers from unseen classes, and achieve the unification of multiple tasks. 
    \item Experiments on generic and fine-grained datasets show that ProtoGCD outperforms previous state-of-the-art methods by a large margin and \emph{Prototype Score} obtains more accurate class number estimation.
\end{itemize}

The remainder is organized as follows: Section~\ref{sec:related-work} shows related works. Section~\ref{sec:method-protogcd} introduces the proposed ProtoGCD. Section~\ref{sec:estimate-number} presents the class number estimation algorithm and Section~\ref{sec:extend-to-ood} extends ProtoGCD to detect unseen outliers. Section~\ref{sec:experiments} provides comprehensive experiments and Section~\ref{sec:conclusion} concludes the paper and outlines future works.

\begin{figure*}[!t]
    \centering
    \includegraphics[width=.85\linewidth]{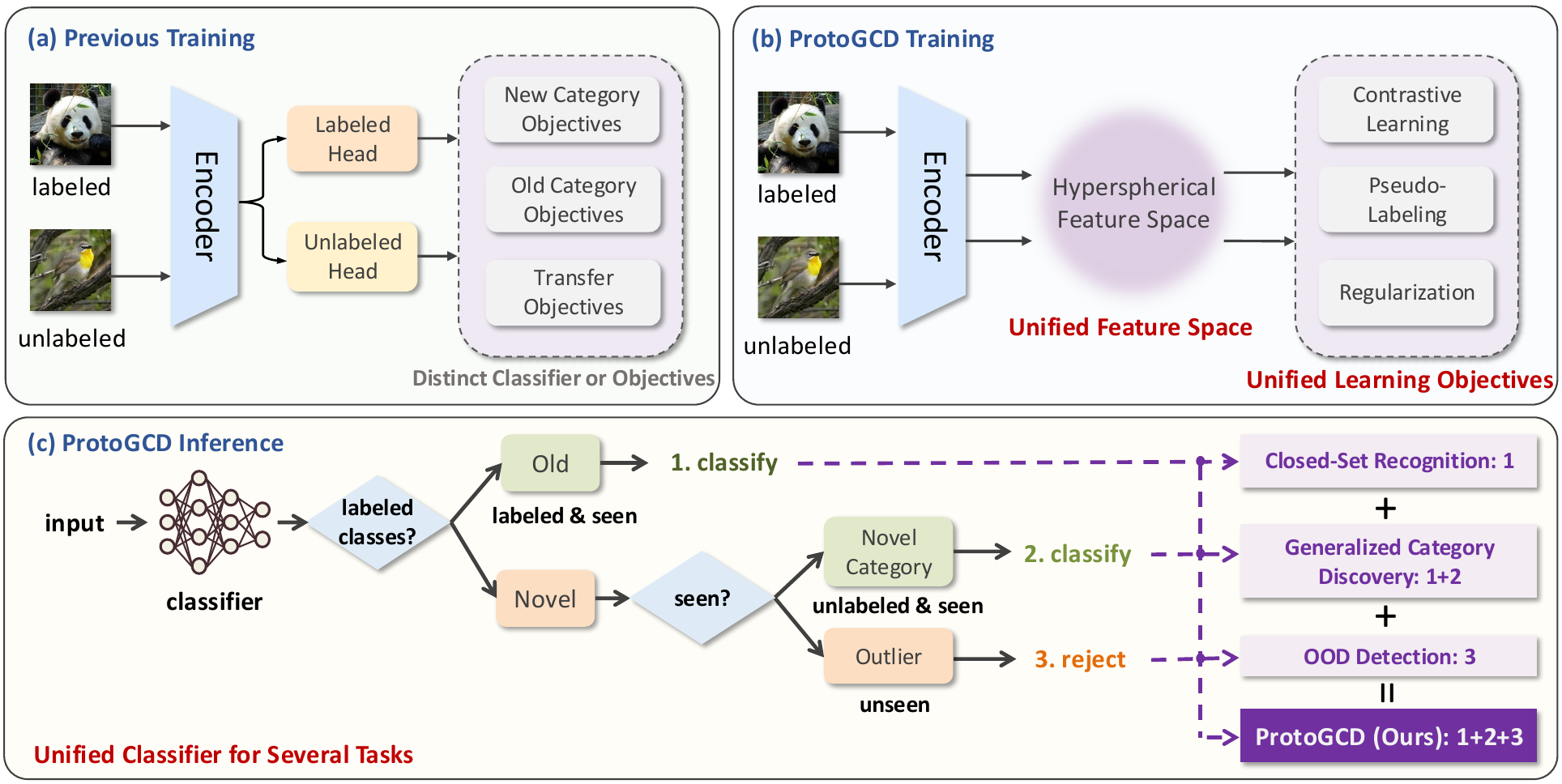}
    \vspace{-5pt}
    \caption{Unified prototype learning framework. (a) Previous GCD methods~\cite{Han2020Automatically,Fini_2021_ICCV,cao2022openworld} with parametric classifiers employ distinct classification heads or training objectives for old and new classes, while (b) ProtoGCD models old and new classes in a shared feature space with a unified set of prototypes (\ie, classifier) and adopts unified learning objectives across old and new classes. (c) During inference, ProtoGCD could classify both the old and the newly discovered classes. Moreover, it could also be extended to reject unseen outliers, which makes ProtoGCD a general-purpose open-world classifier.}
    \label{fig:unification}
\end{figure*}

\section{Related Works} \label{sec:related-work}

\subsection{Novel Category Discovery}
Novel category discovery (NCD) is initially formulated as a deep transfer clustering~\cite{Han_2019_ICCV} problem. The core spirit is to leverage the knowledge learned from labeled classes to cluster unlabeled data from novel categories. AutoNovel~\cite{9464163,Han2020Automatically} is a seminal work involving three steps. Models are firstly pre-trained via self-supervision and then fine-tuned on labeled datasets. Finally, models transfer knowledge from labeled data to unlabeled data through rank statistics~\cite{Han2020Automatically,zhao21novel}. UNO~\cite{Fini_2021_ICCV} proposed a unified objective and assigned pseudo-labels with swapped prediction~\cite{caron2020unsupervised}, while OpenMix~\cite{zhong2021openmix} and NCL~\cite{Zhong_2021_CVPR} further explored the relationship between labeled and unlabeled data.

\subsection{Generalized Category Discovery}
Vaze \etal~\cite{vaze2022gcd} relaxed the assumption in NCD that all unlabeled data comes from novel classes and formalized a more pragmatic task called generalized category discovery (GCD).
We categorize existing methods into two groups. (1) \emph{Contrastive learning-based methods with non-parametric classifiers}. Pioneering works~\cite{vaze2022gcd,fei2022xcon} employed contrastive learning followed by non-parametric semi-supervised K-means clustering~\cite{vaze2022gcd,arthur2007k}. Subsequent works explore more underlying relationships. Zhao \etal~\cite{Zhao_2023_ICCV} extended prototypical contrastive learning~\cite{li2021prototypical} to an EM-like learning framework. Pu \etal~\cite{pu2023dynamic} proposed dynamic conceptional contrastive learning, which alternates between conception estimation and conceptional representation learning. In these methods, feature representation learning is decoupled with and not optimal for subsequent clustering. (2) \emph{Pseudo-labeling-based methods with parametric-classifiers}, like adapted methods from NCD~\cite{9464163,Fini_2021_ICCV}. These methods implement separate classification heads on old and new classes, leading to imbalanced performance, and the predictions are easily biased to old classes.
More recently, some works enhance the performance of GCD by exploiting instance-wise neighbors~\cite{yang2024learning} and complementary textual modality~\cite{zheng2024textual}. While others extend GCD to more learning paradigms~\cite{ma2024active,pu2024federated} and scenarios~\cite{zheng2024prototypical,liu2024novel}.
To address the respective problems of the two types of methods, we propose to unify old and novel classes' modeling with learnable prototypes and devise a proper pseudo-labeling mechanism to circumvent \textit{confirmation bias}.
Moreover, regarding that most methods assume the number of novel categories is known \textit{a-prior}, only a few~\cite{9464163,vaze2022gcd,Zhao_2023_ICCV} tackle the estimation issue. We also propose to estimate the class number precisely to make GCD more applicable.
Here, we summarize the differences between ProtoGCD and prior works. (1) Compared with non-parametric methods like~\cite{vaze2022gcd,Zhao_2023_ICCV} with contrastive learning, ProtoGCD explicitly learns a parametric classifier with discriminative self-training. (2) Compared with the recent parametric-based SimGCD~\cite{wen2023parametric}, ProtoGCD incorporates generative modeling, and the prototypes represent class-wise distributions, so ProtoGCD could be viewed as a hybrid model, while SimGCD is a purely discriminative model. Considering the characteristics of the GCD task, we further incorporate dual-level adaptivity into pseudo-labeling and propose separation regularization.

\subsection{Out-of-Distribution Detection}
Out-of-distribution (OOD) detection~\cite{hendrycks2016baseline,ma2025towards,yang2020convolutional,chen2021adversarial,huang2022class} aims to classify samples from known classes and reject unseen samples outside of the training classes. Conventionally, each sample is assigned a score. If the score is higher than a predefined threshold, then it is recognized as in-distribution (ID) and classified into one of the known classes, or detected as OOD and rejected.
Post-hoc methods aim to devise score functions~\cite{hendrycks2016baseline,liu2020energy,pmlr-v162-hendrycks22a} to increase the separability between ID and OOD instances without training the models.
Several works resort to self-supervised learning~\cite{hendrycks2019using,tack2020csi} and logit normalization~\cite{wei2022mitigating} to train models that inherently excel at OOD rejection. Others explicitly employ auxiliary outliers~\cite{hendrycks2018deep}. OOD detection only requires rejecting OOD samples without any further clustering on them.

\begin{figure*}[!t]
    \centering
    \includegraphics[width=.95\textwidth]{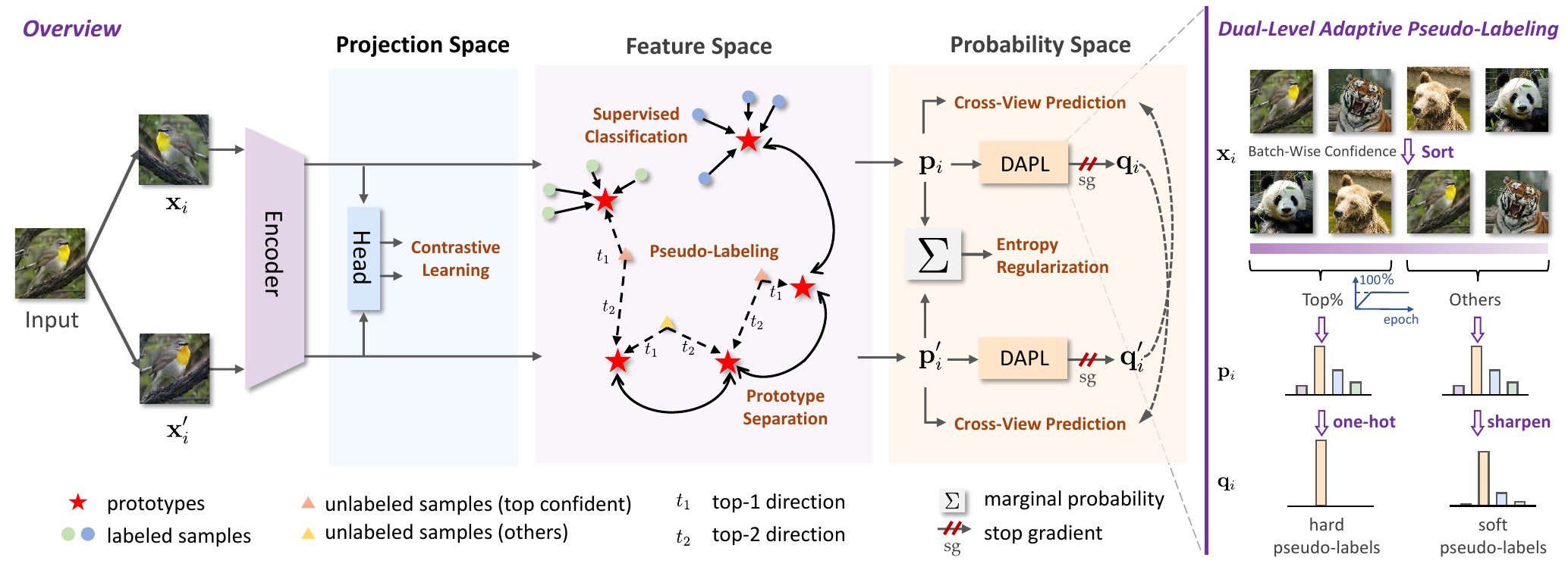}
    \vspace{-5pt}
    \caption{The proposed method ProtoGCD. Left: Overview of ProtoGCD. The blue, purple and orange backgrounds indicate the projection, feature and probability space, respectively. The yellow font represents learning objectives. Right: Dual-Level Adaptive Pseudo-Labeling (DAPL). We adaptively assign hard pseudo-labels to top $r\%$ samples by confidence while soft ones for the others, and the ratio $r\%$ adaptively ramps up (blue font). ProtoGCD could be trained end-to-end.}
    \label{fig:protogcd-pipeline}
\end{figure*}

\section{The Proposed Method: ProtoGCD} \label{sec:method-protogcd}

\emph{Motivation and Overview.}
As depicted in Fig.~\ref{fig:two-charasteristics}, we aim to address the issues of imbalanced performance and biased representations of prior methods. To maintain the balance between old and new classes, we propose to employ a unified prototypical classifier and feature space for them (Section~\ref{subsubsec:modeling-protogcd}). To acquire unbiased and suitable representations for GCD, we utilize contrastive learning for basic representations (Section~\ref{subsec:contrastive-learning}). More importantly, we propose an adaptive pseudo-labeling mechanism that dynamically adapts the types and proportions of the pseudo-labels, considering the \textit{samples' confidence} and the \textit{model's performance} (Section~\ref{subsec:pseudo-label}), which helps mitigate confirmation bias. Furthermore, two regularizations (Section~\ref{subsec:regularization}) help avoid trivial solutions and improve inter-class separation, thereby collectively refining the representations. Overall, ProtoGCD is an end-to-end training method, achieving unified learning objectives between old and new classes and aligning better with the clustering objectives of GCD (Section~\ref{subsec:overall-objective}). The overall pipeline of ProtoGCD is illustrated in Fig.~\ref{fig:protogcd-pipeline}.

\subsection{Preliminaries}
\label{subsec:preliminaries}

\subsubsection{Problem Formulation and Notation}
Formally, given a partially-labeled dataset $\mathcal{D}=\mathcal{D}_l\cup \mathcal{D}_u$, where $\mathcal{D}_l=\{(\mathbf{x}_i^l,y_i^l)\}_{i=1}^n\subset \mathcal{X}_l\times \mathcal{Y}_l$ denotes the labeled data from the old classes\footnote{In this paper, old classes and labeled classes are synonymous and both refer to the classes that appear in $\mathcal{D}_l$.}, \ie, $\mathcal{Y}_l=\mathcal{C}_{old}$, and $\mathcal{D}_u=\{(\mathbf{x}_j^u)\}_{j=1}^m\subset \mathcal{X}_u$ denotes the unlabeled data with its underlying label space $\mathcal{Y}_u$ consisting of both old classes
$\mathcal{C}_{old}$ and novel classes $\mathcal{C}_{new}$, \ie, $\mathcal{Y}_u=\mathcal{C}_{old}\cup\mathcal{C}_{new}$. The objective of GCD is to simultaneously cluster samples from $\mathcal{C}_{new}$ and classify samples from $\mathcal{C}_{old}$ in $\mathcal{D}_u$ with the prior knowledge in $\mathcal{D}_l$.
The number of old classes $K_{old}=|\mathcal{C}_{old}|$ can be obtained directly from $\mathcal{D}_l$, while the number of novel classes $K_{new}=|\mathcal{C}_{new}|$ is always known \emph{a-prior} in the literature~\cite{vaze2022gcd,fei2022xcon}. We also present an algorithm to estimate $K_{new}$ in Section~\ref{sec:estimate-number}. $K=K_{old}+K_{new}$ is the total number of classes. Let $\mathcal{E}(\cdot)$ denote the feature extractor, and $\phi(\cdot)$ is the projection head. $\mathbf{z}_i=\mathcal{E}(\mathbf{x}_i)$ is the $d$-dimensional feature representation of the $i$-th sample $\mathbf{x}_i$. $\mathbf{h}_i=\phi(\mathbf{z}_i)$ is in the $d_h$-dimensional projection space for contrastive learning.

\subsubsection{Principled Modeling of ProtoGCD} \label{subsubsec:modeling-protogcd}

\emph{Unified Feature Space and Prototypes (Classifiers).}
We adopt $\ell_2$-normalized $d$-dimensional hyperspherical feature space, which is compatible with contrastive learning~\cite{pmlr-v119-chen20j,wang2020understanding} and has less bias between classes. To realize unified modeling of old and new classes, we assign the same set of learnable prototypes $\mathcal{P}=\{\boldsymbol{\mu}_c\}_{c=1}^K$ where $K=K_{old}+K_{new}$, each class with one prototype $\boldsymbol{\mu}_c$. Both $\mathbf{z}_i$ and $\boldsymbol{\mu}_c$ are $\ell_2$-normalized in the feature space, and prototypes could be updated \textit{on-the-fly}.

\emph{Generative Modeling.}
ProtoGCD models both old and novel classes jointly in a shared $d$-dimensional hypersphere, \ie $(d-1)$-sphere, and each class-wise prototype $\boldsymbol{\mu}_c$ formalizes von Mises–Fisher (vMF) distribution~\cite{mardia2000directional} with the probability density of the $i$-th sample in the $c$-th class as follows:
\begin{equation}
\small
    p_\text{vMF}(\mathbf{z}_i;\boldsymbol{\mu}_c,\tau)=C_p(1/\tau)\exp(\boldsymbol{\mu}_c^\top \mathbf{z}_i/\tau),\ c=1,2,\cdots,K,
\end{equation}
where $\tau$ is the temperature and $\kappa=1/\tau$ is the \emph{concentration parameter}~\cite{mardia2000directional} of vMF with $C_p(\kappa)=\frac{\kappa^{p/2-1}}{(2\pi)^{p/2}I_{p/2-1}(\kappa)}$ and $I_v$ denotes the first kind of Bessel function at order $v$. The prototype $\boldsymbol{\mu}_c$ is \emph{mean direction} in vMF. Then we could draw the posterior probability of sample $\mathbf{x}_i$ belonging to class $k$:
\begin{equation}
\small
    \begin{aligned}
        p(y=k|\mathbf{z}_i,\tau) = \frac{p_\text{vMF}(\mathbf{z}_i;\boldsymbol{\mu}_k,\tau)}{\sum_{c=1}^K p_\text{vMF}(\mathbf{z}_i;\boldsymbol{\mu}_c,\tau)} = \frac{\exp(\boldsymbol{\mu}_k^\top \mathbf{z}_i/\tau)}{\sum_{c=1}^K \exp(\boldsymbol{\mu}_c^\top \mathbf{z}_i/\tau)}.
    \end{aligned}
    \label{eq:posterior-prob}
\end{equation}
In Eq.~\eqref{eq:posterior-prob}, logits are computed via cosine similarity between features and class-wise prototypes, and the posterior probability prediction vector $\mathbf{p}(\mathbf{z}_i,\tau)\in\mathbb{R}^K$:
\begin{equation}
    \mathbf{p}(\mathbf{z}_i,\tau)=\big(p(y=1|\mathbf{z}_i,\tau),\cdots,p(y=k|\mathbf{z}_K,\tau)\big).
    \label{eq:posterior-vector}
\end{equation}
The generative modeling with prototypes is more suitable to the \textit{open-world} and reduces open-space risk~\cite{9040673} as validated in~\cite{yang2020convolutional,chen2021adversarial,huang2022class}. In this paper, we generalize prototype learning to the more pragmatic setting of GCD, where we model unlabeled new classes with prototypes as well.

\subsection{Contrastive Learning} \label{subsec:contrastive-learning}
To maintain fundamental representations, we employ supervised contrastive learning~\cite{NEURIPS2020_d89a66c7} on $\mathcal{D}_l$ and unsupervised contrastive learning (\ie, self-supervised contrastive learning named SimCLR)~\cite{pmlr-v119-chen20j} on $\mathcal{D}_l\cup\mathcal{D}_u$ respectively, within the projection space, following the convention in the literature~\cite{vaze2022gcd,fei2022xcon}. Specifically, given two views (random augmentations) of the input $\mathbf{x}_i$ and $\mathbf{x}_i^\prime$ in a mini-batch $\mathcal{B}$, the unsupervised contrastive learning loss:
\begin{equation}
    \mathcal{L}_\text{con}^u=\frac{1}{|\mathcal{B}|}\sum_{i\in\mathcal{B}}-\log \frac{\exp(\mathbf{h}_i^\top \mathbf{h}_i^\prime/\tau_c)}{\sum_j \mathds{1}_{[j\neq i]}\exp(\mathbf{h}_i^\top \mathbf{h}_j/\tau_c)},
    \label{eq:loss-unsup-con}
\end{equation}
where $\mathds{1}_{[\cdot]}$ denotes the indicator function and equals to $1$ when the condition is true else $0$, $\tau_c$ denotes the temperature in contrastive learning. The supervised contrastive learning~\cite{NEURIPS2020_d89a66c7} on labeled data in $\mathcal{B}$ is:
\begin{equation}
\small
    \mathcal{L}_\text{con}^l=\frac{1}{|\mathcal{B}_l|}\sum_{i\in\mathcal{B}_l}\frac{1}{|\mathcal{N}(i)|}\sum_{q\in\mathcal{N}(i)}-\log\frac{\exp(\mathbf{h}_i^\top \mathbf{h}_q/\tau_c)}{\sum_{j}\mathds{1}_{[j\neq i]}\exp(\mathbf{h}_i^\top \mathbf{h}_j/\tau_c)},
    \label{eq:loss-sup-con}
\end{equation}
where $\mathcal{B}_l$ denotes labeled subset of $\mathcal{B}$ and $\mathcal{N}(i)$ denotes positive samples with the same label as $\mathbf{x}_i$. Then, we combine them and draw the overall contrastive learning objective:
\begin{equation}
    \mathcal{L}_\text{con}=(1-\lambda_\text{sup})\mathcal{L}_\text{con}^u+\lambda_\text{sup}\mathcal{L}_\text{con}^l,
    \label{eq:loss-con}
\end{equation}
where $\lambda_\text{sup}$ is the weight of supervised component.

\subsection{Dual-Level Adaptive Pseudo-Labeling} \label{subsec:pseudo-label}
GCD is a semi-supervised setting but subject to more stringent labeling conditions, \ie, unlabeled data contain new classes. Parametric classifiers are supposed to consider both old and novel classes when assigning pseudo-labels. As a result, they are more susceptible to \textit{confirmation bias}. As for pseudo-labels, learning solely on hard pseudo-labels~\cite{pseudoLabel2019,lee2013pseudo}, \ie, one-hot targets, is prone to bias accumulation due to overconfidence in incorrect information, particularly during early training stages when the classifier is less performant. Conversely, relying entirely on soft pseudo-labels~\cite{xie2016unsupervised,nigam2000analyzing,caron2021emerging}, which are less confident, could hinder model training due to less informative targets. Therefore, it is essential to consider both types of pseudo-labels simultaneously.

Thus, we pose a question: \textit{What constitutes suitable pseudo-labels for GCD?} Based on the discussions above, the crux of this question is to choose the suitable type of pseudo-labels, \ie, soft or one-hot, for each sample and determine the ratio of the two types.
We provide two aspects to determine the pseudo-labels: (1) The confidence of samples. Samples' confidence varies based on their distribution in the feature space. Those close to the decision boundary exhibit lower confidence, and assigning overly confident hard pseudo-labels to these samples could introduce bias. (2) The model's capabilities. During early training phases, the model has relatively weak classification performance and is prone to confirmation bias, which is not readily corrected by the model itself. As training progresses, the model becomes stronger, facilitating the generation of high-quality pseudo-labels. Considering the above aspects, we propose a dual-level adaptive pseudo-labeling (DAPL) mechanism. The primary philosophy is to adaptively assign pseudo-labels to unlabeled data based on the \emph{samples' confidence} across different training samples and \emph{model's capability} across various training phases. In this way, ProtoGCD is capable of training models efficiently while preventing potential bias.

\subsubsection{Level-1: Adaptivity across Training Samples} \label{subsubsec:level-1}
We propose to assign pseudo-labels flexibly based on the confidence of samples, which could help mitigate bias from overconfident pseudo-labels while preventing slow training from overly ambiguous ones. Let $\boldsymbol{\mu}_{t_1(i)},\boldsymbol{\mu}_{t_2(i)}$ denote the top-1 and top-2 prototypes of sample $\mathbf{z}_i$, having the maximum and second maximum cosine similarities with $\mathbf{z}_i$ respectively:
\begin{equation}
    t_1(i) = \argmax_{c=1,2,\cdots,K}\boldsymbol{\mu}_c^\top \mathbf{z}_i,\qquad t_2(i) = \argmax_{\substack{c=1,2,\cdots,K \\ c\neq t_1(i)}}\boldsymbol{\mu}_c^\top \mathbf{z}_i,
\end{equation}
where $K=K_{old}+K_{new}$. Here, we define confidence, namely \emph{prototype confidence}, in Definition~\ref{def:confidence}.
\begin{definition}[Prototype Confidence of Each Sample]
\label{def:confidence}
    The prototype confidence of sample $\mathbf{z}_i$ is defined as the ratio of the exponential of the cosine similarity between $\mathbf{z}_i$ and the top-1 prototype and the one with the top-2 prototype:
    \begin{equation}
        \text{proto\_conf}(\mathbf{z}_i) = \exp(\boldsymbol{\mu}_{t_1(i)}^\top\mathbf{z}_i/\tau)/\exp(\boldsymbol{\mu}_{t_2(i)}^\top\mathbf{z}_i/\tau).
        \label{eq:def-conf}
    \end{equation}
\end{definition}
\noindent Intuitively, Eq.~\eqref{eq:def-conf} indicates that the closer $\mathbf{z}_i$ to the top-1 prototype $\boldsymbol{\mu}_{t_1(i)}$ compared with the top-2 prototype $\boldsymbol{\mu}_{t_2(i)}$, the higher the confidence of $\mathbf{z}_i$. \textit{Prototype confidence} only involves the two most similar prototypes, which is relatively more robust and stable with less noise than using all the prototypes for confidence estimation, \eg, maximum softmax probability~\cite{hendrycks2016baseline,guo2017calibration} (MSP), \ie, $\max_{k}p(y=k|\mathbf{z}_i,\tau)$, regarding that a large number of unlabeled samples could bring potential bias, especially in early training stages. Moreover, the range of \textit{prototype confidence} is broader than MSP, enhancing the distinctiveness among samples.

\emph{Assign Hard or Soft Pseudo-Labels Based on Confidence.}
The pseudo-label of each sample $\mathbf{q}(\mathbf{z}_i)$ is determined by its confidence. If a sample has high confidence, it might be far from the decision boundary and more likely belongs to class $\argmax_k p(y=k|\mathbf{z}_i,\tau)$, and we assign a one-hot pseudo-label, which accelerates training with more informative targets. If the confidence is low, hard pseudo-labels could easily bring erroneous information, so we choose soft labels instead. Concretely, hard or one-hot pseudo-labels are employed when confidence is above a certain threshold $\delta$, otherwise soft labels. $\mathbf{p}(\mathbf{z}_i,\tau_\text{base})$ is the predictive vector in Eq.~\eqref{eq:posterior-vector}, then the adaptive pseudo-label of the $i$-th sample is:
\begin{equation}
\small
    \mathbf{q}(\mathbf{z}_i)\in\mathbb{R}^K=\left\{
    \begin{aligned}
        & \text{one\_hot}\Big(\mathbf{p}(\mathbf{z}_i,\tau_\text{base})\Big),&\text{if}\ \emph{proto\_conf}(\mathbf{z}_i)\geq \delta, \\
        & \mathbf{p}(\mathbf{z}_i,\tau_\text{sharp}),&\text{if}\ \emph{proto\_conf}(\mathbf{z}_i)<\delta.
    \end{aligned}
    \right.
    \label{eq:adaptive-pl}
\end{equation}
Here, $\delta>1$, $\text{one\_hot}(\cdot)$ denotes the one-hot operation, where the output is one at the index of the maximum input value, and zero for other indices. $\tau_\text{base}$ and $\tau_\text{sharp}$ are temperature in the original prediction and pseudo-labels. Temperature controls the \emph{hardness/certainty} of the pseudo-labels, and lower $\tau$ indicates more certain pseudo-labels. In Eq.~\eqref{eq:adaptive-pl}, $\tau_\text{base}>\tau_\text{sharp}$, \ie, for samples with confidence less than $\delta$, we still assign sharpened pseudo-labels $\mathbf{p}(\mathbf{z}_i,\tau_\text{sharp})$ than the original prediction, which encourages the model to gradually make more certain predictions, this sharpening mechanism is of vital importance to steadily enhance the model, which is validated in Section~\ref{subsec:ablation}. The hard pseudo-label could be viewed as a special case of the soft one with $\tau\to 0$.

\subsubsection{Level-2: Adaptivity across Training Phases} \label{subsubsec:level-2}
From an orthogonal perspective, the model's capabilities vary during training. Initially, models are weak and tend to produce biased pseudo-labels, so more soft pseudo-labels are suggested. As training progresses, the model gradually learns to distinguish between different categories. As a consequence, we would place greater trust in its predictions and reduce the threshold $\delta$ in Eq.~\eqref{eq:adaptive-pl}. However, directly determining the threshold is non-trivial. Here, we propose a more reasonable approach, in which we set the proportion of unlabeled samples to which we assign hard labels. At epoch $e$, we present one-hot pseudo-labels to the top $r\%$ unlabeled samples with the highest confidence, while soft labels for the left. And $\delta$ could be implicitly expressed by the $\lfloor|\mathcal{D}_u|\times r/100\rfloor$-th highest confidence of all unlabeled samples. For the proportion of samples assigned with hard pseudo-labels, we adopt a linear ramp-up function:
\begin{equation}
    r(e)=\left\{
    \begin{aligned}
        & \frac{e}{e_\text{ramp}}\times 100\%,\quad & \text{if}\ 0\leq e\leq e_\text{ramp}, \\
        & 100\%,\quad & \text{if}\ e>e_\text{ramp},
    \end{aligned}
    \right.
    \label{eq:ratio-ramp-up}
\end{equation}
where $r(e)\in [0\%,100\%]$ is a function of training epochs. In practice, there is no need to explicitly compute $\delta$. At epoch $e$, we could select the top $r(e)\%$ of samples with the highest confidence and assign one-hot labels, while softly sharpened labels for the remaining $\big(100-r(e)\big)\%$ of samples. As in Eq.~\eqref{eq:ratio-ramp-up}, the ratio of hard pseudo-labels $r(e)$ grows linearly from the beginning to the $e_\text{ramp}$-th epoch, then all are hard pseudo-labels in later epochs.

\subsubsection{Cross-view Prediction with Pseudo-Labels} \label{subsubsec:cross-view}
We perform pseudo-labeling on all the training data, and propose to learn with cross-view prediction~\cite{caron2020unsupervised} as follows:
\begin{equation}
    \mathcal{L}_\text{dapl}=\frac{1}{2|\mathcal{B}|}\sum_{i\in\mathcal{B}}\Big(\ell(\mathbf{q}^\prime_i,\mathbf{p}_i)+\ell(\mathbf{q}_i,\mathbf{p}^\prime_i)\Big),
    \label{eq:loss-dapl}
\end{equation}
where $\ell(\mathbf{q}^\prime,\mathbf{p})=-\sum_{k}\mathbf{q}^{\prime(k)}\log\mathbf{p}^{(k)}$ denotes cross-entropy and we simplify $\mathbf{q}(\mathbf{z}_i)$ and $\mathbf{p}(\mathbf{z}_i,\tau_\text{base})$ as $\mathbf{q}_i$ and $\mathbf{p}_i$ respectively. The superscript indicates the $k$-th entry. In Eq.~\eqref{eq:loss-dapl}, two views provide pseudo-labels for each other, like swapped prediction~\cite{caron2020unsupervised}, which implicitly implements consistency regularization~\cite{bachman2014learning}.

\subsubsection{Theoretical Analysis}
We provide the theoretical analysis of the DAPL mechanism. GCD could be viewed as open-world semi-supervised learning (SSL)~\cite{cao2022openworld}, where unlabeled data contains new classes, so it also conforms to the basic assumptions of SSL.
\begin{assumption}[Cluster Assumption~\cite{chapelle2009semi}] \label{assumption:cluster}
    Samples in the same cluster (high-density region) are expected to have the same label.
\end{assumption}

\begin{proposition}[Entropy Minimization~\cite{grandvalet2004semi} in SSL]
\label{proposition:entropy-minimization}
    Under Assumption~\ref{assumption:cluster}, entropy minimization on unlabeled data helps ensure that classes are well-separated.
\end{proposition}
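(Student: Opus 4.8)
The plan is to formalize the classical argument of Grandvalet and Bengio in the notation of our prototypical classifier. First I would make precise what ``entropy minimization helps ensure classes are well-separated'' means operationally: the quantity being minimized (implicitly, through confident/sharpened pseudo-labels in $\mathcal{L}_\text{dapl}$) is the conditional Shannon entropy $H(y\mid \mathbf{z}_i) = -\sum_{k=1}^K p(y=k\mid\mathbf{z}_i,\tau)\log p(y=k\mid\mathbf{z}_i,\tau)$ averaged over the unlabeled data $\mathcal{D}_u$. I would then observe that this per-sample entropy is zero if and only if $\mathbf{p}(\mathbf{z}_i,\tau)$ is a vertex of the probability simplex, i.e.\ the sample is assigned unambiguously to a single prototype, and it is maximal ($\log K$) exactly when $\mathbf{z}_i$ is equidistant (in cosine similarity) from all prototypes --- which, by Eq.~\eqref{eq:posterior-prob}, is precisely the locus of maximal overlap between the vMF components. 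So driving the average entropy down forces most unlabeled features away from the inter-prototype decision boundaries.

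The key steps, in order: (i) Recall the posterior form Eq.~\eqref{eq:posterior-prob}; note that the decision boundary between prototypes $\boldsymbol{\mu}_j$ and $\boldsymbol{\mu}_k$ is the set $\{\mathbf{z}: \boldsymbol{\mu}_j^\top\mathbf{z} = \boldsymbol{\mu}_k^\top\mathbf{z}\}$, a great subsphere, and that $H(y\mid\mathbf{z})$ is continuous and strictly positive on a neighborhood of this set. (ii) Invoke Assumption~\ref{assumption:cluster}: the marginal feature density $p(\mathbf{z})$ concentrates on $K$ high-density clusters separated by low-density regions, and the true labeling is constant on each cluster. (iii) Argue that a classifier minimizing expected entropy $\mathbb{E}_{\mathbf{z}\sim\mathcal{D}_u}[H(y\mid\mathbf{z})]$ subject to fitting the labeled anchors $\mathcal{D}_l$ (which pin the prototypes near the labeled clusters) must place its decision boundaries in the low-density regions between clusters: any boundary cutting through a high-density cluster would incur entropy cost proportional to the mass of the cluster lying near it, whereas routing the boundary through the low-density gap incurs negligible cost. (iv) Conclude that at the minimizer the predicted label is (almost everywhere) constant on each cluster, i.e.\ classes are well-separated in the sense that high-density regions are mapped to simplex vertices and the ambiguous region has small measure. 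A short formal version can be phrased as: if $\mathbb{E}[H(y\mid\mathbf{z})]\le\epsilon$ then by Markov's inequality the fraction of unlabeled samples with $H(y\mid\mathbf{z})>\sqrt{\epsilon}$ is at most $\sqrt{\epsilon}$, and on the complement the prediction is within $\sqrt{\epsilon}$-entropy of a vertex; combined with the cluster assumption this yields a consistent near-hard labeling.

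I would also connect this back to DAPL concretely: the soft-but-sharpened branch ($\tau_\text{sharp}<\tau_\text{base}$) of Eq.~\eqref{eq:adaptive-pl} is exactly a surrogate for entropy minimization --- training $\mathbf{p}_i$ toward $\mathbf{p}(\mathbf{z}_i,\tau_\text{sharp})$ pushes the prediction toward lower entropy --- while the hard branch is the $\tau\to 0$ limit that sets entropy to zero outright; the ramp-up Eq.~\eqref{eq:ratio-ramp-up} controls how aggressively this entropy reduction is applied. So Proposition~\ref{proposition:entropy-minimization} both justifies the sharpening in DAPL and explains why it is safe only once the model (hence the prototypes, hence the induced clusters) is reliable enough for the cluster assumption to bite.

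The main obstacle I expect is step (iii): making rigorous the claim that the entropy-minimizing boundary avoids high-density regions requires either a quantitative separation assumption (a margin $\gamma$ between clusters and a lower bound on the density within clusters) or an appeal to the informal reasoning of Grandvalet--Bengio. Since the proposition is stated at the level of the standard SSL literature (it cites \cite{grandvalet2004semi} and \cite{chapelle2009semi}), I would keep the argument at that level --- a clean restatement plus the Markov-inequality observation --- rather than attempting a fully quantitative measure-theoretic proof, which would need assumptions we have not introduced.
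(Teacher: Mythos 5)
Your proposal is correct and takes essentially the same route as the paper, which offers no formal proof of this proposition beyond the one-sentence Grandvalet--Bengio remark that entropy minimization pushes unlabeled data away from decision boundaries into high-density regions; your sketch is a faithful elaboration of exactly that argument, and your own caveat about step (iii) correctly identifies that the paper, too, leaves the density-avoidance claim at the informal, cited level rather than proving it.
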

\noindent Entropy minimization~\cite{grandvalet2004semi} could help push unlabeled data to high-density areas away from boundaries, which decreases class overlap and improves inter-class separation.

\begin{proposition}[Pseudo-labeling in SSL]
    Pseudo-labeling~\cite{lee2013pseudo} implicitly performs entropy minimization on unlabeled data.
\end{proposition}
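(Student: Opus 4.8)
The plan is to make precise the folklore identification, in the hard-label SSL setting, between the pseudo-labeling loss and a sharpened form of conditional entropy, and then argue that minimizing the former upper-bounds (or decreases) the latter. I would set up notation first: for an unlabeled sample $\mathbf{z}$, write $\mathbf{p} = \mathbf{p}(\mathbf{z},\tau_\text{base}) \in \mathbb{R}^K$ for the model's predictive distribution (Eq.~\eqref{eq:posterior-vector}), and $\hat{k}(\mathbf{z}) = \argmax_k p^{(k)}$ for the pseudo-label index. The pseudo-labeling loss on this sample is $\ell_\text{pl}(\mathbf{z}) = -\log p^{(\hat{k}(\mathbf{z}))} = -\log \max_k p^{(k)}$, whereas the conditional (Shannon) entropy of the prediction is $H(\mathbf{p}) = -\sum_k p^{(k)}\log p^{(k)}$. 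The first key step is the elementary two-sided bound $-\log\max_k p^{(k)} \le H(\mathbf{p}) \le -\log\max_k p^{(k)} + \log(K-1)$ — equivalently $H(\mathbf{p}) \le \ell_\text{pl}(\mathbf{z}) + \log(K-1)$ and $\ell_\text{pl}(\mathbf{z}) \le H(\mathbf{p})$. Both inequalities are one-line consequences of $\sum_k p^{(k)} = 1$ and concavity of $-\log$, but the one I actually want is $\ell_\text{pl}\le H$, so that driving $\ell_\text{pl}$ small does not by itself force $H$ small; hence I will instead route the argument through the gradient, which is the more honest and more commonly intended reading.

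So the second, and main, step is the gradient comparison. Fix the pseudo-label $\hat{k}$ (stop-gradient on the target, as in Eq.~\eqref{eq:loss-dapl}). Parametrizing $\mathbf{p}$ by logits $\boldsymbol{\ell} = (\ell_1,\dots,\ell_K)$ with $p^{(k)} = \mathrm{softmax}(\boldsymbol{\ell})^{(k)}$, one has $\partial \ell_\text{pl}/\partial \ell_j = p^{(j)} - \mathds{1}_{[j=\hat{k}]}$, the standard cross-entropy-with-one-hot gradient. Meanwhile $\partial H(\mathbf{p})/\partial \ell_j = -\sum_k p^{(k)}(\log p^{(k)} + 1)(\mathds{1}_{[k=j]} - p^{(j)}) = -p^{(j)}\big(\log p^{(j)} - \sum_k p^{(k)}\log p^{(k)}\big)$. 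The claim to verify is that when $\hat{k} = \argmax_k p^{(k)}$, a gradient step on $\ell_\text{pl}$ and a gradient step on $H$ push the logit vector in directions with nonnegative inner product — i.e. both increase $\ell_{\hat{k}}$ relative to the others and thereby both decrease $H$. Concretely I would show $\langle -\nabla_{\boldsymbol{\ell}} \ell_\text{pl},\, \nabla_{\boldsymbol{\ell}} H\rangle \le 0$ is the wrong sign to want; rather I want $\langle -\nabla_{\boldsymbol{\ell}} \ell_\text{pl},\, -\nabla_{\boldsymbol{\ell}} H\rangle \ge 0$, which reduces after substitution to checking $\sum_j (\mathds{1}_{[j=\hat{k}]} - p^{(j)})\, p^{(j)}\big(\log p^{(j)} - \bar{L}\big) \ge 0$ where $\bar{L} = \sum_k p^{(k)}\log p^{(k)}$; since $\log p^{(\hat{k})}$ is the largest log-probability and $\mathds{1}_{[j=\hat{k})} - p^{(j)}$ is positive exactly at $j=\hat{k}$, a rearrangement/Chebyshev-sum-inequality argument on the correlation of the two vectors $(\mathds{1}_{[j=\hat{k}]}-p^{(j)})$ and $(\log p^{(j)})$ gives the sign. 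I would also note the temperature/sharpening refinement: using $\tau_\text{sharp}<\tau_\text{base}$ for the soft target (Eq.~\eqref{eq:adaptive-pl}) replaces $\mathbf{p}$ by a lower-entropy $\mathbf{p}(\mathbf{z},\tau_\text{sharp})$, so the soft branch of DAPL is literally training the model to match a reduced-entropy version of its own prediction, which is a soft relaxation of the same entropy-minimization effect; the hard branch is the $\tau\to 0$ limit as already remarked after Eq.~\eqref{eq:adaptive-pl}.

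Finally I would aggregate over the unlabeled set and connect back to Proposition~\ref{proposition:entropy-minimization}: minimizing $\tfrac{1}{|\mathcal{D}_u|}\sum_i \ell_\text{pl}(\mathbf{z}_i)$ (the hard-label special case of $\mathcal{L}_\text{dapl}$) performs, step for step, descent that also decreases the average conditional entropy $\tfrac{1}{|\mathcal{D}_u|}\sum_i H(\mathbf{p}(\mathbf{z}_i,\tau_\text{base}))$ on $\mathcal{D}_u$, and then invoke Proposition~\ref{proposition:entropy-minimization} (valid under the Cluster Assumption~\ref{assumption:cluster}) to conclude that this in turn encourages well-separated classes. The main obstacle I anticipate is the inner-product sign computation in the second step: the naive hope "$\ell_\text{pl}$ small $\Rightarrow$ $H$ small" is false pointwise (a near-uniform $\mathbf{p}$ can have $\ell_\text{pl}$ moderately small yet $H$ near maximal), so the proposition genuinely needs the gradient/dynamical reading rather than a static inequality, and making the Chebyshev-type correlation argument fully rigorous — including the edge cases where the top-1 probability is not unique or where several probabilities tie — is where the care is required. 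Everything else is bookkeeping with softmax derivatives and an appeal to the already-stated Proposition~\ref{proposition:entropy-minimization}.
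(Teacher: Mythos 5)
The paper does not actually prove this proposition formally: its entire justification is the one-sentence observation that the pseudo-label target (one-hot, or the sharpened $\mathbf{p}(\mathbf{z},\tau_\text{sharp})$ with $\tau_\text{sharp}<\tau_\text{base}$) always has strictly lower entropy than the model's current prediction, so minimizing the cross-entropy to that target pulls the prediction toward a lower-entropy point. Your gradient-alignment route is a genuinely different and more rigorous reading, and its central claim is in fact correct: writing $\bar{L}=\sum_k p^{(k)}\log p^{(k)}$ and $g(t)=t(\log t-\bar{L})$, the inner product you need reduces to $g(p^{(\hat{k})})-\sum_j p^{(j)}g(p^{(j)})$, and since $g$ is convex, nonpositive on $(0,e^{\bar{L}}]$, increasing on $[e^{\bar{L}-1},1]$, and $p^{(\hat{k})}\geq e^{\bar{L}}$, one gets $g(p^{(j)})\leq g(p^{(\hat{k})})$ for every $j$, so the $p$-weighted average is dominated by $g(p^{(\hat{k})})\geq 0$ and the inner product is nonnegative. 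Note that a plain Chebyshev/rearrangement argument does not quite apply because $g(p^{(j)})$ is not monotone in $p^{(j)}$; the weighted-average-versus-maximum argument above is the clean fix. Three caveats. First, your static upper bound $H(\mathbf{p})\leq -\log\max_k p^{(k)}+\log(K-1)$ is false (take $K=2$, $\mathbf{p}=(0.6,0.4)$); the correct Fano-type bound $H(\mathbf{p})\leq H_b(p_{\max})+(1-p_{\max})\log(K-1)$ with $p_{\max}=\exp(-\ell_\text{pl})$ actually \emph{does} tend to $0$ as $\ell_\text{pl}\to 0$, so the static route you abandon is viable after all and arguably simpler. Second, per-sample alignment of negative gradients in logit space does not automatically survive the pullback to shared parameters ($\langle J^\top u,J^\top v\rangle=u^\top JJ^\top v$ need not be positive when $u^\top v>0$), so the ``step for step'' aggregation claim should be stated in prediction space, which is all the proposition needs. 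Third, the paper's informal argument covers hard and sharpened-soft targets uniformly in one stroke, whereas your main computation handles only the hard branch and relegates the soft branch to a remark; since DAPL assigns soft labels to most samples early in training, that branch deserves equal weight. With those repairs your proof is sound and strictly more informative than the paper's remark.
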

\noindent Generally, learning with pseudo-labels $\hat y$ on unlabeled data $\mathbf{x}_u$ could be expressed as minimizing the cross-entropy between pseudo-labels and the model predictions, \ie, $\mathcal{L}(f(\mathbf{x}_u),\hat y)$. Regardless of whether the pseudo-labels are hard~\cite{lee2013pseudo} or soft~\cite{berthelot2019mixmatch}, they are invariably more confident with lower entropy than the model's predictions, consequently, pseudo-labeling encourages the model to predict more confidently and minimize the entropy on unlabeled data.

Let $R(f)=\mathbb{E}_{(x,y)}\mathcal{L}(f(\mathbf{x}),y)$ denote the true risk of the classification model $f$. The empirical risk could be decomposed as $\widehat R(f)=\widehat R_l(f)+\widehat R_u(f)$, where $\widehat R_l(f)=\frac{1}{n}\sum_{i=1}^n\mathcal{L}(f(\mathbf{x}_i),y_i)$ and $\widehat R_u(f)=\frac{1}{m}\sum_{j=1}^m\mathcal{L}(f(\mathbf{x}_j),\hat y_i)$ are empirical risk on labeled and unlabeled data. The error of hard pseudo-labels $\hat y_j=\argmax_c f(\mathbf{x}_j)[c]$ with threshold $\tau_\text{pl}$ could be written as $\text{err}_\text{pl}=\frac{1}{m}\sum_{j=1}^m \mathds{1}_{[f(\mathbf{x}_j)[\hat y_j]\geq \tau_\text{pl}]}\cdot \mathds{1}_{[\hat y_j\neq y_j]}$. Then we have the theorem~\cite{xie2023classdistributionaware} below:
\begin{theorem}[Performance Gap of Pseudo-labeling Methods~\cite{xie2023classdistributionaware}]
\label{theorem:pseudo-labeling}
    Suppose the loss function $\ell(\cdot)$ is $L_\ell$-Lipschitz continuous and bounded by $B$. For some $\epsilon>0$, if $\text{err}_\text{pl}\leq\epsilon$, and for any $\delta>0$, with probability at least $1-\delta$, we have:
    \begin{equation}
    \small
            R(\hat f)-R(f^\star) \leq 2KB\epsilon+4KL_{\ell}\mathcal{R}_N(\mathcal{F})+2KB\sqrt{\frac{\log(2/\delta)}{2N}},
    \end{equation}
    where $\mathcal{R}_N(\mathcal{F})$ is the expected Rademacher complexity~\cite{mohri2018foundations} and $N=m+n$ denotes the total number of training samples, $f^\star=\argmin_{f\in\mathcal{F}}R(f)$ and $\hat f=\argmin_{f\in\mathcal{F}}\widehat R(f)$ denote the minimizer of true risk $R(f)$ and empirical risk $\widehat R(f)$, respectively. 
\end{theorem}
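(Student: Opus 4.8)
The plan is to use the standard empirical-risk-minimization template for learning with noisy (here, pseudo-) labels: decompose the excess risk, cancel the term that vanishes by optimality of $\hat f$, control the surviving generalization gaps by uniform convergence over $\mathcal{F}$, and separately pay for the discrepancy between the empirical objective built from pseudo-labels and the one built from the true labels.

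First I would write
\[
R(\hat f) - R(f^\star) = \big(R(\hat f) - \widehat R(\hat f)\big) + \big(\widehat R(\hat f) - \widehat R(f^\star)\big) + \big(\widehat R(f^\star) - R(f^\star)\big),
\]
and observe that the middle bracket is $\le 0$ because $\hat f\in\argmin_{f\in\mathcal{F}}\widehat R(f)$, so it suffices to bound $\sup_{f\in\mathcal{F}}|R(f)-\widehat R(f)|$ and multiply by two. I would then split
\[
\sup_{f\in\mathcal{F}}\big|R(f)-\widehat R(f)\big| \le \sup_{f\in\mathcal{F}}\big|R(f)-\widetilde R(f)\big| + \sup_{f\in\mathcal{F}}\big|\widetilde R(f)-\widehat R(f)\big|,
\]
where $\widetilde R$ is the empirical risk obtained from $\widehat R$ by replacing each pseudo-label $\hat y_j$ with the (unobserved) true label $y_j$. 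The second term is the pseudo-label corruption: since $\ell$ is bounded by $B$, a mislabeled unlabeled point changes its loss contribution by $O(B)$, a correctly labeled point changes nothing, and by hypothesis the fraction of threshold-passing yet misclassified points is $\text{err}_\text{pl}\le\epsilon$; tracking the $K$-dimensional structure of the classifier output, this term is $O(KB\epsilon)$.

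Next I would bound the first term with classical tools. On the i.i.d.\ pooled sample of size $N=m+n$, McDiarmid's bounded-differences inequality (each sample perturbs $\widetilde R$ by $O(B/N)$) together with symmetrization gives $\sup_{f}|R(f)-\widetilde R(f)| \le 2\,\mathcal{R}_N(\ell\circ\mathcal{F}) + KB\sqrt{\log(2/\delta)/(2N)}$ with probability at least $1-\delta$, the $2/\delta$ accounting for both signs. A vector-valued Ledoux--Talagrand contraction then passes from the loss class to $\mathcal{F}$: because $\ell$ is $L_\ell$-Lipschitz in its $K$-dimensional first argument, $\mathcal{R}_N(\ell\circ\mathcal{F})\le K L_\ell\,\mathcal{R}_N(\mathcal{F})$, which is where the $K$ multiplying the Rademacher term comes from. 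Collecting everything and doubling for the two gaps in the decomposition yields $R(\hat f)-R(f^\star)\le 2KB\epsilon + 4KL_\ell\,\mathcal{R}_N(\mathcal{F}) + 2KB\sqrt{\log(2/\delta)/(2N)}$; equivalently, one invokes the general result of~\cite{xie2023classdistributionaware} instantiated with our $\ell$, $\mathcal{F}$, and $\epsilon$.

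The step I expect to be the main obstacle is making the corruption term rigorous. One must respect that $\text{err}_\text{pl}$ counts only unlabeled points passing the confidence threshold $\tau_\text{pl}$, so the argument has to restrict $\widehat R_u$ to those points (or treat the remainder separately); and if the pseudo-labels are produced by the very model being optimized, the independence needed for symmetrization breaks down — the clean route is to regard the pseudo-labeling map as a fixed teacher, absorb its quality into $\epsilon$, and only then apply uniform convergence. The remaining delicate point is the vector contraction and the bookkeeping of the $K$ factors, which is otherwise routine.
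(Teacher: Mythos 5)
The paper does not actually prove this theorem: it is imported verbatim from the cited reference~\cite{xie2023classdistributionaware}, and the appendix only supplies a proof for Theorem~2, so there is no in-paper argument to compare against. Your reconstruction --- the ERM decomposition with cancellation of the middle term by optimality of $\hat f$, a bounded-loss corruption term of order $KB\epsilon$ for the pseudo-label error, and uniform convergence via symmetrization, McDiarmid, and vector-valued contraction to produce the $4KL_\ell\mathcal{R}_N(\mathcal{F})$ and $2KB\sqrt{\log(2/\delta)/(2N)}$ terms --- is the standard route to exactly this bound and is consistent with how the source establishes it; the two obstacles you flag (restricting the unlabeled empirical risk to threshold-passing points so that $\text{err}_\text{pl}$ is the right quantity, and treating the pseudo-labeler as a fixed teacher to preserve the independence needed for symmetrization) are precisely the points such a proof must handle, so your outline is correct as far as it goes.
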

\noindent From Theorem~\ref{theorem:pseudo-labeling}, the performance of $\hat f$ depends on the error of pseudo-labels and the number of training samples. Lower $\text{err}_\text{pl}$ leads to better generalization performance.

To build a strong classifier, we have to balance between Proposition~\ref{proposition:entropy-minimization} and Theorem~\ref{theorem:pseudo-labeling}. On the one hand, we are supposed to encourage the model to output confident predictions. On the other hand, we should still avoid overconfidence in pseudo-labels, which could bring about severe errors and \textit{confirmation bias}~\cite{pseudoLabel2019}, and it is more obvious in GCD owing to its stricter labeling conditions. In ProtoGCD, we propose DAPL to balance them. Specifically, we progressively provide the model with more confident pseudo-labels as the model's performance improves. To realize this objective, We achieve adaptivity on two levels: (1) We assign hard labels for more confident samples while soft labels for others. (2) The ratio of samples for hard labels increases gradually. In this way, DAPL helps achieve efficient training while circumventing bias.

\begin{assumption}[Consistency Regularization~\cite{bachman2014learning}]   
    The model's predictions remain consistent over some slight perturbations.
\end{assumption}
\noindent ProtoGCD adopts cross-view prediction (Eq.~\eqref{eq:loss-dapl}), which ensures consistency across various augmentations and enhances the model's robustness and generalization ability.

\subsection{Regularization} \label{subsec:regularization}

\subsubsection{Avoiding Trivial Solutions} \label{subsubsec:regularization-trivial}
GCD is essentially a transfer clustering task~\cite{Han_2019_ICCV}, which is susceptible to trivial solutions~\cite{caron2020unsupervised,caron2018deep} where most of the samples in $\mathcal{D}_u$ are allocated to one or a small number of clusters. Early works employ equipartition constraints~\cite{caron2020unsupervised}, which do not always hold for long-tailed data, and others resort to heuristics~\cite{caron2018deep}. In ProtoGCD, we adopt marginal entropy maximization~\cite{hu2017learning} as follows:
\begin{equation}
    \mathcal{L}_\text{entropy}=-H(\overline{\mathbf{p}})=\sum_{k=1}^K \overline{\mathbf{p}}^{(k)}\log \overline{\mathbf{p}}^{(k)},
    \label{eq:loss-entropy}
\end{equation}
where $\mathbf{H}(\mathbf{\mathbf{p}})=-\sum_{k}\mathbf{p}^{(k)}\log \mathbf{p^{(k)}}$ denotes entropy, and $\overline{\mathbf{p}}=\frac{1}{2|\mathcal{B}|}\sum_{i\in\mathcal{B}}\Big(\mathbf{p}(\mathbf{z}_i,\tau_\text{base})+\mathbf{p}(\mathbf{z}^\prime_i,\tau_\text{base})\Big)$ denotes marginal probability distribution over two views. $\mathcal{L}_\text{entropy}$ encourages to predict across different categories as evenly as possible as a whole. We also provide an orthogonal perspective of Eq.~\eqref{eq:loss-entropy} in Theorem~\ref{theorem:entropy}. The proof is in the Appendix.

\begin{theorem}
\label{theorem:entropy}
    Marginal entropy maximization $\mathcal{L}_\text{entropy}$ is equivalent to incorporating a prior distribution $\mathcal{U}$ across $K$ categories, where $\mathcal{U}$ is a uniform distribution.
\end{theorem}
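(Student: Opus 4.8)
The plan is to show the equivalence by rewriting the marginal entropy in terms of a Kullback--Leibler divergence between the mean predictive distribution $\overline{\mathbf{p}}$ and the uniform distribution $\mathcal{U}=(1/K,\dots,1/K)$. First I would recall the identity that for any distribution $\mathbf{p}$ over $K$ categories,
\begin{equation}
    \mathrm{KL}(\mathbf{p}\,\|\,\mathcal{U})=\sum_{k=1}^K \mathbf{p}^{(k)}\log\frac{\mathbf{p}^{(k)}}{1/K}=\log K - H(\mathbf{p}),
\end{equation}
which follows by splitting the logarithm and using $\sum_k \mathbf{p}^{(k)}=1$. Applying this with $\mathbf{p}=\overline{\mathbf{p}}$ gives $H(\overline{\mathbf{p}})=\log K-\mathrm{KL}(\overline{\mathbf{p}}\,\|\,\mathcal{U})$, hence $\mathcal{L}_\text{entropy}=-H(\overline{\mathbf{p}})=\mathrm{KL}(\overline{\mathbf{p}}\,\|\,\mathcal{U})-\log K$. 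Since $\log K$ is a constant independent of the model parameters, minimizing $\mathcal{L}_\text{entropy}$ is exactly minimizing $\mathrm{KL}(\overline{\mathbf{p}}\,\|\,\mathcal{U})$, i.e.\ pulling the aggregate prediction toward the uniform prior $\mathcal{U}$.

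Next I would make precise the sense in which this ``incorporates a prior $\mathcal{U}$.'' The cleanest route is a maximum-likelihood / variational argument: treating the per-batch cluster-assignment frequencies as a latent categorical variable, imposing a uniform prior $\mathcal{U}$ on the marginal label distribution and minimizing the cross-entropy (equivalently, the negative log-likelihood) of $\overline{\mathbf{p}}$ against that prior yields $-\sum_k \mathcal{U}^{(k)}\log\overline{\mathbf{p}}^{(k)} = \tfrac{1}{K}\sum_k\log\tfrac{1}{\overline{\mathbf{p}}^{(k)}}$; combining this with the (constant) entropy of the prior reproduces $\mathrm{KL}(\mathcal{U}\,\|\,\overline{\mathbf{p}})$, and I would note the two KL directions agree up to the same additive constant at the optimum $\overline{\mathbf{p}}=\mathcal{U}$, so both formalizations identify the unique minimizer as the uniform distribution. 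I would then state the conclusion: the unconstrained minimizer of $\mathcal{L}_\text{entropy}$ over the simplex is $\overline{\mathbf{p}}=\mathcal{U}$, which is precisely the statement that the regularizer enforces a uniform prior across the $K$ categories.

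The main obstacle is not the algebra — the KL rewriting is a one-line computation — but pinning down the \emph{statement} being claimed, since ``equivalent to incorporating a prior distribution'' is informal. I would resolve this by fixing the interpretation up front (regularizer $=$ KL to uniform up to an additive constant, equivalently the MAP term one obtains from a uniform Dirichlet-type prior on the marginal class distribution), proving that precise claim, and remarking that the directionality of the KL is immaterial for identifying the fixed point. A secondary point worth a sentence is that $\overline{\mathbf{p}}$ is itself a convex combination of valid probability vectors and hence lies in the simplex, so Jensen/Gibbs' inequality guarantees $\mathrm{KL}(\overline{\mathbf{p}}\,\|\,\mathcal{U})\ge 0$ with equality iff $\overline{\mathbf{p}}=\mathcal{U}$, making the ``prior $=$ uniform'' conclusion rigorous rather than heuristic.
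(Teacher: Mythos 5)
Your proof is correct and follows essentially the same route as the paper: the paper's entire argument is the one-line identity $\mathrm{KL}(\overline{\mathbf{p}}\,\|\,\mathcal{U}) = -H(\overline{\mathbf{p}}) + \log K$, from which minimizing $\mathcal{L}_\text{entropy}$ is equivalent to minimizing the KL divergence to the uniform prior. Your additional remarks (the Gibbs-inequality characterization of the minimizer and the discussion of KL directionality) go beyond what the paper states but do not change the argument.
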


\emph{Advantages of Entropy Regularization.}
The advantages of $\mathcal{L}_\text{entropy}$ are two-fold. Firstly, it is a flexible soft regularization term. One could choose the proper weight and even specific prior distribution according to the characteristics of the downstream datasets, instead of imposing equipartition constraints~\cite{caron2020unsupervised} in all cases. Secondly, $\mathcal{L}_\text{entropy}$ is differentiable and could be learned end-to-end, which is effective without any alternative optimization~\cite{caron2020unsupervised}.

\subsubsection{Inter-Class Separation} \label{subsubsec:regularization-inter-class}
Learning with pseudo-labels (Section~\ref{subsec:pseudo-label}) improves intra-class compactness. It is also important to promote inter-class separation for better classification. To this end, we explicitly increase the distances among prototypes $\mathcal{P}$, \ie, decrease the similarities between each pair of prototypes, and obtain the inter-class separation regularization term as below:
\begin{equation}
    \mathcal{L}_\text{sep}=\frac{1}{K}\sum_{i=1}^K\log \frac{1}{K-1}\sum_{j=1,j\neq i}^K\exp(\boldsymbol{\mu}_i^\top\boldsymbol{\mu}_j/\tau_\text{sep}),
    \label{eq:loss-sep}
\end{equation}
$\tau_\text{sep}$ is the temperature. The overall regularization is:
\begin{equation}
    \mathcal{L}_\text{reg}=\lambda_\text{entropy}\mathcal{L}_\text{entropy}+\lambda_\text{sep}\mathcal{L}_\text{sep},
    \label{eq:loss-regularization}
\end{equation}
where $\lambda_\text{entropy}$ and $\lambda_\text{sep}$ are weights of two terms.

\subsection{Overall Learning Objective}
\label{subsec:overall-objective}

For labeled data $\mathcal{D}_l$, ProtoGCD directly learns from the ground-truth labels on both of the views:
\begin{equation}
    \mathcal{L}_\text{sup}=\frac{1}{2|\mathcal{B}_l|}\sum_{i\in\mathcal{B}_l}\Big(\ell (\mathbf{y}_i^l,\mathbf{p}_i)+\ell (\mathbf{y}_i^l,\mathbf{p}^\prime_i)\Big).
    \label{eq:loss-sup}
\end{equation}
The integrated classification loss, \ie, learning with ground-truth labels on $\mathcal{D}_l$ and pseudo-labels on $\mathcal{D}_l\cup\mathcal{D}_u$, is:
\begin{equation}
    \mathcal{L}_\text{cls}=(1-\lambda_\text{sup})\mathcal{L}_\text{dapl}+\lambda_\text{sup}\mathcal{L}_\text{sup},
    \label{eq:loss-classification}
\end{equation}
where $\lambda_\text{sup}$ and $(1-\lambda_\text{sup})$ denote the weights of supervised and unsupervised components, which is the same as Eq.~\eqref{eq:loss-con}.

By integrating the learning objectives in Section~\ref{subsec:contrastive-learning}---Section~\ref{subsec:regularization}, \ie, $\mathcal{L}_\text{con}$ in Eq.~\eqref{eq:loss-con}, $\mathcal{L}_\text{cls}$ in Eq.~\eqref{eq:loss-classification} and $\mathcal{L}_\text{reg}$ in Eq.~\eqref{eq:loss-regularization}, we could obtain the overall learning objective:
\begin{equation}
    \mathcal{L}=\mathcal{L}_\text{con}+\mathcal{L}_\text{cls}+\mathcal{L}_\text{reg}.
    \label{eq:loss-overall}
\end{equation}

\emph{End-to-end Training.}
Each term in Eq.~\eqref{eq:loss-overall} is differential. The learnable prototypes $\mathcal{P}$, feature extractor $\mathcal{E}(\cdot)$ and projection head $\phi(\cdot)$ could be updated collectively in an end-to-end manner. Consequently, ProtoGCD is an efficient framework without any alternating optimization or EM-like operations like~\cite{Zhao_2023_ICCV,pu2023dynamic}. It also flexibly mitigates confirmation bias and learns appropriate and unbiased representations for GCD.

\section{Estimating the Number of Categories} \label{sec:estimate-number}
In the literature of GCD, most methods assume the number of new categories $K_{new}$ is known \emph{a-prior}, which is unrealistic. It is important to estimate $K_{new}$ given the whole training data $\mathcal{D}=\mathcal{D}_l\cup\mathcal{D}_u$. Vaze \etal~\cite{vaze2022gcd} propose to run K-means~\cite{arthur2007k} on $\mathcal{D}$ with various $K_{new}$, and choose the one corresponding to the maximum clustering accuracy on the labeled data as an estimation of $K_{new}$, namely \textit{Max-Acc}. However, only considering accuracy neglects latent information in feature space and leads to degraded results.
In this paper, we propose to simultaneously exploit the accuracy of labeled data and feature information of all data. Let $\widetilde{K}_{new}$ and $K_{new}$ denote the estimated and ground truth number of new classes. $\widetilde{K}=K_{old}+\widetilde{K}_{new}$.
We train ProtoGCD models with various numbers of classes, \ie, total number of prototypes $\mathcal{P}^{\widetilde{K}}=\{\boldsymbol{\mu}_c\}_{c=1}^{\widetilde{K}}$, and devise the following two proxies.  

\emph{Accuracy Score.}
ProtoGCD adopts the parametric classifier, so we could directly compute old classes' accuracy on $\mathcal{D}_{l}$ without clustering and Hungarian algorithm~\cite{kuhn1955hungarian} as below:
\begin{equation}
    \texttt{accScore} = \frac{1}{|\mathcal{D}_l|}\sum_{i\in\mathcal{D}_l}\mathds{1}_{\big[y_i=\argmax_c p(y=c|\mathbf{z}_i,\tau)\big]},
\end{equation}
$\mathcal{L}_{entropy}$ in Eq.~\eqref{eq:loss-entropy} encourages uniform predictions, if $\widetilde{K}_{new}>K_{new}$, some samples from $\mathcal{C}_{old}$ in $\mathcal{D}_l$ are assigned outside of old prototypes, leading to lower $\texttt{accScore}$.

\emph{Centroid Score.}
The centroids, \ie, mean features, of $\mathcal{C}_{old}$ could be computed in the following two ways:
\begin{align}
    \mathbf{c}_l^k & = \frac{1}{|\mathcal{D}_l^k|}\sum_{i\in\mathcal{D}_l^k} \mathbf{z}_i,\quad & k=1,2,\cdots,K_{old}, \label{eq:centroid-labeled}\\
    \mathbf{c}_u^k & = \frac{1}{|\mathcal{D}_u^k|}\sum_{i\in\mathcal{D}_u^k} \mathbf{z}_i,\quad & k=1,2,\cdots,K_{old}, \label{eq:centroid-unlabeled}
\end{align}
where $\mathcal{D}_l^k=\{(\mathbf{x}_i,y_i)\in\mathcal{D}_l,y_i=k\}$ denotes the labeled samples assigned to the prototypes of old classes based on ground-truth labels $y_i$, and $\mathcal{D}_u^k=\{(\mathbf{x}_i)\in\mathcal{D}_u,\tilde{y}_i=k\}$ denotes the unlabeled samples assigned to the prototypes of old classes based on the model's predictions $\widetilde{y}_i=\argmax_c p(y=c|\mathbf{z}_i,\tau)$.
Similarly, due to the effect of $\mathcal{L}_{entropy}$, if $\widetilde{K}_{new}<K_{new}$, more samples from $\mathcal{C}_{new}$ in $\mathcal{D}_u$ are assigned to old classes, in this case, the divergence between $\mathbf{c}_l^k$ and the corresponding $\mathbf{c}_u^k$ in old classes becomes larger, resulting in lower $\texttt{centrScore}$:
\begin{equation}
    \texttt{centrScore} = \prod_{k=1}^{K_{old}}\mathbf{c}_l^{k\top} \mathbf{c}_u^k,
\end{equation}

\begin{algorithm}[!t]
    \small
    \caption{\textit{Prototype Score} for Class Number Estimation}
    \label{alg:prototype-score}
    \begin{algorithmic}[1]
        \Require {Training dataset $\mathcal{D}=\mathcal{D}_l\cup\mathcal{D}_u$.}
        \Require {Number of old classes $K_{old}$.}
        \Require {Maximum range of new classes number $K_{new}^\mathsf{max}$.}
        \State $\triangleright$ {Initialize the left and right boundary $K_a=0, K_b=K_{new}^\mathsf{max}$.}
        \While {$K_a<K_b$}
            \State $\triangleright$ $K_{c_1}\leftarrow\lfloor \frac{1}{2}(K_a+K_b)\rfloor,\quad K_{c_2}\leftarrow\lfloor \frac{1}{2}(K_a+K_b)\rfloor+1$.
            \State $\triangleright$ {Train ProtoGCD with $(K_{old}+K_{c_1})$ and $(K_{old}+K_{c_2})$ prototypes on $\mathcal{D}$ for $3$ epochs and compute \texttt{protoScore} $p_{c_1}$ and $p_{c_2}$, respectively, as described in Eq.~\eqref{eq:prototype-score}.}
            \If {$p_{c_1}<p_{c_2}$}
                \State $\triangleright$ $K_a\leftarrow K_{c_2},\quad p_a\leftarrow p_{c_2}$.
            \Else
                \State $\triangleright$ $K_b\leftarrow K_{c_1},\quad p_b\leftarrow p_{c_1}$.
            \EndIf
        \EndWhile
        \Ensure {Estimated number of new class $\widetilde{K}^\star_{new}=K_a$.}
    \end{algorithmic}
\end{algorithm}

\emph{Prototype Score as Combination of Two Scores.}
As mentioned above, when $\widetilde{K}_{new}>K_{new}$, $\texttt{accScore}$ becomes lower, when $\widetilde{K}_{new}<K_{new}$, $\texttt{centrScore}$ becomes lower, which motivates us to integrate them and propose \textit{Prototype Score} by incorporating both accuracy and centroids' divergence:
\begin{equation}
    \texttt{protoScore}(\widetilde{K}_{new})=\texttt{accScore}\times \texttt{centrScore}.
    \label{eq:prototype-score}
\end{equation}
In both cases, $\texttt{protoScore}$ is small. We choose the $\widetilde{K}_{new}$ that maximizes \texttt{protoScore} as an estimator of $K_{new}$:
\begin{equation}
    \widetilde{K}^\star_{new}=\argmax_{\widetilde{K}_{new}}\ \texttt{protoScore}(\widetilde{K}_{new}).
    \label{eq:argmax-prototype-score}
\end{equation}

\emph{Overall Pipeline.}
We train ProtoGCD with various class numbers $\widetilde{K}$ and set the prototypes $\mathcal{P}^{\widetilde{K}}=\{\boldsymbol{\mu}_c\}_{c=1}^{\widetilde{K}}$. Models are trained using the overall objectives in Section~\ref{subsec:overall-objective} for only 3 epochs, then we compute $\texttt{protoScore}$ and estimate the novel classes number as in Eq.~\eqref{eq:argmax-prototype-score}. This \textit{low-epoch-training} avoids low distinguishable $\texttt{accScore}$ due to the overfitting to $\mathcal{D}_l$ and ensures fast estimation. We employ a binary search to iterate over $\widetilde{K}$ to further accelerate the algorithm. The whole process is shown in Algorithm~\ref{alg:prototype-score}. Our algorithm requires approximately $O(\log K_{new}^\mathsf{max})$ epochs. After the acquisition of $\widetilde{K}^\star_{new}$, we could use the estimated number to instantiate prototypes and train models for GCD with the proposed method in Section~\ref{sec:method-protogcd}.

\section{Extending to Detect Unseen Outliers}
\label{sec:extend-to-ood}
Once trained on partially labeled old classes $\mathcal{C}_{old}$ and unlabeled new classes $\mathcal{C}_{new}$, the model can classify samples from $\mathcal{C}_{old}$ and $\mathcal{C}_{new}$ during testing. However, in practical scenarios, test samples outside of $\mathcal{C}_{old}\cup \mathcal{C}_{new}$ could emerge after the model's deployment, we refer to them as \emph{outliers} or \emph{unseen novel categories}, and denote them as $\mathcal{C}_{out}$, see Fig.~\ref{fig:unification} (c). Since the model has not seen $\mathcal{C}_{out}$ during training, it is essential to detect them during inference, rather than irresponsibly classifying them into one of the categories in $\mathcal{C}_{old}\cup \mathcal{C}_{new}$, which is important in safety-critical circumstances~\cite{9863660} and often overlooked in GCD~\cite{vaze2022gcd}.

In this paper, we extend ProtoGCD to not only classify $\mathcal{C}_{old}$ and cluster $\mathcal{C}_{new}$, but also to reject $\mathcal{C}_{out}$. Herein, we refer to $\mathcal{C}_{old}\cup \mathcal{C}_{new}$ as in-distribution (ID) and $\mathcal{C}_{out}$ as out-of-distribution (OOD). In other words, we extend ProtoGCD to the task of OOD detection~\cite{hendrycks2016baseline}. Following the common practice, we assign each sample $\mathbf{x}$ a confidence score $S(\mathbf{x})$, indicating its \emph{normality}. Given a pre-defined threshold $\delta_\text{ood}$, if $S(\mathbf{x})\geq \delta_\text{ood}$, then $\mathbf{x}$ is recognized as ID, otherwise, $\mathbf{x}$ is detected as OOD and rejected. Because ProtoGCD adopts the parametric classifier, it could easily obtain the predictive probability, as in Eq.~\eqref{eq:posterior-prob}. We propose to employ the post-hoc score functions for OOD detection, like MSP~\cite{hendrycks2016baseline} and Energy~\cite{liu2020energy}, these methods are independent of ProtoGCD's training, thus could be directly integrated into our method for OOD detection, \eg, $S(\mathbf{x})=\max_k p(y=k|\mathbf{z},\tau)$ for MSP. By contrast, methods~\cite{vaze2022gcd,fei2022xcon,pu2023dynamic} using contrastive learning could not directly obtain posterior probabilities. We propose to firstly run K-means~\cite{arthur2007k} on training data and obtain the cluster centroids for ID classes, which are then used to compute probabilities similar to Eq.~\eqref{eq:posterior-prob}.

\begin{table}[!tb]
\setlength\tabcolsep{6pt}
\centering
\renewcommand{\arraystretch}{1}
\caption{The statistics of three \colorbox{color2}{generic datasets} and three \colorbox{color3}{fine-grained datasets}. The number of instances of both labeled and unlabeled data is shown ($|\mathcal{D}_l|$, $|\mathcal{D}_u|$), as well as the number of classes ($|\mathcal{Y}_l|=K_{old}$, $|\mathcal{Y}_u|=K_{old}+K_{new}$).}
\vspace{-5pt}
\label{tab:datasets}
\resizebox{.9\linewidth}{!}{
\begin{tabular}{@{}ccccc@{}}
\toprule
\multirow{2}{*}{Datasets} & \multicolumn{2}{c}{Labeled $\mathcal{D}_l$} & \multicolumn{2}{c}{Unlabeled $\mathcal{D}_u$} \\ \cmidrule(l){2-3} \cmidrule(l){4-5} 
 & $|\mathcal{D}_l|$ & $|\mathcal{Y}_l|$ & $|\mathcal{D}_u|$ & $|\mathcal{Y}_u|$ \\ \midrule
\cellcolor{color2}CIFAR10~\cite{krizhevsky2009learning} & 12,500 & 5 & 37,500 & 10 \\
\cellcolor{color2}CIFAR100~\cite{krizhevsky2009learning} & 20,000 & 80 & 30,000 & 100 \\
\cellcolor{color2}ImageNet-100~\cite{deng2009imagenet} & 31,860 & 50 & 95,255 & 100 \\ \midrule \midrule
\cellcolor{color3}CUB~\cite{wah2011caltech} & 1,498 & 100 & 4,496 & 200 \\
\cellcolor{color3}Stanford Cars (SCars)~\cite{krause20133d} & 2,000 & 98 & 6,144 & 196 \\
\cellcolor{color3}FGVC-Aircraft (Aircraft)~\cite{maji2013fine} & 1,666 & 50 & 5,001 & 100 \\
\cellcolor{color3}Herbarium19 (Herb)~\cite{tan2019herbarium} & 8,869 & 341 & 25,356 & 683 \\ \bottomrule
\end{tabular}
}
\end{table}

\begin{table*}[!t]
    \setlength{\tabcolsep}{3pt}
    \centering
    \renewcommand{\arraystretch}{1.0}
    \caption{Main results on generic image classification datasets, where $\dagger$ denotes the reproduced results.}
    \vspace{-5pt}
    \label{tab:main-generic}
    \resizebox{.9\textwidth}{!}{
    \begin{tabular}{@{}llllllllll@{}}
    \toprule
    \multirow{2}{*}{Methods} & \multicolumn{3}{c}{CIFAR10} & \multicolumn{3}{c}{CIFAR100} & \multicolumn{3}{c}{ImageNet-100} \\ \cmidrule(l){2-4} \cmidrule(l){5-7} \cmidrule(l){8-10} 
     & All & Old & New & All & Old & New & All & Old & New \\ \midrule
    K-means~\cite{arthur2007k} & 83.6 & 85.7 & 82.5 & 52.0 & 52.2 & 50.8 & 72.7 & 75.5 & 71.3 \\
    RankStats+~\cite{9464163} & 46.8 & 19.2 & 60.5 & 58.2 & 77.6 & 19.3 & 37.1 & 61.6 & 24.8 \\
    UNO+~\cite{Fini_2021_ICCV} & 68.6 & \textbf{98.3} & 53.8 & 69.5 & 80.6 & 47.2 & 70.3 & \textbf{95.0} & 57.9 \\
    ORCA$^\dagger$~\cite{cao2022openworld} & 81.8 & 86.2 & 79.6 & 69.0 & 77.4 & 52.0 & 73.5 & 92.6 & 63.9 \\
    GCD~\cite{vaze2022gcd} & 91.5 & 97.9 & 88.2 & 73.0 & 76.2 & 66.5 & 74.1 & 89.8 & 66.3 \\
    XCon~\cite{fei2022xcon} & 96.0 & 97.3 & 95.4 & 74.2 & 81.2 & 60.3 & 77.6 & 93.5 & 69.7 \\
    DCCL~\cite{pu2023dynamic} & 96.3 & 96.5 & 96.9 & 75.3 & 76.8 & 70.2 & 80.5 & 90.5 & 76.2 \\ 
    {GPC}~\cite{Zhao_2023_ICCV} & {92.2} & {98.2} & {89.1} & {77.9} & {\textbf{85.0}} & {63.0} & {76.9} & {94.3} & {71.0} \\ 
    {SimGCD}~\cite{wen2023parametric} & {97.1} & {95.1} & {98.1} & {80.1} & {81.2} & {77.8} & {83.0} & {93.1} & {77.9} \\ \midrule
    \RC{30}ProtoGCD (ours) & \textbf{97.3$_{\pm 0.0}$} & 95.3$_{\pm 0.2}$ & \textbf{98.2$_{\pm 0.1}$} & \textbf{81.9$_{\pm 0.2}$} & 82.9$_{\pm 0.0}$ & \textbf{80.0$_{\pm 0.4}$} & \textbf{84.0$_{\pm 0.6}$} & 92.2$_{\pm 0.9}$ & \textbf{79.9$_{\pm 1.3}$} \\ \bottomrule
    \end{tabular}
    }
\end{table*}

\section{Experiments} \label{sec:experiments}

\subsection{Experimental Setup}

\emph{Datasets.}
we conduct experiments on generic recognition datasets: CIFAR10~\cite{krizhevsky2009learning}, CIFAR100~\cite{krizhevsky2009learning} and ImageNet-100~\cite{deng2009imagenet}, as well as more challenging fine-grained datasets in Semantic Shift Benchmark~\cite{vaze2022openset}: CUB~\cite{wah2011caltech}, Stanford Cars (SCars)~\cite{krause20133d}, FGVC-Aircraft (Aircraft)~\cite{maji2013fine} and Herbarium19 (Herb)~\cite{tan2019herbarium}. Following the canonical setting in the literature of GCD~\cite{vaze2022gcd,fei2022xcon,pu2023dynamic}, in each dataset, we sample a subset of all classes as old classes $\mathcal{C}_{old}$, the remaining classes are novel classes $\mathcal{C}_{new}$. Half of the instances in old classes from the original training data are drawn to form labeled data $\mathcal{D}_l$, while all the remaining data from the original training set constitute the unlabeled dataset $\mathcal{D}_u$. We summarize the datasets' statistics in Table~\ref{tab:datasets}. The original test data in each dataset serves as the validation set for model selection. GCD follows the transductive setting~\cite{vaze2022gcd}, \ie, the model is trained on $\mathcal{D}_l\cup\mathcal{D}_u$ and evaluated on $\mathcal{D}_u$.

\begin{table*}[!t]
    \setlength{\tabcolsep}{2pt}
    \centering
    \renewcommand{\arraystretch}{1.0}
    \caption{Main results on fine-grained image classification datasets, where $\dagger$ denotes the reproduced results.}
    \vspace{-5pt}
    \label{tab:main-fine-grained}
    \resizebox{.9\textwidth}{!}{
    \begin{tabular}{@{}lllllllllllll@{}}
    \toprule
    \multirow{2}{*}{Methods} & \multicolumn{3}{c}{CUB} & \multicolumn{3}{c}{Stanford Cars} & \multicolumn{3}{c}{FGVC-Aircraft} & \multicolumn{3}{c}{Herbarium19} \\ \cmidrule(l){2-4} \cmidrule(l){5-7} \cmidrule(l){8-10} \cmidrule(l){11-13} 
     & All & Old & New & All & Old & New & All & Old & New & All & Old & New \\ \midrule
    K-means~\cite{arthur2007k} & 34.3 & 38.9 & 32.1 & 12.8 & 10.6 & 13.8 & 16.0 & 14.4 & 16.8 & 13.0 & 12.2 & 13.4 \\
    RankStats+~\cite{9464163} & 33.3 & 51.6 & 24.2 & 28.3 & 61.8 & 12.1 & 26.9 & 36.4 & 22.2 & 27.4 & 55.8 & 12.8 \\
    UNO+~\cite{Fini_2021_ICCV} & 35.1 & 49.0 & 28.1 & 35.5 & 70.5 & 18.6 & 40.3 & 56.4 & 32.2 & 28.3 & 53.7 & 14.7 \\
    ORCA$^\dagger$~\cite{cao2022openworld} & 35.3 & 45.6 & 30.2 & 31.9 & 42.2 & 26.9 & 31.6 & 32.0 & 31.4 & 24.6 & 26.5 & 23.7 \\
    GCD~\cite{vaze2022gcd} & 51.3 & 56.6 & 48.7 & 39.0 & 57.6 & 29.9 & 45.0 & 41.1 & 46.9 & 35.4 & 51.0 & 27.0 \\
    XCon~\cite{fei2022xcon} & 52.1 & 54.3 & 51.0 & 40.5 & 58.8 & 31.7 & 47.7 & 44.4 & 49.4 & 38.1$^\dagger$ & 58.3$^\dagger$ & 27.3$^\dagger$ \\
    DCCL~\cite{pu2023dynamic} & \textbf{63.5} & 60.8 & \textbf{64.9} & 43.1 & 55.7 & 36.2 & ~~--~~ & ~~--~~ & ~~--~~ & ~~--~~ & ~~--~~ & ~~--~~ \\ 
    {GPC}~\cite{Zhao_2023_ICCV} & {55.4} & {58.2} & {53.1} & {42.8} & {59.2} & {32.8} & {46.3} & {42.5} & {47.9} & ~~--~~ & ~~--~~ & ~~--~~  \\ 
    {SimGCD}~\cite{wen2023parametric} & {60.3} & {65.6} & {57.7} & {\textbf{53.8}} & {71.9} & {\textbf{45.0}} & {54.2} & {59.1} & {51.8} & {44.0} & {58.0} & {36.4} \\ \midrule
    \RC{30}ProtoGCD (ours) & 63.2$_{\pm 0.1}$ & \textbf{68.5$_{\pm 0.5}$} & 60.5$_{\pm 0.2}$ & \textbf{53.8$_{\pm 0.4}$} & \textbf{73.7$_{\pm 0.6}$} & 44.2$_{\pm 0.6}$ & \textbf{56.8$_{\pm 0.4}$} & \textbf{62.5$_{\pm 0.8}$} & \textbf{53.9$_{\pm 0.9}$} & \textbf{44.5$_{\pm 0.3}$} & \textbf{59.4$_{\pm 0.5}$} & \textbf{36.5$_{\pm 0.4}$} \\
    \bottomrule
    \end{tabular}
    }
\end{table*}

\emph{Evaluation Protocol.}
GCD is essentially a clustering problem, we evaluate the performance following~\cite{vaze2022gcd}. At test time, we measure the clustering accuracy (ACC) of the model's predictions $\tilde y_i$ given the ground-truth labels $y_i$:
\begin{equation}
    ACC=\max_{\omega\in\Omega(\mathcal{Y}_u)}\frac{1}{M}\sum_{i=1}^M\mathds{1}\big\{y_i=\omega(\tilde y_i)\big\},
    \label{eq:acc}
\end{equation}
where $M=|\mathcal{D}_u|$ denotes the total number of unlabeled samples, and $\Omega(\mathcal{Y}_u)$ represents the set of all permutations that match the prediction to the ground-truth labels. We find the optimal permutation by the Hungarian algorithm~\cite{kuhn1955hungarian}, which is performed only \emph{once} across both $\mathcal{C}_{old}$ and $\mathcal{C}_{new}$ on all the unlabeled data~\cite{vaze2022gcd}. The ACC in Eq.~\eqref{eq:acc} reflects the overall clustering performance on the entire unlabeled dataset $\mathcal{D}_u$, namely `All', we further report the clustering accuracy for samples from the old classes $\mathcal{C}_{old}$ subset and the new classes $\mathcal{C}_{new}$ subset in $\mathcal{D}_u$, namely `Old' and `New' respectively. The `Old' and `New' results are evaluated after the Hungarian assignment is computed.

\emph{Implementation Details.}
For fair comparisons, we follow prior arts~\cite{vaze2022gcd,fei2022xcon,pu2023dynamic} and train our method with ViT-B/16 backbone~\cite{dosovitskiy2021an} pre-trained with DINO~\cite{caron2021emerging}, and the final transformer block is fine-tuned. We use the output $\texttt{[CLS]}$ token as feature representation $\mathbf{z}_i$. All the methods are trained for 200 epochs with a batch size of 128, and models are selected on the validation set for evaluation. The feature and projection space dimensions are 768 and 65,536, as in~\cite{vaze2022gcd}. The initial learning rate is 0.1 and decayed with a cosine annealed schedule. As for the hyper-parameters, the weight of the supervised component $\lambda_\text{sup}$ is 0.35. $\lambda_\text{entropy}$ and $\lambda_\text{sep}$ is set to be 2 and 0.1 respectively. $\tau_\text{base}=\tau_\text{sep}=0.1$, and $\tau_\text{sharp}=0.05$. The ramp-up stage contains $e_\text{ramp}=100$ epochs with a linear schedule as in Eq.~\eqref{eq:ratio-ramp-up}. All experiments are conducted on NVIDIA RTX A6000 GPUs.

\subsection{Generalized Category Discovery Performance}

\subsubsection{Comparison with State-of-the-Arts}

We compare our method with naive K-means~\cite{arthur2007k}, strong baselines~\cite{9464163,Fini_2021_ICCV} derived from NCD and competitive GCD methods~\cite{vaze2022gcd,fei2022xcon,cao2022openworld} DCCL~\cite{pu2023dynamic}, {GPC~\cite{Zhao_2023_ICCV} and state-of-the-art (SOTA) SimGCD~\cite{wen2023parametric} and $\mu$GCD~\cite{NEURIPS2023_3f52ab43}}. We report the results of our method averaged over 5 runs (mean $\pm$ std), while for other methods, official results from original papers are reported. The experimental results on generic and fine-grained image datasets are shown in Table~\ref{tab:main-generic} and Table~\ref{tab:main-fine-grained}, respectively.

\emph{ProtoGCD outperforms previous SOTA methods by a large margin.}
ProtoGCD consistency achieves remarkable performance. For example, on CIFAR100, ProtoGCD achieves $1.8\%$ gains on `All' classes and $2.2\%$ on `New' classes, as in Table~\ref{tab:main-generic}. For fine-grained datasets in Table~\ref{tab:main-fine-grained}, our method outperforms DCCL~\cite{pu2023dynamic} by $7.0\%$ on SCars. The results indicate that ProtoGCD learns better representations from the pseudo-labeling mechanism and parametric prototypes.

\emph{ProtoGCD provides more balanced accuracy between old and novel classes.}
The significant issue addressed by ProtoGCD is the imbalanced performance between old and new classes, especially for parametric classifier-based methods~\cite{9464163,Fini_2021_ICCV}. On ImageNet-100, although UNO+~\cite{Fini_2021_ICCV} achieves the best `Old' accuracy, it suffers from severely imbalanced performance ($37.1\%$ gap between `Old' and `New'). By contrast, our method achieves more balanced results ($12.3\%$). A similar trend could be observed in other datasets. These results show that ProtoGCD benefits from its unified modeling and learning objectives between old and new classes to obtain balanced accuracy.

\subsubsection{Inductive Evaluation} \label{subsubsec:inductive}
Canonical GCD follows transductive evaluation~\cite{vaze2022gcd,fei2022xcon,pu2023dynamic}, \ie, models are tested on the unlabeled part $\mathcal{D}_u$ of training data. In this paper, we generalize to the inductive evaluation, where we evaluate the trained models on separate and unseen test datasets. The results are shown in Table~\ref{tab:inductive-setting}. Compared with the transductive results, contrastive learning-based methods GCD~\cite{vaze2022gcd} and XCon~\cite{fei2022xcon} have degraded performance. The reason is that these methods use semi-supervised K-means for transductive evaluation, however, there are no labeled data at hand for inductive settings, and unsupervised K-means results in unstable clusters. Our method utilizes a parametric classifier and does not rely on $\mathcal{D}_l$ at inference time. Consequently, ProtoGCD achieves better generalization performance under inductive settings as in Table~\ref{tab:inductive-setting}. For instance, the performance degradation of our method on Aircraft is $0.2\%$, less than $6.2\%$ of XCon~\cite{fei2022xcon}.

\begin{table}[!t]
    \setlength\tabcolsep{4pt}
    \centering
    \renewcommand{\arraystretch}{1.0}
    \caption{Inductive evaluation on four datasets. Values in $()$ indicate the performance gap compared with transductive evaluations, \ie, generalization errors.}
    \vspace{-5pt}
    \label{tab:inductive-setting}
    \resizebox{.85\linewidth}{!}{
    \begin{tabular}{@{}ccccc@{}}
    \toprule
    \multicolumn{2}{c}{Datasets} & GCD & XCon & Ours \\ \midrule
    \multirow{2}{*}{CIFAR100} & Old & 75.4 $\color{Red}(0.8\downarrow)$ & 81.1 $\color{Red}(0.1\downarrow)$ & \textbf{82.5 $\color{Red}(0.4\downarrow)$} \\
     & New & 60.0 $\color{Red}(6.5\downarrow)$ & 51.5 $\color{Red}(8.8\downarrow)$ & \textbf{78.0 $\color{Red}(2.0\downarrow)$} \\ \midrule
    \multirow{2}{*}{ImageNet-100} & Old & 87.3 $\color{Red}(2.5\downarrow)$ & 91.4 $\color{Red}(2.1\downarrow)$ & \textbf{92.8 $\color{Green}(0.3\uparrow)$} \\
     & New & 65.4 $\color{Red}(0.9\downarrow)$ & 64.4 $\color{Red}(5.3\downarrow)$ & \textbf{78.4 $\color{Red}(1.5\downarrow)$} \\ \midrule
    \multirow{2}{*}{SCars} & Old & 52.0 $\color{Red}(5.6\downarrow)$ & 53.8 $\color{Red}(5.0\downarrow)$ & \textbf{68.9 $\color{Red}(0.3\downarrow)$} \\
     & New & 26.6 $\color{Red}(3.3\downarrow)$ & 27.4 $\color{Red}(4.3\downarrow)$ & \textbf{41.2 $(0.0\downarrow)$} \\ \midrule
    \multirow{2}{*}{Aircraft} & Old & 40.1 $\color{Red}(1.1\downarrow)$ & 43.8 $\color{Red}(0.6\downarrow)$ & \textbf{62.1 $\color{Red}(0.4\downarrow)$} \\
     & New & 41.1 $\color{Red}(5.8\downarrow)$ & 43.2 $\color{Red}(6.2\downarrow)$ & \textbf{53.7 $\color{Red}(0.2\downarrow)$} \\ \bottomrule
    \end{tabular}
    }
\end{table}

\begin{table}[!tb]
    \setlength{\tabcolsep}{3pt}
    \centering
    \renewcommand{\arraystretch}{1.0}
    \caption{{Comparison results using \colorbox{color2}{DINO} and \colorbox{color3}{DINOv2} initialized backbone. \textbf{Bold} and \underline{underline} denote the best and the second best values.}}
    \vspace{-5pt}
    \label{tab:comparison-dinov2-upgrade}
    \resizebox{.98\linewidth}{!}{
    \begin{tabular}{@{}lccccccccc@{}}
    \toprule
    \multirow{2}{*}{Method} & \multicolumn{3}{c}{CUB} & \multicolumn{3}{c}{Stanford Cars} & \multicolumn{3}{c}{FGVC Aircraft} \\ \cmidrule(l){2-10} 
     & All & Old & New & All & Old & New & All & Old & New \\ \midrule
    \RD{30}\multicolumn{10}{c}{DINO} \\ \midrule
    SimGCD~\cite{wen2023parametric} & 60.3 & 65.6 & 57.7 & 53.8 & 71.9 & 45.0 & 54.2 & 59.1 & 51.8 \\
    $\mu$GCD~\cite{NEURIPS2023_3f52ab43} & \underline{65.7} & 68.0 & \underline{64.6} & \underline{56.5} & 68.1 & \underline{50.9} & 53.8 & 55.4 & 53.0 \\ \midrule
    ProtoGCD (ours) & 63.2 & \underline{68.5} & 60.5 & 53.8 & \underline{73.7} & 44.2 & \underline{56.8} & \textbf{62.5} & \underline{53.9} \\
    ProtoGCD+ (ours) & \textbf{66.3} & \textbf{68.9} & \textbf{65.0} & \textbf{58.8} & \textbf{75.1} & \textbf{51.2} & \textbf{59.5} & \underline{62.0} & \textbf{58.3} \\ \midrule \midrule
    \RE{30}\multicolumn{10}{c}{DINOv2} \\ \midrule
    SimGCD~\cite{wen2023parametric} & 71.5 & 78.1 & 68.3 & 71.5 & 81.9 & 66.6 & 63.9 & 69.9 & 60.9 \\
    $\mu$GCD~\cite{NEURIPS2023_3f52ab43} & 74.0 & 75.9 & \textbf{73.1} & \underline{76.1} & \textbf{91.1} & 68.9 & 66.3 & 68.7 & 65.1 \\ \midrule
    ProtoGCD (ours) & \underline{74.9} & \underline{80.1} & 72.3 & 75.8 & 88.7 & \underline{69.5} & \underline{69.4} & \underline{75.9} & \underline{66.2} \\
    ProtoGCD+ (ours) & \textbf{75.7} & \textbf{81.5} & \underline{72.9} & \textbf{77.6} & \underline{90.5} & \textbf{71.5} & \textbf{71.1} & \textbf{76.3} & \textbf{68.5} \\ \bottomrule
    \end{tabular}
    }
\end{table}

\subsubsection{Evaluation under Other Training Configurations}

To comprehensively evaluate our method, we conduct experiments under different training configurations. From the model perspective, we consider a more recent DINOv2~\cite{oquab2023dinov2} for enhanced initializations. From the training techniques perspective, a recent work $\mu$GCD~\cite{NEURIPS2023_3f52ab43} builds upon SimGCD and further utilizes FixMatch~\cite{sohn2020fixmatch}-like techniques, including the exponential moving average of the teacher model and misaligned data augmentations for teacher and student models. $\mu$GCD also employs the model trained in~\cite{vaze2022gcd} for initialization. These techniques are complementary to ProtoGCD. Thus, we seamlessly incorporate the three techniques into ProtoGCD and name the upgraded method as \textbf{ProtoGCD+}. Results under these training configurations are shown in Table~\ref{tab:comparison-dinov2-upgrade}. Our method outperforms SimGCD for both DINO and DINOv2, and the upgraded version ProtoGCD+ achieves the SOTA performance.

\subsubsection{Finding the Number of Classes}
For GCD~\cite{vaze2022gcd,fei2022xcon}, most methods assume the number of new classes is known. To relax this restriction, we present \textit{Prototype Score} for class number estimation in Algorithm~\ref{alg:prototype-score}. We compare our method with GCD~\cite{vaze2022gcd} {Xcon~\cite{fei2022xcon} and DCCL~\cite{pu2023dynamic}} in Table~\ref{tab:estimate-k}. \emph{Prototype Score} consistently achieves more precise estimation results. The reason is that we further consider information in the feature space beyond accuracy and grasp more latent characteristics.

\begin{table}[!t]
    \setlength{\tabcolsep}{4pt}
    \centering
    \renewcommand{\arraystretch}{0.8}
    \caption{{Estimating the number of total classes $K$ in the unlabeled data $\mathcal{D}_u$ on \colorbox{color2}{generic} and \colorbox{color3}{fine-grained} datasets. Here, `GT' denotes the ground truth.}}
    \vspace{-5pt}
    \label{tab:estimate-k}
    \resizebox{.9\linewidth}{!}{
    \begin{tabular}{@{}cccccc@{}}
    \toprule
    Datasets & GT & GCD~\cite{vaze2022gcd} & XCon~\cite{fei2022xcon} & DCCL~\cite{pu2023dynamic} & Ours \\ \midrule
    \cellcolor{color2}CIFAR10 & 10 & 9 & 8 & 14 & \textbf{10} \\
    \cellcolor{color2}CIFAR100 & 100 & \textbf{100} & 97 & 146 & \textbf{100} \\
    \cellcolor{color2}IN-100 & 100 & 109 & 109 & 129 & \textbf{106} \\ \midrule \midrule
    \cellcolor{color3}CUB & 200 & 231 & 236 & 172 & \textbf{211} \\
    \cellcolor{color3}SCars & 196 & 230 & 206 & \textbf{192} & 205 \\
    \cellcolor{color3}Herb & 683 & 520 & - & - & \textbf{603} \\ \bottomrule
    \end{tabular}
    }
\end{table}

\begin{figure}[!t]
    \centering
    \begin{subfigure}{.48\linewidth}
        \centering
        \includegraphics[width=\linewidth]{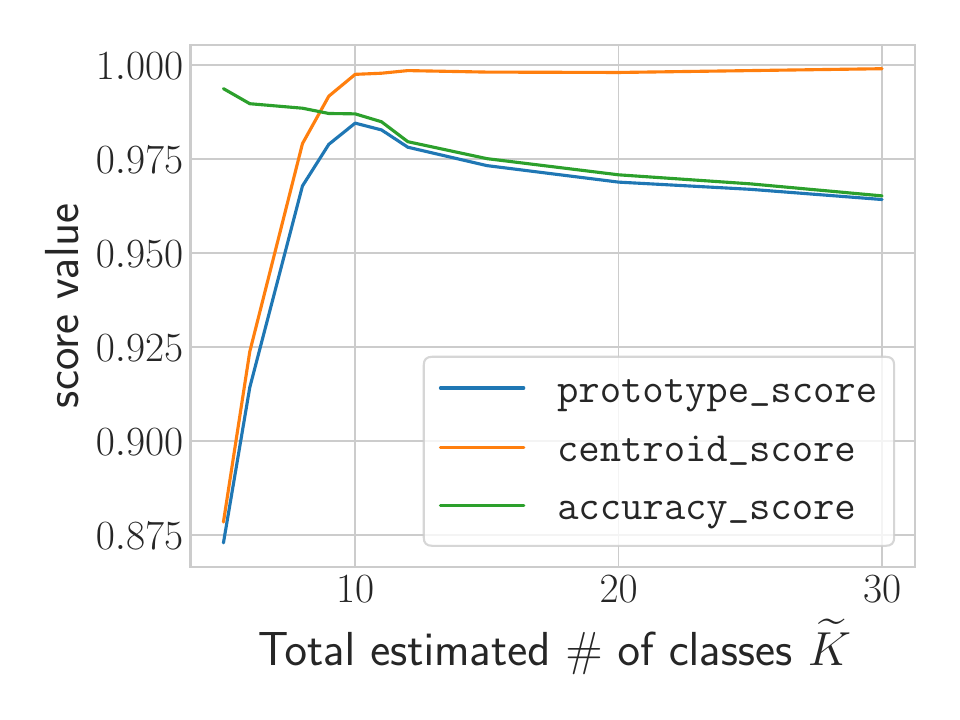}
        \vspace{-15pt}
        \caption{CIFAR10.}
        \label{subfig:estimate-score-cifar10}
    \end{subfigure}
    \hfill
    \begin{subfigure}{.48\linewidth}
        \centering
        \includegraphics[width=\linewidth]{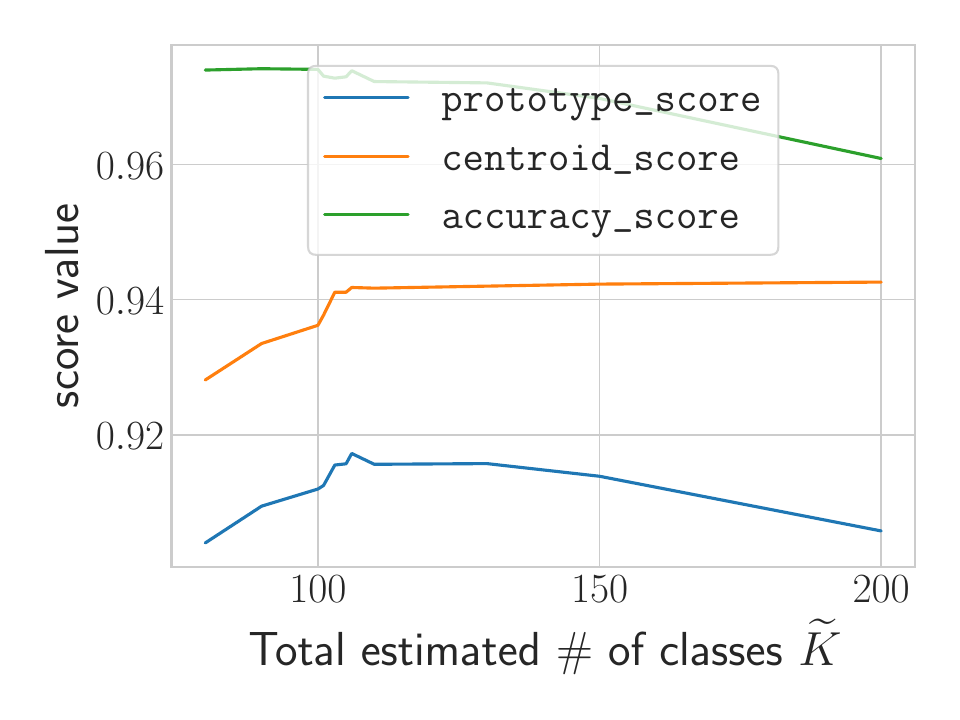}
        \vspace{-15pt}
        \caption{ImageNet-100.}
        \label{subfig:estimate-score-imagenet}
    \end{subfigure}
    \vspace{-5pt}
    \caption{Results on different scores for class number estimation on CIFAR10 (a) and ImageNet-100 (b), and the ground-truth classes numbers $\widetilde{K}$ are 10 and 100, respectively.}
    \label{fig:estimate-score}
\end{figure}

\begin{table}[!t]
    \setlength{\tabcolsep}{6pt}
    \centering
    \renewcommand{\arraystretch}{1.0}
    \caption{Class number estimation results across different training epochs. Here, `GT' denotes the ground truth.}
    \vspace{-5pt}
    \label{tab:estimate-num-epochs}
    \resizebox{.85\linewidth}{!}{
    \begin{tabular}{@{}ccccccccc@{}}
    \toprule
    \multirow{2}{*}{Datasets} & \multirow{2}{*}{GT} & \multicolumn{7}{c}{\# Training Epochs} \\ \cmidrule(l){3-9} 
     &  & 1 & 2 & 3 & 4 & 5 & 6 & 7 \\ \midrule
    C100 & 100 & 89 & 98 & \textbf{100} & 101 & 109 & 120 & 117 \\
    IN-100 & 100 & 90 & \textbf{97} & 106 & 109 & 113 & 119 & 119 \\
    CUB & 200 & 180 & \textbf{205} & 211 & 218 & 221 & 229 & 235 \\
    Herb & 683 & 571 & 595 & 603 & 636 & \textbf{670} & 701 & 724 \\ \bottomrule
    \end{tabular}
    }
\end{table}

To demonstrate the validity of our method, we illustrate the trend of changes in two scores of \emph{Prototype Score}. Fig.~\ref{fig:estimate-score} demonstrates that as the estimated number $\widetilde K$ grows, \texttt{centrScore} increases while \texttt{accScore} decreases. This is consistent with the analysis in Section~\ref{sec:estimate-number}. As a result, we select $\widetilde K$ as the estimation when the combination value of \texttt{centrScore} and \texttt{accScore} is the largest.

\emph{Training epochs for class number estimation.} Algorithm~\ref{alg:prototype-score} requires repeatedly training the model for several epochs. We conduct experiments of class number estimation with different epochs in Table~\ref{tab:estimate-num-epochs}. If the number of training epochs is insufficient, the model is very weak on labeled classes, resulting in unreliable \texttt{accScore}. Conversely, if the number of epochs is large, the model tends to overfit the labeled data, resulting in indistinguishable \texttt{accScore}. Then, \texttt{centrScore} assumes greater significance. As a result, the method tends to predict a larger $\widetilde{K}^\star$, as in Table~\ref{tab:estimate-num-epochs}. By default, we choose to train 3 epochs for all datasets.

\subsection{Ablation Studies} \label{subsec:ablation}

\begin{table*}[!t]
    \setlength{\tabcolsep}{5pt}
    \centering
    \renewcommand{\arraystretch}{1.0}
    \caption{Main ablation studies on the learning objectives.}
    \vspace{-5pt}
    \label{tab:main-ablation}
    \resizebox{.8\textwidth}{!}{
    \begin{tabular}{@{}ccccccccccc@{}}
    \toprule
    \multirow{2}{*}{ID} & \multirow{2}{*}{\begin{tabular}[c]{@{}c@{}}Contrastive\\ $\mathcal{L}_\text{con}$\end{tabular}} & \multirow{2}{*}{\begin{tabular}[c]{@{}c@{}}DAPL\\ $\mathcal{L}_\text{dapl}$\end{tabular}} & \multirow{2}{*}{\begin{tabular}[c]{@{}c@{}}EntropyReg\\ $\mathcal{L}_\text{entropy}$\end{tabular}} & \multirow{2}{*}{\begin{tabular}[c]{@{}c@{}}ProtoSep\\ $\mathcal{L}_\text{sep}$\end{tabular}} & \multicolumn{3}{c}{CIFAR100} & \multicolumn{3}{c}{Aircraft} \\ \cmidrule(l){6-8} \cmidrule(l){9-11} 
     &  &  &  &  & All & Old & New & All & Old & New \\ \midrule
    (a) & \xmark & \xmark & \xmark & \xmark & 61.2 & 79.4 & 24.6 & 30.7 & 39.9 & 26.2 \\
    (b) & \cmark & \xmark & \xmark & \xmark & 64.0 & 73.4 & 45.3 & 33.6 & 33.1 & 33.9 \\
    (c) & \cmark & \cmark & \xmark & \xmark & 65.8 & 71.9 & 53.7 & 36.1 & 36.2 & 36.0 \\
    (d) & \cmark & \xmark & \cmark & \xmark & 30.1 & 44.0 & 2.2 & 20.8 & 42.5 & 10.0 \\
    {(e)} & \cmark & \cmark & \xmark & \cmark & {66.4} & {71.8} & {55.7} & {38.0} & {38.4} & {37.8} \\
    (f) & \cmark & \cmark & \cmark & \xmark & 79.7 & 79.6 & 79.9 & 54.4 & 56.6 & 53.3 \\
    (g) & \cmark & \cmark & \cmark & \cmark & \textbf{81.9} & \textbf{82.9} & \textbf{80.0} & \textbf{56.8} & \textbf{62.5} & \textbf{53.9} \\ \bottomrule
    \end{tabular}
    }
\end{table*}

\emph{Ablations on the main components.}
Here we validate the effectiveness of main training objectives, including contrastive learning $\mathcal{L}_\text{con}$ (Section~\ref{subsec:contrastive-learning}), DAPL mechanism $\mathcal{L}_\text{dapl}$ (Section~\ref{subsec:pseudo-label}), entropy regularization $\mathcal{L}_\text{entropy}$ (Section~\ref{subsubsec:regularization-trivial}) and separation regularization $\mathcal{L}_\text{sep}$ (Section~\ref{subsubsec:regularization-inter-class}). In Table~\ref{tab:main-ablation}, (a) is the baseline where only supervised classification $\mathcal{L}_\text{sup}$ in Eq.~\eqref{eq:loss-sup} is employed. (b) shows a slight improvement over (a), which implies that contrastive learning ensures fundamental representations. Comparing (b) and (c), DAPL improves overall performance, especially for `New' accuracy, highlighting the effectiveness of self-training with pseudo-labeling. Comparing (b) and (d), introducing $\mathcal{L}_\text{entropy}$ alone leads to collapsed performance. The reason is that blindly avoiding trivial solutions without the guidance of DAPL for pseudo-labeling brings about meaningless outcomes. In contrast, as in (f), the concurrent presence of DAPL and entropy regularization ensure significant performance gains, for instance, (f) outperforms (b) by $23.5\%$ and $19.4\%$ on `Old' and `New' classes of Aircraft, which highlights the importance of both DAPL mechanism and avoidance of trivial solutions in GCD. {In (e), removing $\mathcal{L}_\text{entropy}$ severely degrades the performance compared with (g) due to the trivial solutions.} Besides, explicitly separating clusters via $\mathcal{L}_\text{sep}$ further enhances the performance, with $2.2\%$ and $2.4\%$ improvements on two datasets. 

\begin{figure}[!t]
    \centering
    \begin{subfigure}{.48\linewidth}
        \centering
        \includegraphics[width=\linewidth]{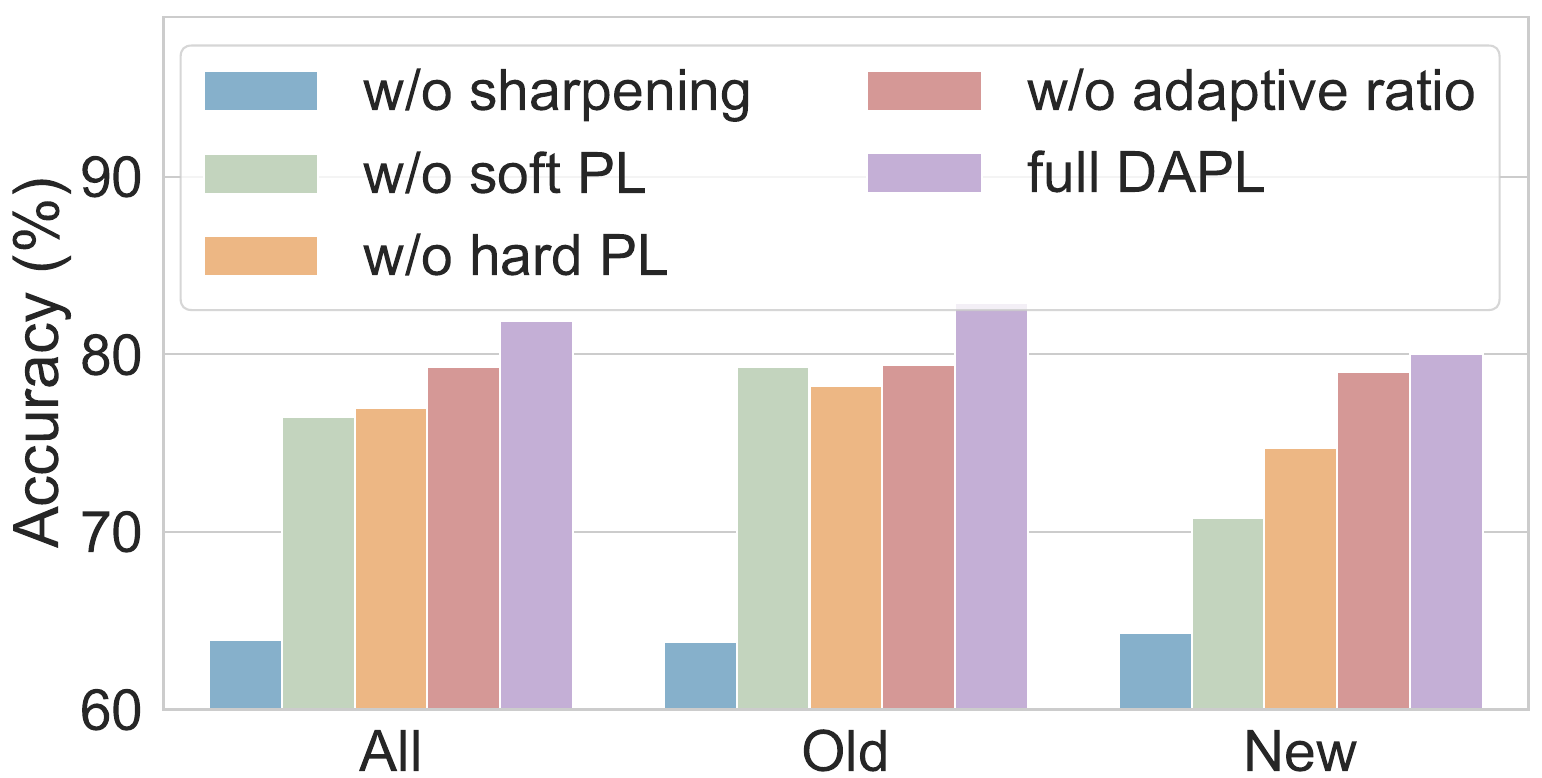}
        \vspace{-15pt}
        \caption{CIFAR100.}
        \label{subfig:ablation-dapl-cifar100}
    \end{subfigure}
    \hfill
    \begin{subfigure}{.48\linewidth}
        \centering
        \includegraphics[width=\linewidth]{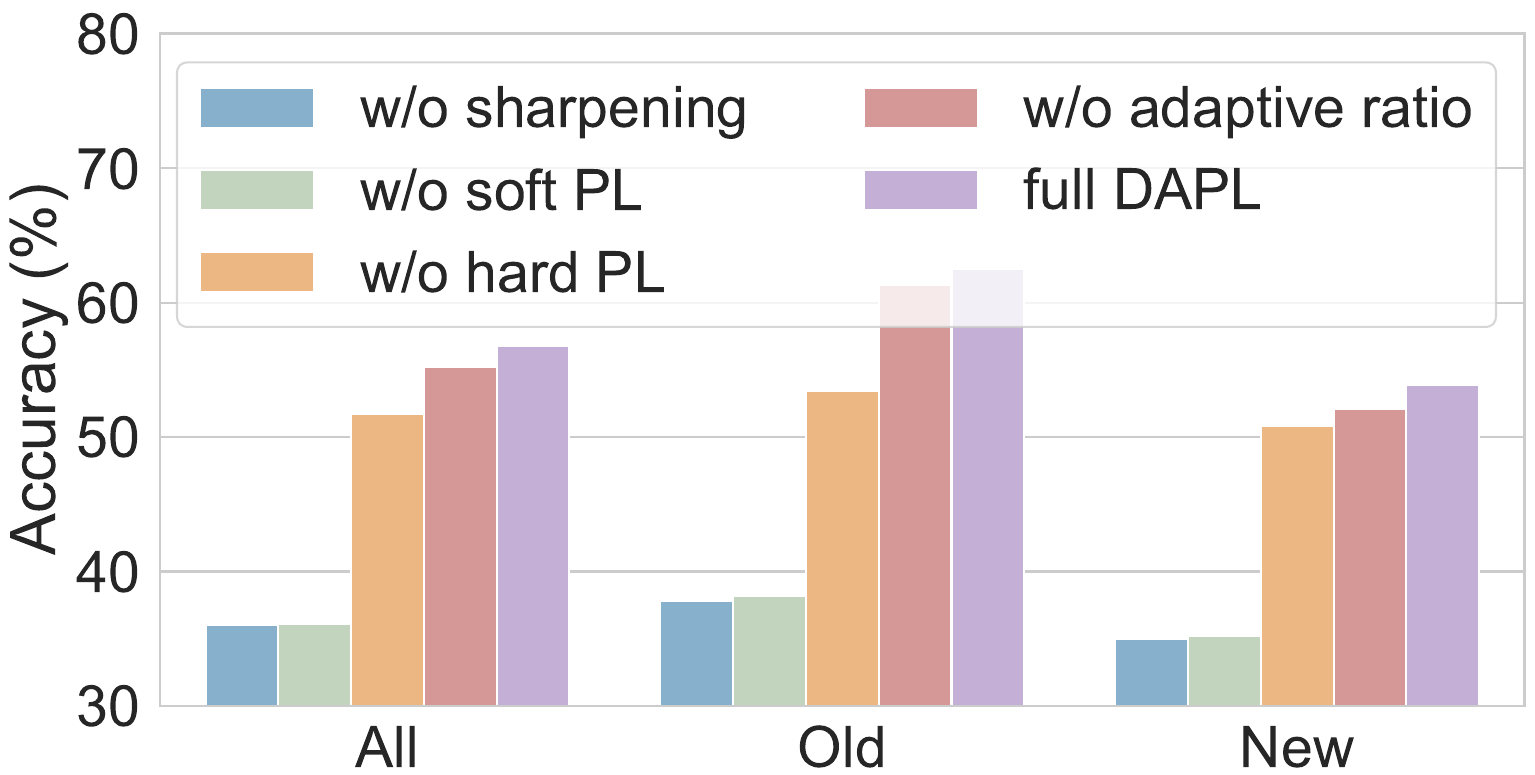}
        \vspace{-15pt}
        \caption{Aircraft.}
        \label{subfig:ablation-dapl-air}
    \end{subfigure}
    \vspace{-5pt}
    \caption{Detailed ablations on DAPL.}
    \label{fig:ablation-dapl}
    \vspace{-7pt}
\end{figure}

\emph{Detailed ablations on DAPL.}
We conduct ablations on our pseudo-labeling mechanism, including sharpening in soft pseudo-labels (PL), the combination of soft and hard PL and the adaptive ramp-up ratio of hard PL. In Fig.~\ref{fig:ablation-dapl}, the overall trends are similar across (a) and (b). Sharpening helps models produce more confident outputs, and the absence of sharpening impedes self-training, leading to significant performance decline, \ie, $\sim 20\%$. Models are susceptible to confirmation bias without soft PL, while without hard PL, the training is hindered due to less informative PL. Overall, soft PL has a more significant impact on the results. We also remove the adaptive ratio and fix the ratio of hard to soft PL at $1:1$, and the overall accuracy is roughly $\sim 2\%$ lower than the full DAPL, which underscores the importance of adaptivity according to the model's capabilities.

\begin{figure}[!t]
    \centering
    \includegraphics[width=.95\linewidth]{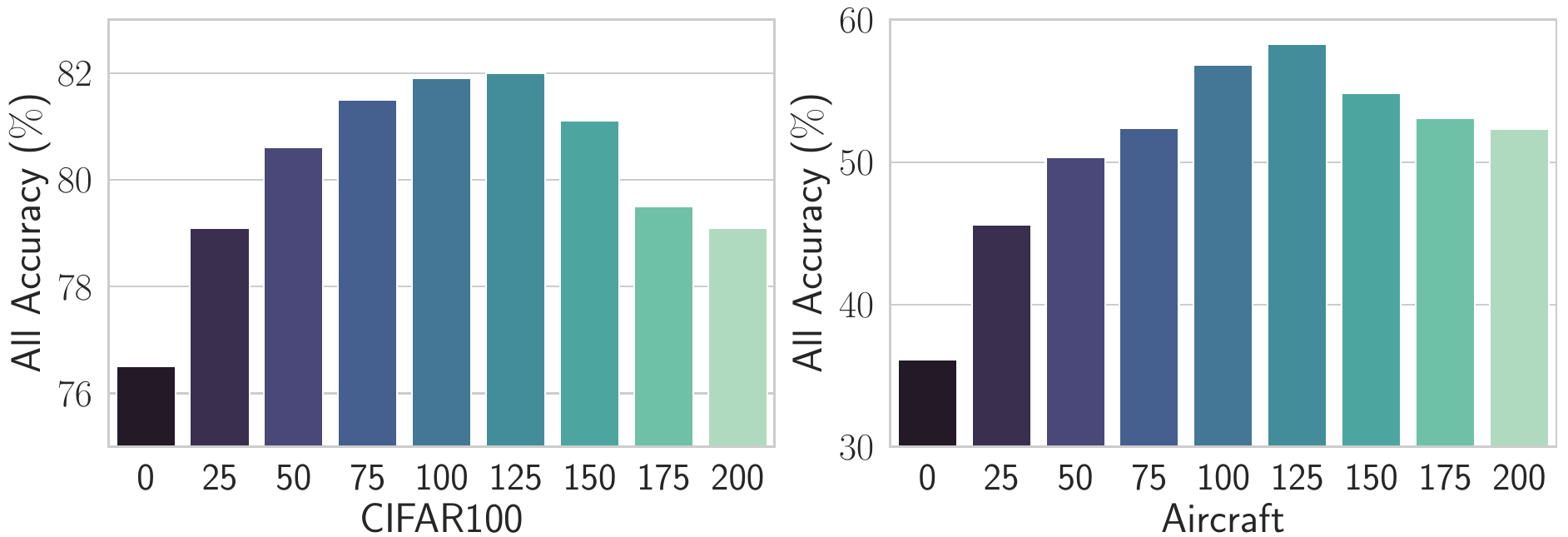}
    \vspace{-7pt}
    \caption{{Detailed ablations of $e_\text{ramp}$ on CIFAR100 and Aircraft.}}
    \label{fig:abaltion-ramp-epoch}
\end{figure}

\emph{Detailed ablations on $e_\text{ramp}$.}
{In the proposed DAPL, the proportion of samples assigned with hard pseudo-labels increases linearly from 0 to 100\% during the first $e_\text{ramp}$ epochs, as in Eq.~\eqref{eq:ratio-ramp-up}. Here, we conduct detailed ablation on the ramp-up epochs $e_\text{ramp}$ across $0,25,50,75,100,125,150,175,200$ on CIFAR100 and Aircraft. Results are shown in Fig.~\ref{fig:abaltion-ramp-epoch}. The optimal value of $e_\text{ramp}$ is around 125 for both datasets, and we could observe that the accuracy remains stable and high when $e_\text{ramp}$ ranges within $[75,150]$, showcasing the robustness of our method.}

\begin{figure}[!t]
    \centering
    \includegraphics[width=.7\linewidth]{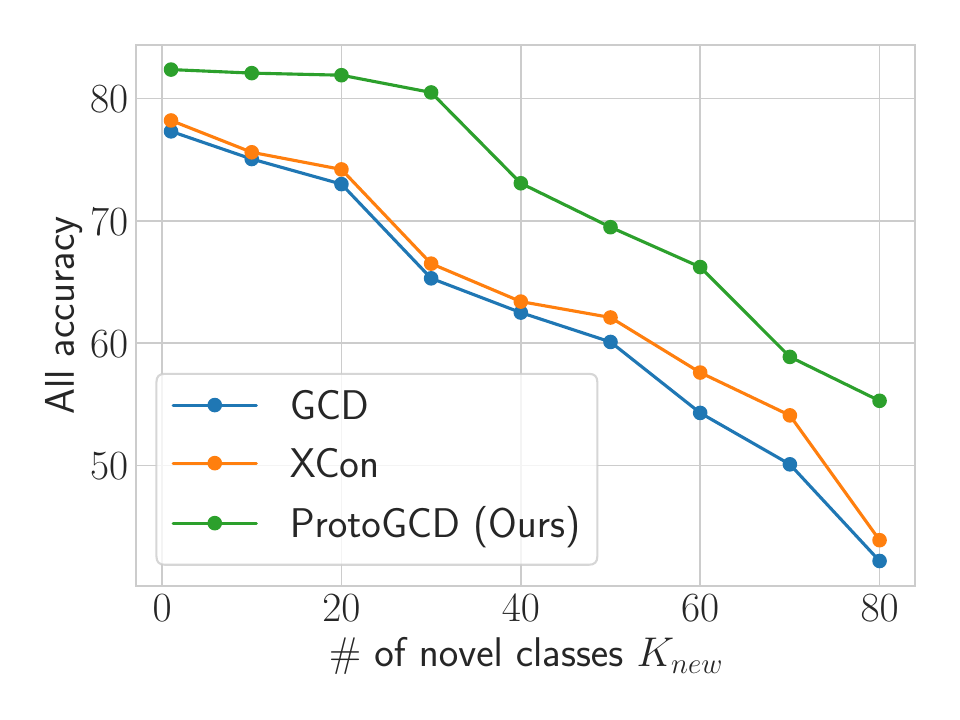}
    \vspace{-5pt}
    \caption{`All' accuracy across various class splits.}
    \label{fig:splits}
\end{figure}

\emph{Evaluation with various old/new class splits.}
We further evaluate different methods across various class splits on CIFAR100 where $K_{new}$ ranges from 1 to 99. Fig.~\ref{fig:splits} illustrates `All' accuracy and indicates that our method is more robust when very few classes are labeled, and consistently outperforms the competitors across various class splits.

\begin{figure}[!t]
    \centering
    \includegraphics[width=.8\linewidth]{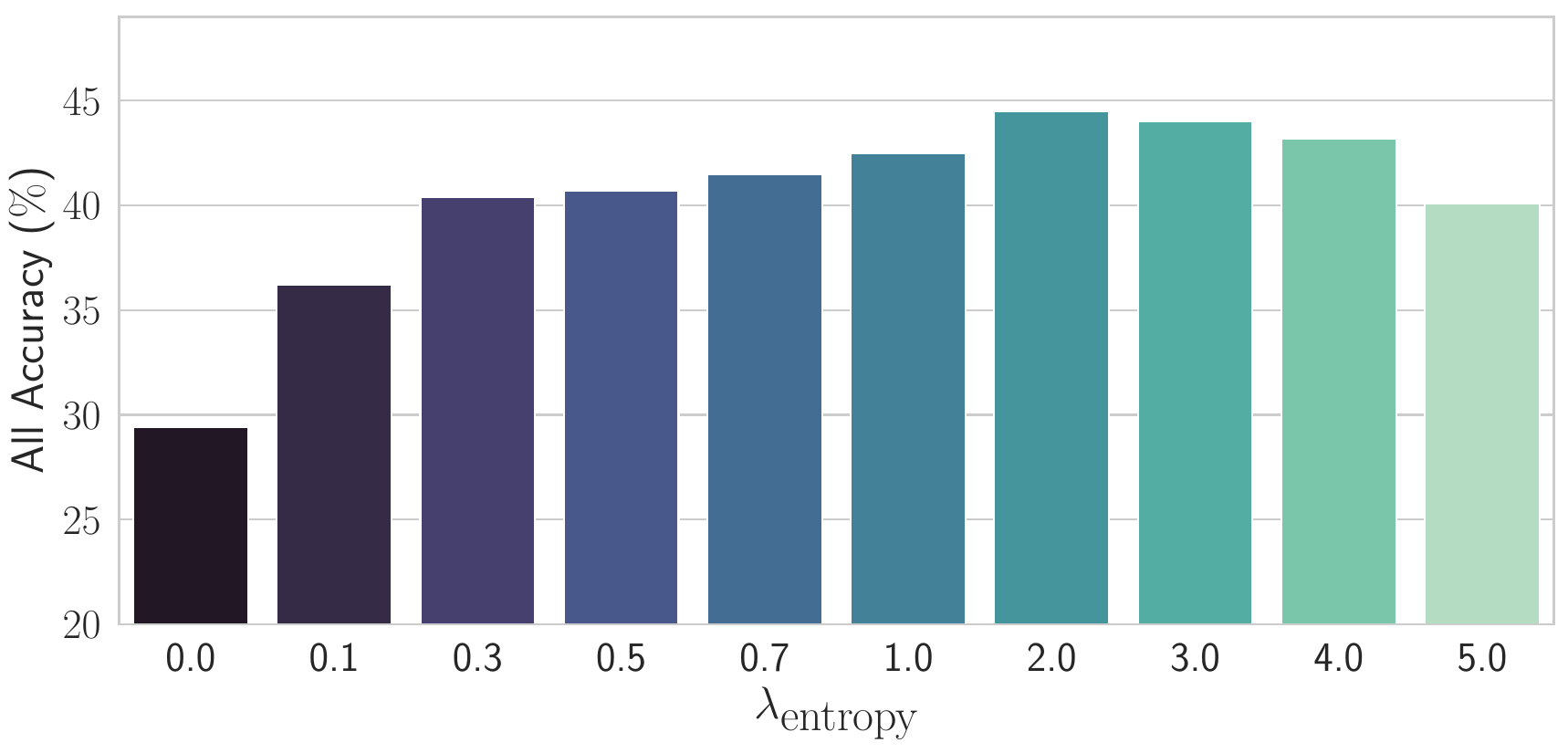}
    \vspace{-7pt}
    \caption{Detailed ablations of $\lambda_\text{entropy}$ on Herb19.}
    \label{fig:entropy-weight-herb}
    \vspace{-5pt}
\end{figure}

\emph{In-depth analysis of $\mathcal{L}_\text{entropy}$ on Herb dataset.}
{Although the entropy regularization $\mathcal{L}_\text{entropy}$ has implicitly imposed the assumption of a uniform distribution on the dataset, which might conflict with the long-tailed Herb, we conduct detailed ablations and argue that $\mathcal{L}_\text{entropy}$ is still a relatively applicable regularization in GCD. As Fig.~\ref{fig:entropy-weight-herb} shows, the results indicate a huge degradation in the absence of marginal entropy maximization $\mathcal{L}_\text{entropy}$ (44.5\% $\to$ 29.4\%). Even when imposing a small weight, \eg, 0.1, there is a notable enhancement (29.4\% $\to$ 36.2\%). In summary, $\mathcal{L}_\text{entropy}$ is indispensable. The reason is that (1) $\mathcal{L}_\text{entropy}$ is a soft regularization rather than the rigid constraints like the equipartition in~\cite{Fini_2021_ICCV}, we could choose appropriate $\lambda_\text{entropy}$ to balance between avoiding trivial solutions and preventing conflicts with the actual dataset distribution. (2) $\mathcal{L}_\text{entropy}$ directly acts on the model's predicted marginal probabilities, which may not strictly align with the ratio of samples from new and old classes predicted by the model. The latter corresponds to the actual distribution of the dataset. More details are shown in the Appendix.}

\subsection{OOD Detection Performance}

In this section, we extend ProtoGCD to OOD detection scenarios, as described in Section~\ref{sec:extend-to-ood}, and compare its rejection ability of unseen classes with GCD methods~\cite{vaze2022gcd,fei2022xcon}.

\emph{Experimental Setup.}
For CIFAR100 as ID dataset, test OOD datasets are Texture~\cite{cimpoi2014describing}, SVHN~\cite{37648}, Places365~\cite{zhou2017places}, TinyImageNet, LSUN~\cite{yu15lsun}, iSUN~\cite{xu2015turkergaze} and CIFAR10. For ImageNet-100 as ID dataset, test OOD datasets are Texture~\cite{cimpoi2014describing}, Places365~\cite{zhou2017places}, iNaturalist~\cite{van2018inaturalist}, ImageNet-O~\cite{hendrycks2021natural}, OpenImage-O~\cite{wang2022vim}. Following the convention~\cite{hendrycks2016baseline,pmlr-v162-hendrycks22a}, we use AUROC and FPR95 to measure OOD detection. We treat ID classes ($\mathcal{C}_{old}\cup\mathcal{C}_{new}$) as positives, and OOD classes ($\mathcal{C}_{out}$) as negatives. More details are shown in the Appendix.

\begin{table}[!t]
    \setlength{\tabcolsep}{4pt}
    \centering
    \renewcommand{\arraystretch}{1.0}
    \caption{OOD detection performance on CIFAR100.}
    \vspace{-5pt}
    \label{tab:test-ood-cifar100}
    \resizebox{.95\linewidth}{!}{
    \begin{tabular}{@{}cccccccccc@{}}
    \toprule
    \multirow{2}{*}{$\mathcal{D}_\textrm{out}^\textrm{test}$} & \multicolumn{3}{c}{FPR95 $\downarrow$} & \multicolumn{3}{c}{AUROC $\uparrow$} \\ \cmidrule(l){2-4} \cmidrule(l){5-7} 
     & GCD & XCon & ProtoGCD & GCD & XCon & ProtoGCD \\ \midrule
    Texture & \textbf{29.92} & 42.81 & 31.31 & 92.99 & 90.74 & \textbf{93.90} \\
    SVHN & \textbf{47.80} & 51.54 & 50.65 & 90.54 & 90.89 & \textbf{91.21} \\
    Places365 & \textbf{49.36} & 69.20 & 56.17 & \textbf{86.68} & 81.31 & 84.20 \\
    TinyImageNet & 59.08 & 60.88 & \textbf{58.93} & 84.62 & 84.00 & \textbf{85.94} \\
    LSUN & 71.16 & 63.40 & \textbf{60.89} & 83.42 & 84.68 & \textbf{87.07} \\
    iSUN & 69.15 & 65.0 & \textbf{64.03} & 82.87 & 83.97 & \textbf{84.51} \\
    CIFAR10 & 71.97 & 68.53 & \textbf{63.53} & 77.59 & 78.13 & \textbf{80.18} \\ \midrule
    Mean & 56.92 & 60.20 & \textbf{55.07} & 85.53 & 84.82 & \textbf{86.72} \\ \bottomrule
    \end{tabular}
    }
\end{table}

\begin{table}[!t]
    \setlength{\tabcolsep}{4pt}
    \centering
    \renewcommand{\arraystretch}{1.0}
    \caption{OOD detection performance on ImageNet-100.}
    \vspace{-5pt}
    \label{tab:test-ood-imagenet}
    \resizebox{.95\linewidth}{!}{
    \begin{tabular}{@{}cccccccccc@{}}
    \toprule
    \multirow{2}{*}{$\mathcal{D}_\textrm{out}^\textrm{test}$} & \multicolumn{3}{c}{FPR95 $\downarrow$} & \multicolumn{3}{c}{AUROC $\uparrow$} \\ \cmidrule(l){2-4} \cmidrule(l){5-7} 
     & GCD & XCon & ProtoGCD & GCD & XCon & ProtoGCD \\ \midrule
    Texture & 46.62 & 39.79 & \textbf{21.75} & 91.60 & 93.70 & \textbf{94.60} \\
    Places365 & 66.37 & 67.82 & \textbf{56.47} & \textbf{87.00} & 86.88 & 85.09 \\
    iNaturalist & 70.30 & 69.87 & \textbf{52.29} & 86.28 & 86.29 & \textbf{87.72} \\
    ImageNet-O & 63.47 & 61.70 & \textbf{48.91} & 85.75 & 87.23 & \textbf{87.89} \\
    OpenImage-O & 64.34 & 60.84 & \textbf{46.64} & 86.56 & 88.43 & \textbf{89.45} \\ \midrule
    Mean & 62.22 & 60.00 & \textbf{45.21} & 87.44 & 88.51 & \textbf{88.95} \\ \bottomrule
    \end{tabular}
    }
\end{table}

\emph{Comparative Results.}
As discussed in Section~\ref{sec:extend-to-ood}, ProtoGCD could obtain posterior probabilities with the learned prototypes (Eq.~\eqref{eq:posterior-prob}), for non-parametric methods~\cite{vaze2022gcd,fei2022xcon}, we firstly run K-means on the training set of GCD and employ the cluster centroids of $\mathcal{C}_{old}\cup\mathcal{C}_{new}$ to get predictive probabilities. For fair comparisons, we use MSP~\cite{hendrycks2016baseline} as the score function, and conduct OOD detection on CIFAR100 (Table~\ref{tab:test-ood-cifar100}) and ImageNet-100 (Table~\ref{tab:test-ood-imagenet}). ProtoGCD demonstrates stronger OOD detection capability, \eg, on CIFAR100, it achieves $1.85\%$ lower FPR95 and $1.19\%$ higher AUROC compared to GCD~\cite{vaze2022gcd}.

\begin{table}[!tb]
    \setlength{\tabcolsep}{4pt}
    \centering
    \renewcommand{\arraystretch}{0.8}
    \caption{OOD detection performance with other scores.}
    \vspace{-5pt}
    \label{tab:ood-score}
    \resizebox{.9\linewidth}{!}{
    \begin{tabular}{@{}ccccc@{}}
    \toprule
    \multirow{2}{*}{OOD Scores} & \multicolumn{2}{c}{CIFAR100} & \multicolumn{2}{c}{ImageNet-100} \\ \cmidrule(l){2-3} \cmidrule(l){4-5} 
     & FPR95 $\downarrow$ & AUROC $\uparrow$ & FPR95 $\downarrow$ & AUROC $\uparrow$ \\ \midrule
    MSP~\cite{hendrycks2016baseline} & 55.07 & 86.72 & 45.21 & 88.95 \\
    MLS~\cite{pmlr-v162-hendrycks22a} & \textbf{54.90} & 86.85 & 44.40 & 89.24 \\
    Energy~\cite{liu2020energy} & 54.97 & \textbf{88.57} & \textbf{42.74} & \textbf{90.07} \\ \bottomrule
    \end{tabular}
    }
\end{table}

\begin{figure*}[!t]
    \centering
    \begin{subfigure}{.47\linewidth}
        \centering
        \includegraphics[width=\linewidth]{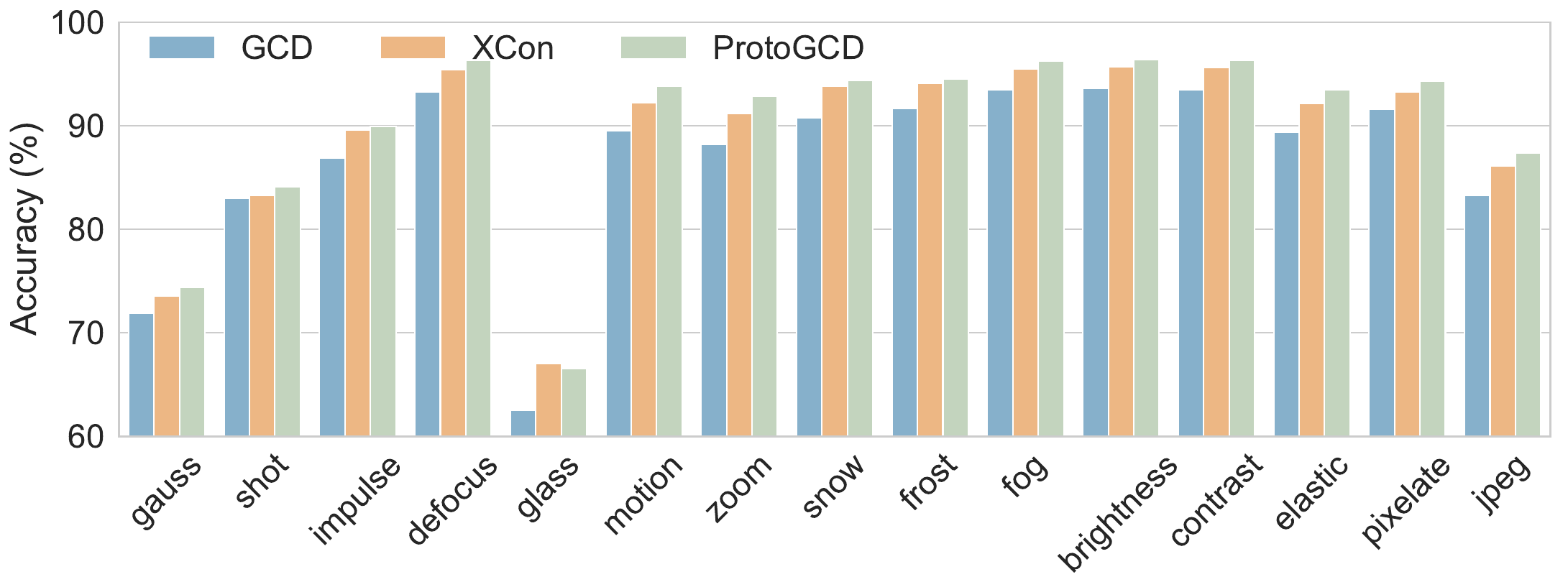}
        \vspace{-15pt}
        \caption{CIFAR-10-C.}
        \label{subfig:corruption-cifar10c}
    \end{subfigure}
    \begin{subfigure}{.47\linewidth}
        \centering
        \includegraphics[width=\linewidth]{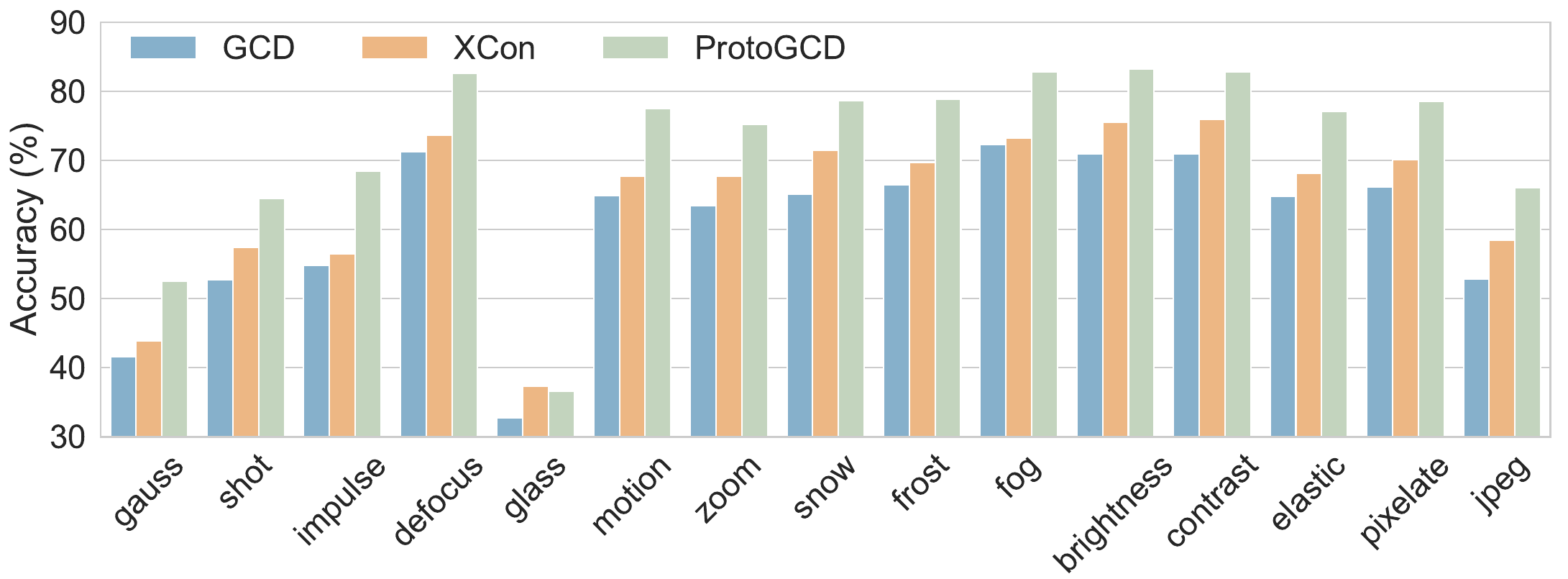}
        \vspace{-15pt}
        \caption{CIFAR-100-C.}
        \label{subfig:corruption-cifar100c}
    \end{subfigure}
    \vspace{-5pt}
    \caption{`All' accuracy (\%) on distribution shift scenarios. CIFAR-10-C and CIFAR-100-C are corruption datasets that contain 15 types of corruption, each with 5 levels of severity. Results here are at severity 1.}
    \label{fig:corruption}
    \vspace{-5pt}
\end{figure*}

\emph{OOD Detection with Other Score Functions.}
We also conduct OOD detection of ProtoGCD under different OOD scores, including max logit score (MLS)~\cite{pmlr-v162-hendrycks22a} and Energy~\cite{liu2020energy}. MLS explores logits in the feature space, namely the similarity to prototypes, $\max_c\boldsymbol{\mu}_c^\top \mathbf{z}_i$, while Energy aligns better with the density of data~\cite{liu2020energy}. Consequently, MLS and Energy outperform the MSP baseline, as validated in Table~\ref{tab:ood-score}.

\subsection{Further Analysis}

\subsubsection{Category Discovery under Covariate-Shifts}
Existing GCD works predominantly assume the data distribution is invariant. However, samples inevitably undergo covariate-shifts~\cite{9080115} in the ever-changing environments. The model is still required to robustly discover distribution-shifted novel categories. In this paper, we evaluate the performance of GCD~\cite{vaze2022gcd}, XCon~\cite{fei2022xcon} and our ProtoGCD under distribution shifts.

\emph{Experimental Setup.}
We directly use models trained in standard GCD settings, \ie, CIFAR10/100, which are then evaluated on the corrupted datasets, \ie, CIFAR10/100-C~\cite{hendrycks2018benchmarking}. It is worth noting that in corrupted test data, there are only covariate-shifts without semantic-shifts. The evaluation dataset contains 15 types of synthetic corruptions, with 5 levels of severity for each, resulting in 75 distinct corruptions. The corruptions include noise, weather changes and digital operations (see the horizontal axis of Fig.~\ref{fig:corruption}).

\begin{table}[!t]
    \setlength\tabcolsep{4pt}
    \centering
    \renewcommand{\arraystretch}{.8}
    \caption{Performance degradation under different levels of corruption severity on CIFAR100-C of our method.}
    \vspace{-5pt}
    \label{tab:cifar100c-gap}
    \resizebox{.8\linewidth}{!}{
    \begin{tabular}{@{}cccc@{}}
    \toprule
    Level & All & Old & New \\ \midrule
    0 & 81.9 & 82.9 & 80.0 \\
    1 & 72.4 $\color{Red}(~~9.5\downarrow)$ & 73.4 $\color{Red}(~~9.5\downarrow)$ & 68.2 $\color{Red}(11.8\downarrow)$ \\
    2 & 66.0 $\color{Red}(15.9\downarrow)$ & 66.9 $\color{Red}(16.0\downarrow)$ & 62.4 $\color{Red}(17.6\downarrow)$ \\
    3 & 60.0 $\color{Red}(21.9\downarrow)$ & 60.7 $\color{Red}(22.2\downarrow)$ & 57.1 $\color{Red}(22.9\downarrow)$ \\
    4 & 53.8 $\color{Red}(28.1\downarrow)$ & 54.3 $\color{Red}(28.6\downarrow)$ & 51.7 $\color{Red}(28.3\downarrow)$ \\
    5 & 43.4 $\color{Red}(38.5\downarrow)$ & 43.8 $\color{Red}(39.1\downarrow)$ & 41.8 $\color{Red}(38.2\downarrow)$ \\ \bottomrule
    \end{tabular}
    }
\end{table}

\emph{Experimental Results.}
Comparative results at severity 1 of three methods are shown in Fig.~\ref{fig:corruption}. ProtoGCD consistently outperforms GCD and XCon, for instance, regarding \texttt{snow}  and \texttt{jpeg} of CIFAR100-C, our method achieves $7.19\%$ and $7.60\%$ higher `All' accuracy over XCon. Besides, we implement our methods on 5 levels of severity. As Table~\ref{tab:cifar100c-gap} reveals, at lower levels of severity, performance degradation for new classes is more significant than for old classes, but at higher levels of severity, the decrease is similar for both.

\subsubsection{Cluster Characteristics}

To further quantitatively evaluate the learned feature representations of GCD, we present the following two metrics of intra-class compactness and inter-class separation:
\begin{align}
    \mathsf{compactness} \uparrow & = \frac{1}{K}\sum_{k=1}^K \frac{1}{|\mathcal{D}_\text{test}^k|}\sum_{i\in\mathcal{D}_\text{test}^k} \overline{\boldsymbol{\mu}}_k^\top\mathbf{z}_i, \label{eq:cluster-compactness} \\
    \mathsf{separation} \downarrow & = \frac{1}{K}\sum_{i=1}^K\frac{1}{K-1}\sum_{j=1,j\neq i}^K \overline{\boldsymbol{\mu}}_i^\top\overline{\boldsymbol{\mu}}_j, \label{eq:cluster-separation}
\end{align}
where $K=K_{old}+K_{new}$ denotes total number of classes, $\mathcal{D}_\text{test}^k=\{(\mathbf{x}_i,y_i)\in\mathcal{D}_\text{test},y_i=k\}$ is the $i$-th classes of the test dataset, $\overline{\boldsymbol{\mu}}_k=\frac{1}{|\mathcal{D}_\text{test}^k|}\sum_{i\in\mathcal{D}_\text{test}^k} \mathbf{z}_i$ is the $\ell_2$-normalized mean feature of class $k$. Due to the cosine similarity in Eq.~\eqref{eq:cluster-compactness} and Eq.~\eqref{eq:cluster-separation}, greater intra-class compactness and inter-class separation lead to higher $\mathsf{compactness}$ and lower $\mathsf{separation}$.

\newcommand{\clustercifar}{
    \begin{tabular}{@{}ccccc@{}}
    \toprule
    \multirow{2}{*}{Methods} & \multicolumn{3}{c}{$\mathsf{Compactness} \uparrow$} & \multirow{2}{*}{$\mathsf{Separation} \downarrow$} \\ \cmidrule(lr){2-4}
     & All & Old & New &  \\ \midrule
    GCD & 0.66 & 0.67 & 0.63 & 0.20 \\
    XCon & 0.71 & 0.72 & 0.67 & 0.13 \\
    {DCCL} & {0.75} & {0.76} & {0.70} & {0.11} \\
    {GPC} & {0.70} & {0.71} & {0.66} & {0.15} \\
    \RC{30}Ours & \textbf{0.80} & \textbf{0.79} & \textbf{0.81} & \textbf{0.01} \\ \bottomrule
    \end{tabular}
}

\newcommand{\clustercub}{
    \begin{tabular}{@{}ccccc@{}}
    \toprule
    \multirow{2}{*}{Methods} & \multicolumn{3}{c}{$\mathsf{Compactness} \uparrow$} & \multirow{2}{*}{$\mathsf{Separation} \downarrow$} \\ \cmidrule(lr){2-4}
     & All & Old & New &  \\ \midrule
    GCD & 0.76 & 0.78 & 0.75 & 0.17 \\
    XCon & 0.77 & 0.78 & 0.77 & 0.17 \\
    {DCCL} & {0.78} & {0.79} & {0.78} & {0.13} \\
    {GPC} & {0.75} & {0.76} & {0.73} & {0.15} \\
    \RC{30}Ours & \textbf{0.79} & \textbf{0.80} & \textbf{0.79} & \textbf{0.12} \\ \bottomrule
    \end{tabular}
}

\begin{table}[!t]
    \setlength\tabcolsep{2pt}
    \centering
    \caption{Cluster metrics (intra-class compactness $\uparrow$ and inter-class separation $\downarrow$) on CIFAR100 (a) and CUB (b).}
    \vspace{-5pt}
    \renewcommand{\arraystretch}{1.0}
    \begin{subtable}{0.49\linewidth}
        \resizebox{\linewidth}{!}{\clustercifar}
        \vspace{1pt}
        \caption{CIFAR100.}
        \label{tab:cluster-analysis-cifar100}
    \end{subtable}\hfill
    \begin{subtable}{0.49\linewidth}
        \resizebox{\linewidth}{!}{\clustercub}
        \vspace{1pt}
        \caption{CUB.}
        \label{tab:cluster-analysis-cub}
    \end{subtable}
    \label{tab:cluster-methods}
    \vspace{-5pt}
\end{table}

\begin{figure}[!tb]
    \centering
    \includegraphics[width=0.85\linewidth]{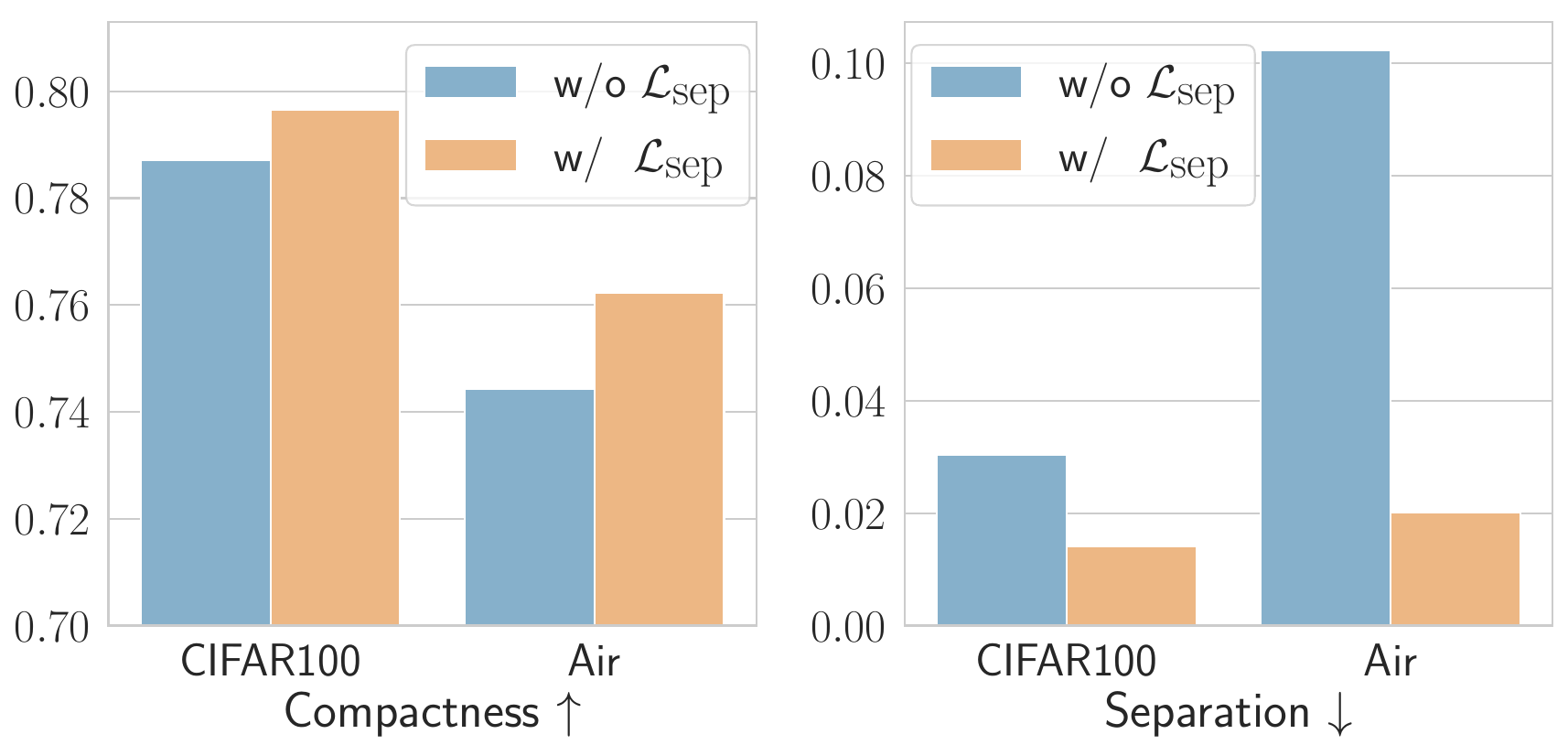}
    \vspace{-5pt}
    \caption{Cluster metrics w/ and w/o prototype separation loss $\mathcal{L}_\text{sep}$, including intra-class compactness $\uparrow$ (left) and inter-class separation $\downarrow$ (right).}
    \label{fig:cluster-analysis-losssep}
\end{figure}

We compute $\mathsf{compactness}$ and $\mathsf{separation}$ in the test dataset, as shown in Table~\ref{tab:cluster-methods}. Regarding the two metrics, ProtoGCD outperforms its competitors on both CIFAR100 and CUB. Take CUB as an instance, ProtoGCD improves $\mathsf{compactness}$ from $0.77$ of XCon~\cite{fei2022xcon} to $0.79$, and decreases $\mathsf{separation}$ from $0.17$ to $0.12$. The results demonstrate that ProtoGCD learns better representations with greater intra-class compactness and inter-class separation.
We further validate the effectiveness of prototype separation loss. As Fig.~\ref{fig:cluster-analysis-losssep} shows, while $\mathcal{L}_\text{sep}$ explicitly enhances inter-class separation, it also implicitly increases intra-class compactness.

\begin{figure*}[!tb]
    \centering
    \includegraphics[width=.9\textwidth]{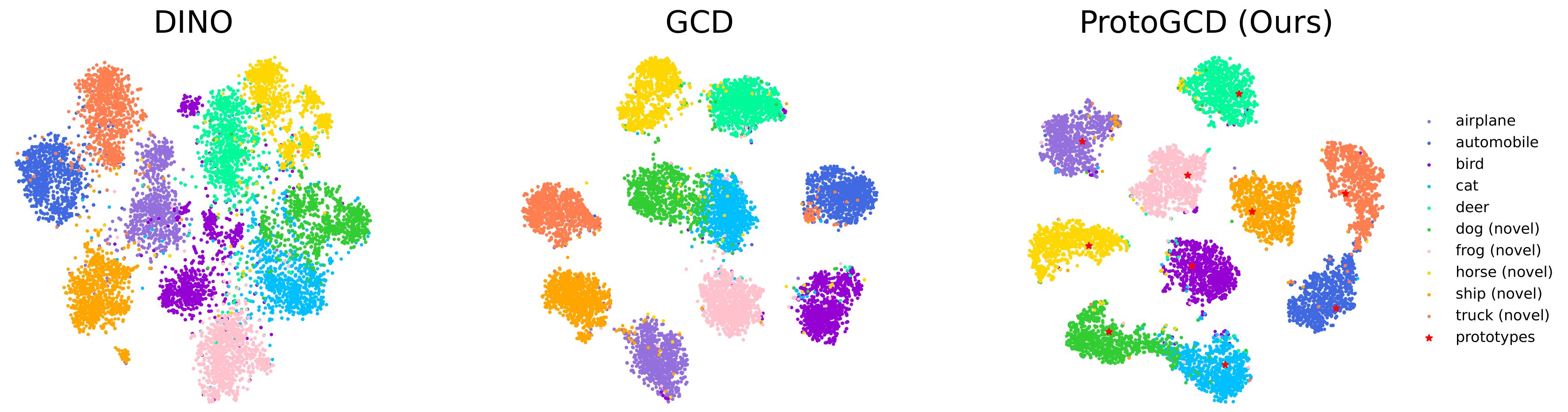}
    \vspace{-5pt}
    \caption{Visualizations of the feature space on CIFAR10. Features of old classes are depicted in cool colors (\eg, $\color{Orchid}{\bullet}$, $\color{blue}{\bullet}$, $\color{cyan}{\bullet}$, $\color{SeaGreen}{\bullet}$) while novel categories in warm colors (\eg, $\color{Yellow}{\bullet}$, $\color{orange}{\bullet}$, $\color{pink}{\bullet}$, $\color{YellowGreen}{\bullet}$). Additionally, the learnable prototypes are denoted as ${\color{red}{\star}}$. Our method provides improved inter-class separation and intra-class compactness.}
    \label{fig:vis-tsne}
\end{figure*}

\subsection{Qualitative Visualization and Analysis}
In this section, we provide visualizations of the feature space (Section~\ref{subsubsec:vis-tsne}) and the attention map (Section~\ref{subsubsec:vis-attention}) to qualitatively verify the effectiveness and superiority of our method. Our method could also retrieve samples with prototypes (Section~\ref{subsubsec:sample-retrieval}).

\subsubsection{Visualizations of the Feature Space} \label{subsubsec:vis-tsne}
We first show feature space visualizations of three methods: pre-trained DINO~\cite{caron2021emerging}, the classical approach GCD~\cite{vaze2022gcd} and our ProtoGCD using t-SNE~\cite{van2008visualizing}. Visualizations on CIFAR10~\cite{krizhevsky2009learning} are illustrated in Fig.~\ref{fig:vis-tsne}.

\emph{ProtoGCD improves intra-class compactness and effectively helps mitigate confirmation bias.} The DAPL module in Eq.~\eqref{eq:adaptive-pl} with the parametric prototypical classifier gradually assigns high-quality pseudo-labels, encouraging samples to move toward their associated prototypes, leading to compact clusters. By contrast, GCD~\cite{vaze2022gcd} with a non-parametric classifier resorts to pure contrastive learning, which performs instance discrimination~\cite{pmlr-v119-chen20j} and treats any two samples as negative pairs, even if they belong to the same class. As a result, it suffers from the \textit{class collision} issue, resulting in dispersed and sparse clusters. For example, in Fig.~\ref{fig:vis-tsne}, the clusters of \texttt{ship} and \texttt{horse} in GCD are dispersed, while in our methods are more compact. Overall, the class-wise prototypes help place each cluster in reasonable locations, and ProtoGCD benefits from the synergy of the proposed pseudo-labeling mechanism and learning objectives.

\emph{ProtoGCD further improves inter-class separation.}
The prototype separation loss $\mathcal{L}_\text{sep}$ explicitly pushes the prototypes far away from each other, which improves inter-class separation. As Fig.~\ref{fig:vis-tsne} shows, \texttt{deer} and \texttt{horse} in GCD~\cite{vaze2022gcd} tend to overlap and become intertwined, posing challenges to distinguishing among them, while our method achieves clear cluster boundaries and separated clusters.

\subsubsection{Visualizations of the Attention Map} \label{subsubsec:vis-attention}
We visualize the attention mechanism of the ViT backbone pre-trained with DINO, fine-tuned with Xcon~\cite{fei2022xcon} and our ProtoGCD in Fig.~\ref{fig:vis-attention}. Specifically, self-attention maps of $\texttt{[CLS]}$ token over three heads in the last layer are displayed. We conduct experiments on Stanford Cars~\cite{krause20133d} and CUB~\cite{wah2011caltech}. The regions with the top attention values are highlighted in red, and deeper red indicates higher attention values.

\begin{figure*}[!tb]
    \centering
    \includegraphics[width=.85\textwidth]{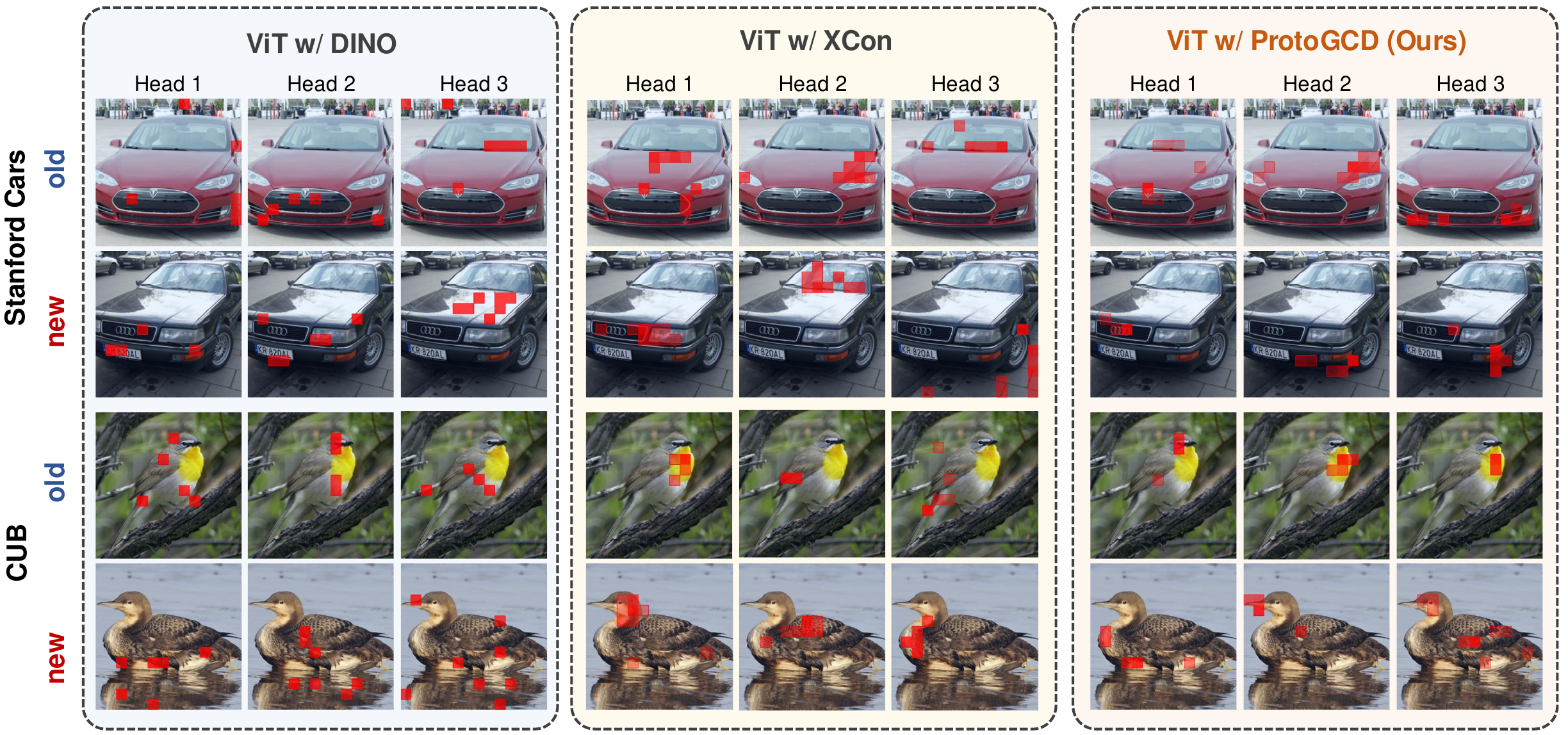}
    \vspace{-5pt}
    \caption{{Visualizations attention maps. For Stanford Cars (top), \texttt{Tesla} (old) and \texttt{Audi} (new) are shown. For CUB (bottom), \texttt{Yellow\_Breasted\_Chat} (old) and \texttt{Pacific\_Loon} (new) are displayed. Please zoom in for more details.}}
    \label{fig:vis-attention}
\end{figure*}

\emph{ProtoGCD produces attended regions with greater concentration and alleviates the spurious correlation regions.}
Overall, in Fig.~\ref{fig:vis-attention}, the attention maps of the pre-trained DINO are relatively sparse and dispersed. For instance, attended regions of \texttt{Pacific\_Loon} distribute across different locations. Even worse, DINO attends to spurious correlation background areas, like surroundings near the car (head 1 and 3 of \texttt{Tesla}), tree branches (head 1 of \texttt{Yellow\_Breasted\_Chat}) and water (three heads of \texttt{Pacific\_Loon}). By contrast, ProtoGCD could greatly mitigate the spurious correlation and focus on core regions to discern classes in a fine-grained manner, like cars' logo (head 1 of \texttt{Tesla} and \texttt{Audi}) and birds' eyes (head 1 of \texttt{Yellow\_Breasted\_Chat}) and beaks (head 2 of \texttt{Pacific\_Loon}). Additionally, the attention areas of each head in ProtoGCD are more concentrated and precise.

\emph{ProtoGCD effectively transfers the classification capabilities from old classes to novel categories.}
In GCD, models are expected to learn the classification criterion, \ie, what constitutes a class and how to discern different classes, on labeled classes, and transfer the knowledge to novel categories. The results in Fig.~\ref{fig:vis-attention} effectively substantiate this point. Specifically, car logos are one of the most salient areas for car classification. Models learn to attend to the car logo on \texttt{Tesla} (head 1 of ProtoGCD) from old classes, and manage to attend to car logos of the new class \texttt{Audi} (head 1). One could also observe a similar phenomenon in birds' eyes in Fig.~\ref{fig:vis-attention}.

\begin{figure}[!t]
    \centering
    \includegraphics[width=.95\linewidth]{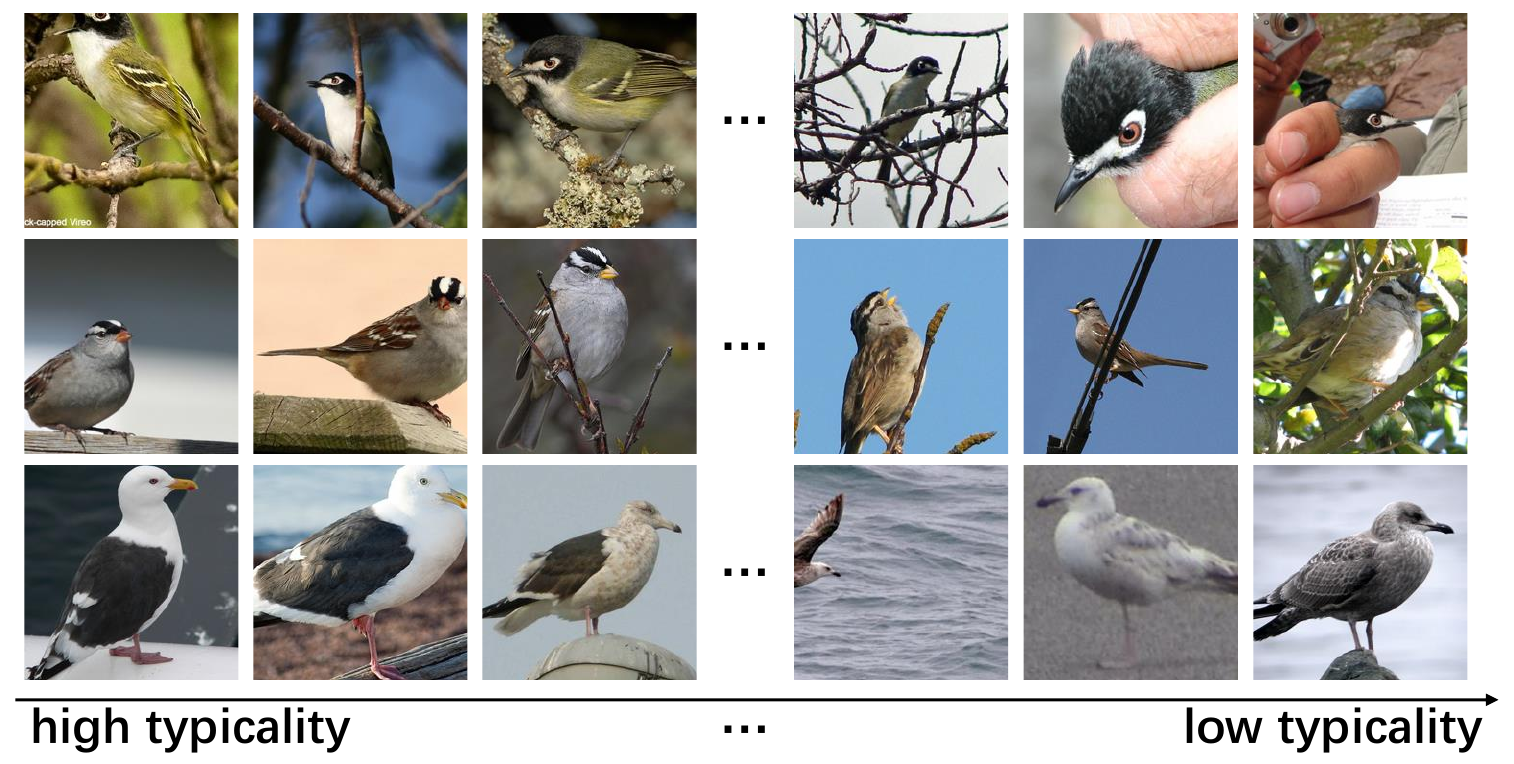}
    \vspace{-5pt}
    \caption{Sample retrieval on CUB. The three most typical and least typical are shown for each class.}
    \label{fig:typicality}
\end{figure}

\subsubsection{Sample Retrieval via \textit{Typicality}} \label{subsubsec:sample-retrieval}

Intuitively, the learnable prototypes of ProtoGCD capture the stereotype or template of each category, allowing us to explore an additional functionality: sample retrieval via \textit{typically}. We define \textit{typicality} as follows:
\begin{equation}
    \textit{typicality}(\mathbf{z}_i)=\boldsymbol{\mu}_{y_i}^\top\mathbf{z}_i,
    \label{eq:typicality}
\end{equation}
where $\boldsymbol{\mu}_{y_i}$ is the learned prototype of the $y_i$-th class. Based on Eq.~\eqref{eq:typicality}, we extract the most typical and least typical samples of \texttt{Black\_capped\_Vireo}, \texttt{White\_crowned\_Sparrow} and \texttt{Slaty\_backed\_Gull}, as depicted in Fig.~\ref{fig:typicality}. In the first row, three typical images (left) contain distinctive features, \eg, head and eyes. In contrast, indistinctive images (right) display that the vireo's body is partially obscured by human hands.

\section{Conclusion and Future Works} \label{sec:conclusion}
In this paper, we propose a novel framework called ProtoGCD to provide unified modeling and learn suitable representations for the task of generalized category discovery (GCD). ProtoGCD is characterized by its unification and unbiased features, as shown in Fig.~\ref{fig:two-charasteristics}. The \textbf{unification} is manifested on two levels: (1) Unified modeling of old and new classes (Fig.~\ref{fig:two-charasteristics} (a)). ProtoGCD employs joint prototypes and unified learning objectives for both old and new classes. (2) Task-level Unification (Fig.~\ref{fig:two-charasteristics} (b)). ProtoGCD could classify old classes, cluster new classes and detect unseen outliers, making it a unified classifier in the \textit{open-world}. Regarding the \textbf{unbiased} properties, there are also two dimensions: (1) ProtoGCD adopts a parametric classifier and DAPL, which aligns closely with the clustering objectives of GCD, together with two regularizations collectively learn suitable and less biased representations for GCD (Fig.~\ref{fig:two-charasteristics} (c)). (2) Our method flexibly assigns pseudo-labels to reduce the confirmation bias of incorrect pseudo-labels (Fig.~\ref{fig:two-charasteristics} (d)). In general, these two characteristics allow ProtoGCD to achieve balanced and remarkable performance for both old and new classes.
Besides, this paper introduces a novel method for estimating the number of new classes, considering both features and accuracy, enabling ProtoGCD to handle more realistic settings when the number of novel categories is unknown. Furthermore, we extend ProtoGCD to detect unseen categories, and achieve task-level unification. To validate the effectiveness of ProtoGCD, we conduct comprehensive experiments, including experiments on generic and fine-grained datasets, ablations and extended OOD detection. We also thoroughly analyze the advantages of ProtoGCD in broad scenarios, \eg, visualization of feature spaces and attention mechanisms, and corruption-shift cases. We further highlight the capability for typical sample retrieval.

ProtoGCD is an initial exploration oriented to handling scenarios involving various types of semantic-shift categories~\cite{zhu2024open,vaze2022openset}, including unlabeled novel categories and unseen outliers. We hope this work can inspire further research on versatile open-world classifiers and tackle more challenging settings, including filtering out outliers~\cite{zhang2022grow} in training data, continual category discovery~\cite{roy2022class,ma2024happy} requiring incrementally identifying novel categories while overcoming catastrophic forgetting. In both scenarios, OOD detection contributes to the discovery of new classes. Besides, future works could also design more suitable methods for long-tailed distributions in GCD and calibrate the confidence for both old and new classes. Beyond classification tasks, category and knowledge discovery can also be further applied to semantic segmentation~\cite{wu2024wps,liu2024mspe} and multimodal learning~\cite{guo2023dual,guo2024cross,guo2024crossmae,guo2025aligned}.


{\small
        \bibliographystyle{IEEEtran}
	\bibliography{refs}
}

\clearpage
\newpage

\section*{\Large Appendix}

\vspace{5pt}
\emph{Overview.}
This is the appendix for the paper entitled ``ProtoGCD: Unified and Unbiased Prototype Learning for Generalized Category Discovery''.
In the material, Section~\ref{sec:appendix-proof} provides the proof of Theorem~\ref{theorem:entropy} of the main text.
Section~\ref{sec:appendix-exp-details} presents the experimental details.
Section~\ref{sec:appendix-settings} demonstrates the relationship of several related task settings.
Section~\ref{sec:appendix-eval} elaborates on the evaluation metrics of GCD and OOD detection.
Section~\ref{sec:appendix-results} gives more detailed experimental results, including OOD detection, sensitivity analysis and visualizations.
An in-depth analysis of entropy regularization is included in Section~\ref{sec:appendix-entropy-herb}.
Section~\ref{sec:appendix-compare-simgcd} presents a comprehensive comparison between SimGCD and our method.
Finally, more discussion about the inter-class separation loss is provided in Section~\ref{sec:appendix-discuss-sep}.

\renewcommand\thesection{\Alph{section}}
\setcounter{section}{0}

\section{Proof of Theorem~2}\label{sec:appendix-proof}

\setcounter{theorem}{1}

\begin{theorem}
\label{theorem:appendix-entropy}
    Marginal entropy maximization $\mathcal{L}_\text{entropy}$ is equivalent to incorporating a prior distribution $\mathcal{U}$ across $K$ categories, where $\mathcal{U}$ is a uniform distribution.
\end{theorem}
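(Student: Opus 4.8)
\emph{Proof plan.} The plan is to reduce the statement to the elementary identity relating the Shannon entropy of a distribution on $K$ atoms to its Kullback--Leibler divergence from the uniform distribution, and then to argue that ``maximizing $\mathcal{L}_\text{entropy}$'' and ``penalizing the model's predicted class marginal toward the uniform prior'' are the same optimization problem up to an additive constant that is independent of the learnable parameters. First I would fix notation: write $\mathcal{U}=(1/K,\dots,1/K)$ for the uniform prior over the $K$ prototypes, and recall that $\overline{\mathbf{p}}\in\Delta^{K-1}$ is the batch- and two-view-averaged marginal prediction of Eq.~\eqref{eq:loss-entropy}, so that $\mathcal{L}_\text{entropy}=-H(\overline{\mathbf{p}})=\sum_{k=1}^K\overline{\mathbf{p}}^{(k)}\log\overline{\mathbf{p}}^{(k)}$.

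The core computation is a one-line expansion, using $\sum_k\overline{\mathbf{p}}^{(k)}=1$:
\begin{equation}
D_{\mathrm{KL}}\!\left(\overline{\mathbf{p}}\,\middle\|\,\mathcal{U}\right)=\sum_{k=1}^K\overline{\mathbf{p}}^{(k)}\log\frac{\overline{\mathbf{p}}^{(k)}}{1/K}=\sum_{k=1}^K\overline{\mathbf{p}}^{(k)}\log\overline{\mathbf{p}}^{(k)}+\log K=\mathcal{L}_\text{entropy}+\log K .
\end{equation}
From here I would spell out both directions of the equivalence. For ``prior $\Rightarrow$ entropy'': imposing the prior $\mathcal{U}$ on the predicted label marginal means augmenting the objective with a term that drives $\overline{\mathbf{p}}$ toward $\mathcal{U}$, the canonical such term being $D_{\mathrm{KL}}(\overline{\mathbf{p}}\,\|\,\mathcal{U})$ (the mode-covering divergence of a data-dependent distribution from a fixed reference, which is exactly the soft relaxation of the equipartition constraint of~\cite{caron2020unsupervised}). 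By the display this equals $\mathcal{L}_\text{entropy}+\log K$, and since $\log K$ depends on neither $\mathcal{E}(\cdot)$, $\phi(\cdot)$, nor $\mathcal{P}$, adding the KL penalty and adding $\mathcal{L}_\text{entropy}$ produce identical gradients and identical minimizers. Conversely, minimizing $\mathcal{L}_\text{entropy}$ is, by the same identity, minimizing $D_{\mathrm{KL}}(\overline{\mathbf{p}}\,\|\,\mathcal{U})$; by Gibbs' inequality (strict convexity of $t\mapsto t\log t$) this is nonnegative and vanishes iff $\overline{\mathbf{p}}=\mathcal{U}$, i.e. the regularizer is minimized precisely when the model's marginal coincides with the uniform prior — which is what it means to incorporate $\mathcal{U}$ as a prior.

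The point that needs care is not a genuine obstacle but the precise reading of ``equivalent'': it is equivalence of optimization problems, i.e. equality of objectives up to a parameter-independent additive constant ($\log K$), not literal functional equality. I would also remark that the argument is insensitive to how $\overline{\mathbf{p}}$ is assembled (averaging over the mini-batch and the two augmentations), since the identity holds for an arbitrary point of $\Delta^{K-1}$; and that replacing $\mathcal{U}$ by an arbitrary prior $\pi$ yields $D_{\mathrm{KL}}(\overline{\mathbf{p}}\,\|\,\pi)=\sum_k\overline{\mathbf{p}}^{(k)}\log\overline{\mathbf{p}}^{(k)}-\sum_k\overline{\mathbf{p}}^{(k)}\log\pi^{(k)}$ as the corresponding regularizer, which substantiates the flexibility claimed in the discussion following Eq.~\eqref{eq:loss-entropy}.
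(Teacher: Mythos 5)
Your proof is correct and follows essentially the same route as the paper's: both rest on the identity $D_{\mathrm{KL}}(\overline{\mathbf{p}}\,\|\,\mathcal{U})=-H(\overline{\mathbf{p}})+\log K$ and the observation that $\log K$ is a parameter-independent constant, so maximizing the marginal entropy is the same optimization as minimizing the KL divergence to the uniform prior. Your additional remarks (Gibbs' inequality for the minimizer, and the generalization to an arbitrary prior $\pi$) are correct elaborations beyond what the paper states but do not change the argument.
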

\begin{proof}
    We firstly draw the Kullback–Leibler (KL) divergence between the marginal distribution $\overline{\mathbf{p}}$ and $\mathcal{U}$ as:
    \begin{equation}
        \text{KL}(\overline{\mathbf{p}}\Vert \mathcal{U})= \sum_{k=1}^K \overline{\mathbf{p}}^{(k)}\log\frac{\overline{\mathbf{p}}^{(k)}}{\mathcal{U}^{(k)}} =-H(\overline{\mathbf{p}})+\log K.
        \label{eq:appendix-kl-divergence}
    \end{equation}
    In Eq.~\eqref{eq:appendix-kl-divergence}, $\log K$ is a constant. Thus, maximizing the entropy is equivalent to minimizing the KL divergence between $\overline{\mathbf{p}}$ and $\mathcal{U}$, \ie, incorporating uniform distribution as a prior.
\end{proof}

\section{Experimental Details}\label{sec:appendix-exp-details}

\emph{Training Hyper-parameters.}
For a fair comparison, the basic training hyper-parameters follow prior methods~\cite{vaze2022gcd,fei2022xcon,pu2023dynamic}. We provide the list of basic training hyper-parameters in Table~\ref{tab:appendix-basic-params}, and the specific hyper-parameters of ProtoGCD are shown in Table~\ref{tab:appendix-protogcd-params}. Additionally, the temperature in \emph{prototype confidence} is $\tau_\text{sharp}$. And we set $\lambda_\text{entropy}=2$ for most datasets, while  $\lambda_\text{entropy}=1$ for CIFAR10~\cite{krizhevsky2009learning} and Aircraft~\cite{maji2013fine}.

\emph{Data Augmentations.}
Following the common practice of GCD~\cite{vaze2022gcd,fei2022xcon,pu2023dynamic}, we resize input images to $224\times 224$. We adopt conventional random augmentations for two views, including \texttt{RandomCrop}, \texttt{RandomHorizontalFlip} and \texttt{ColorJitter}.

\section{Relationship with Related Settings} \label{sec:appendix-settings}

We clarify the relationship between GCD and related fields. (1) \emph{Semi-Supervised Learning.} GCD extends SSL to the \textit{open-world}, where unlabeled data contain samples from new classes, while in SSL, labeled and unlabeled data share the same classes. (2) \emph{Unsupervised Clustering.} GCD could be viewed as deep transfer clustering~\cite{Han_2019_ICCV}. The underlying principle is to transfer the knowledge from labeled classes to cluster unlabeled novel categories. In contrast, without any prior knowledge, unsupervised clustering~\cite{xie2016unsupervised,caron2018deep} suffers from poor representation and ambiguity in the classification criterion. For example, models tend to face the dilemma of whether to group red flowers and red birds together or red flowers and blue flowers into the same cluster. In GCD, models grasp the prior knowledge and implicit cluster criterion in labeled data, as a result, models could obtain desired outcomes. (3) \emph{OOD Detection.} Both GCD and OOD detection consider open-set samples. OOD detection only needs to detect unseen samples, while GCD further requires the clustering of the new classes. (4) \emph{Novel Category Discovery.} GCD relaxes the assumption of NCD that unlabeled data exclusively come from novel classes. In GCD, unlabeled data contain samples from both old and novel classes. To conclude, GCD is a more challenging and pragmatic task.

\begin{table}[!tb]
\scriptsize
    \setlength{\tabcolsep}{6pt}
    \centering
    \renewcommand{\arraystretch}{0.8}
    \caption{Basic training hyper-parameters.}
    \vspace{-5pt}
    \label{tab:appendix-basic-params}
    \resizebox{.6\linewidth}{!}{
    \begin{tabular}{@{}cc@{}}
    \toprule
    Hyper-parameters & Value \\ \midrule
    train epochs & 200 \\
    batch size & 128 \\
    initial learning rate & 0.1 \\
    feature\_dim $d$ & 768 \\
    projection\_dim $d_h$ & 65,536 \\
    supervised weight $\lambda_\text{sup}$ & 0.35 \\ \bottomrule
    \end{tabular}
    }
\end{table}

\begin{table}[!tb]
    \setlength{\tabcolsep}{3pt}
    \centering
    \renewcommand{\arraystretch}{1}
    \caption{Specific hyper-parameters of ProtoGCD.}
    \vspace{-5pt}
    \label{tab:appendix-protogcd-params}
    \resizebox{.85\linewidth}{!}{
    \begin{tabular}{@{}ccc@{}}
    \toprule
    Params & Description & Value \\ \midrule
    $\lambda_\text{entropy}$ & weight of entropy regularization & 1 or 2 \\
    $\lambda_\text{sep}$ & weight of prototype separation & 0.1 \\
    $\tau_c$ & temperature of contrastive learning & 0.07 \\
    $\tau_\text{base}$ & temperature of predictions & 0.1 \\
    $\tau_\text{sharp}$ & temperature of sharpened soft labels & 0.05 \\
    $\tau_\text{sep}$ & temperature of prototype separation & 0.1 \\
    $e_\text{ramp}$ & ramp-up epochs & 100 \\ \bottomrule
    \end{tabular}
    }
\end{table}

\section{Evaluation Metrics} \label{sec:appendix-eval}

\subsection{Generalized Category Discovery}

GCD is essentially a clustering task, especially for novel classes. As described in the main text, during evaluation, we measure the clustering accuracy (ACC) of the model's predictions $\tilde y_i$ given the ground-truth labels $y_i$:
\begin{equation}
    ACC=\max_{p\in\Omega(\mathcal{Y}_u)}\frac{1}{M}\sum_{i=1}^M\mathds{1}\big\{y_i=p(\tilde y_i)\big\},
    \label{eq:appendix-acc}
\end{equation}
where $M=|\mathcal{D}_u|$ is the total number of unlabeled samples, and $\Omega(\mathcal{Y}_u)$ represents the set of all permutations that map the prediction to the ground-truth labels. We provide `All', `Old' and `New' accuracy for all data, data from ground-truth old classes, and data from ground-truth new classes, respectively. Eq.~\eqref{eq:appendix-acc} is achieved by the Hungarian algorithm. Note that we only perform Eq.~\eqref{eq:appendix-acc} \emph{once} on all the test data, and after acquiring $\Omega(\cdot)$, we then calculate `All', `Old' and `New' separately. This is canonical in GCD~\cite{vaze2022gcd,fei2022xcon,pu2023dynamic}.

\subsection{Out-of-Distribution Detection}

For Out-of-distribution (OOD) detection, we treat in-distribution (ID) samples as positives while OOD samples as negatives. In our experiments, the number ratio of ID to OOD samples is set to $1:1$.

\emph{FPR95.}
FPR95 is short for false positive rate at $95\%$ true positive rate. It could be interpreted as the probability that a negative sample (OOD) is misperceived as positive (ID) when $95\%$ of ID samples are correctly accepted, \ie, the true positive rate is $95\%$. 

\emph{AUROC.}
AUROC is short for Area Under the Receiver Operating Characteristic curve, which depicts the true positive rate (TPR) of ID against the false positive rate of OOD by varying the threshold. AUROC could be interpreted as the probability that we assign a higher OOD score to a positive sample than to a negative sample. AUROC is the threshold-independent metric.

\emph{AUPR.}
AUPR is the Area under the Precision-Recall curve, which shows the precision and recall against each other. AUPR-IN means that we treat ID as the positive. AUPR is also a threshold-independent metric.

\section{More Experimental Results} \label{sec:appendix-results}

\subsection{Evaluation of GCD without Prior Class Numbers}
We also conduct experiments in the scenarios without prior class numbers. Specifically, we train ProtoGCD with the estimated $\widetilde K$ (Table~6 in the main text) by \emph{Prototype Score}, as shown in Table~\ref{tab:unknown-number-performance}.

\begin{table}[!t]
    \setlength\tabcolsep{3pt}
    \centering
    \renewcommand{\arraystretch}{1.0}
    \caption{Performance of our method in the setting of unknown class numbers. Values in $()$ indicate the performance gap compared with known class number scenarios.}
    \vspace{-5pt}
    \label{tab:unknown-number-performance}
    \resizebox{.9\linewidth}{!}{
    \begin{tabular}{@{}ccccc@{}}
    \toprule
    Datasets & CIFAR-100 & ImageNet-100 & CUB & Scars \\ \midrule
    All & 81.9 $(0.0\downarrow)$ & 84.8 $\color{Red}(0.8\downarrow)$ & 61.4 $\color{Red}(1.8\downarrow)$ & 52.7 $\color{Red}(0.1\downarrow)$ \\
    Old & 82.9 $(0.0\downarrow)$ & 90.9 $\color{Red}(1.3\downarrow)$ & 66.2 $\color{Red}(2.3\downarrow)$ & 71.1 $\color{Red}(1.6\downarrow)$ \\
    New & 80.0 $(0.0\downarrow)$ & 81.8 $\color{Green}(1.9\uparrow)$ & 58.8 $\color{Red}(1.7\downarrow)$ & 43.8 $\color{Green}(0.6\uparrow)$ \\ \bottomrule
    \end{tabular}
    }
\end{table}

\subsection{Sensitivity of regularization weights}
To further explore the effects of the two regularization terms, we test sensitivity regarding their weights in Fig.~\ref{fig:regularization-weight}. For $\mathcal{L}_\text{entropy}$, the optimal value is $2.0$. Too large values hamper the learning of DAPL, leading to decreased performance. For $\mathcal{L}_\text{sep}$, the optimal value is $0.1$. Overall, $\mathcal{L}_\text{entropy}$ has a greater impact than $\mathcal{L}_\text{sep}$.

\begin{figure}[!t]
    \centering
    \includegraphics[width=.95\linewidth]{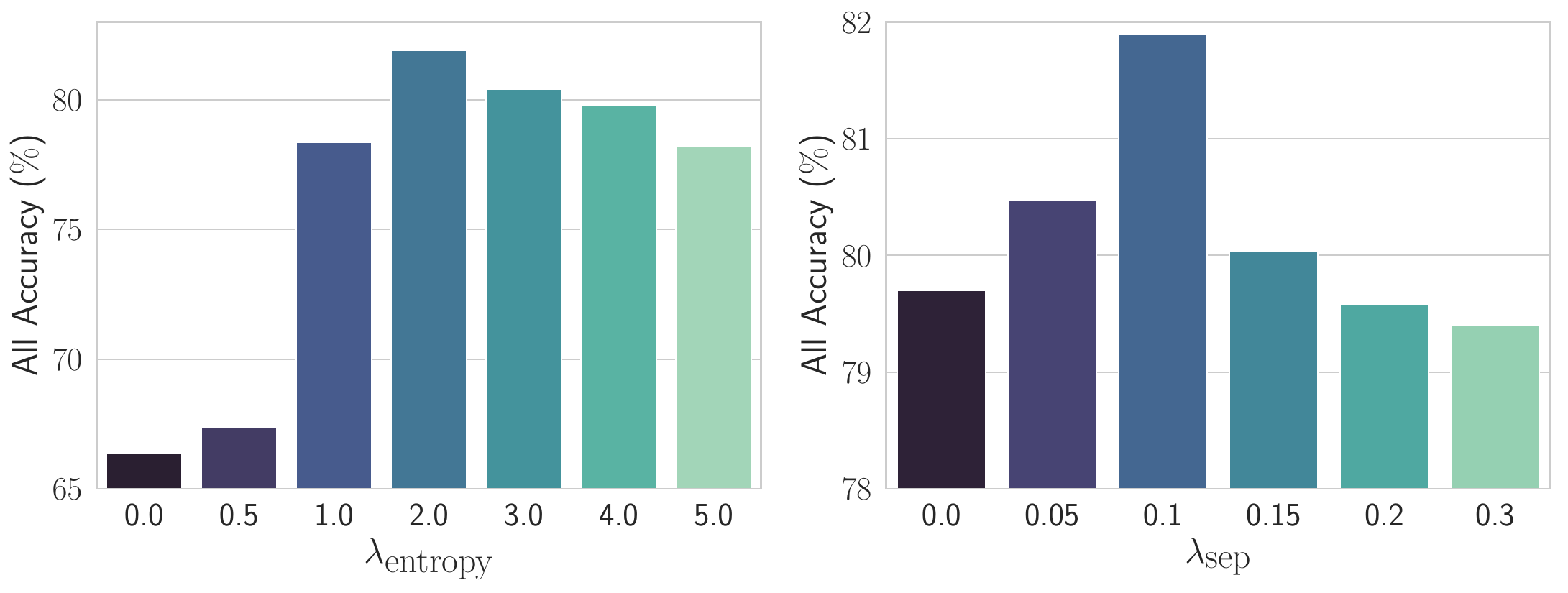}
    \vspace{-7pt}
    \caption{Performance over various weights of $\lambda_\text{entropy}$ and $\lambda_\text{sep}$.}
    \label{fig:regularization-weight}
\end{figure}

\begin{table*}[!t]
    \setlength{\tabcolsep}{4pt}
    \centering
    \renewcommand{\arraystretch}{1.0}
    \caption{OOD detection performance on different OOD datasets of CIFAR100 (in-distribution).}
    \vspace{-5pt}
    \label{tab:appendix-test-ood-cifar100}
    \resizebox{.9\textwidth}{!}{
    \begin{tabular}{@{}cccccccccc@{}}
    \toprule
    \multirow{2}{*}{$\mathcal{D}_\textrm{out}^\textrm{test}$} & \multicolumn{3}{c}{FPR95 $\downarrow$} & \multicolumn{3}{c}{AUROC $\uparrow$} & \multicolumn{3}{c}{AUPR-IN $\uparrow$} \\ \cmidrule(l){2-4} \cmidrule(l){5-7} \cmidrule(l){8-10} 
     & GCD & XCon & ProtoGCD & GCD & XCon & ProtoGCD & GCD & XCon & ProtoGCD \\ \midrule
    Texture & \textbf{29.92$\pm$1.12} & 42.81$\pm$0.71 & 31.31$\pm$0.91 & 92.99$\pm$0.23 & 90.74$\pm$0.24 & \textbf{93.90$\pm$0.19} & 98.44$\pm$0.05 & 97.95$\pm$0.06 & \textbf{98.73$\pm$0.04} \\
    SVHN & \textbf{47.80$\pm$0.93} & 51.54$\pm$0.72 & 50.65$\pm$1.05 & 90.54$\pm$0.22 & 90.89$\pm$0.17 & \textbf{91.21$\pm$0.21} & 98.01$\pm$0.06 & 98.20$\pm$0.04 & \textbf{98.23$\pm$0.05} \\
    Places365 & \textbf{49.36$\pm$1.28} & 69.20$\pm$0.54 & 56.17$\pm$0.75 & \textbf{86.68$\pm$0.48} & 81.31$\pm$0.41 & 84.20$\pm$0.36 & \textbf{96.76$\pm$0.14} & 95.41$\pm$0.15 & 96.26$\pm$0.09 \\
    TinyImageNet & 59.08$\pm$1.26 & 60.88$\pm$1.30 & \textbf{58.93$\pm$1.00} & 84.62$\pm$0.42 & 84.00$\pm$0.35 & \textbf{85.94$\pm$0.33} & 96.41$\pm$0.14 & 96.26$\pm$0.08 & \textbf{96.81$\pm$0.08} \\
    LSUN & 71.16$\pm$0.92 & 63.40$\pm$0.81 & \textbf{60.89$\pm$1.21} & 83.42$\pm$0.20 & 84.68$\pm$0.35 & \textbf{87.07$\pm$0.21} & 96.41$\pm$0.04 & 96.61$\pm$0.10 & \textbf{97.16$\pm$0.06} \\
    iSUN & 69.15$\pm$0.69 & 65.02$\pm$0.99 & \textbf{64.03$\pm$1.10} & 82.87$\pm$0.27 & 83.97$\pm$0.32 & \textbf{84.51$\pm$0.45} & 96.15$\pm$0.07 & \textbf{96.44$\pm$0.09} & 96.42$\pm$0.12 \\
    CIFAR10 & 71.97$\pm$0.73 & 68.53$\pm$1.18 & \textbf{63.53$\pm$1.12} & 77.59$\pm$0.32 & 78.13$\pm$0.45 & \textbf{80.18$\pm$0.33} & 94.47$\pm$0.09 & 94.58$\pm$0.13 & \textbf{95.03$\pm$0.10} \\ \midrule
    Mean & 56.92 & 60.20 & \textbf{55.07} & 85.53 & 84.82 & \textbf{86.72} & 96.66 & 96.49 & \textbf{96.95} \\ \bottomrule
    \end{tabular}
    }
\end{table*}

\begin{table*}[!t]
    \setlength{\tabcolsep}{4pt}
    \centering
    \renewcommand{\arraystretch}{1.0}
    \caption{OOD detection performance on different OOD datasets of ImageNet-100 (in-distribution).}
    \vspace{-5pt}
    \label{tab:appendix-test-ood-imagenet}
    \resizebox{.9\textwidth}{!}{
    \begin{tabular}{@{}cccccccccc@{}}
    \toprule
    \multirow{2}{*}{$\mathcal{D}_\textrm{out}^\textrm{test}$} & \multicolumn{3}{c}{FPR95 $\downarrow$} & \multicolumn{3}{c}{AUROC $\uparrow$} & \multicolumn{3}{c}{AUPR-IN $\uparrow$} \\ \cmidrule(l){2-4} \cmidrule(l){5-7} \cmidrule(l){8-10} 
     & GCD & XCon & ProtoGCD & GCD & XCon & ProtoGCD & GCD & XCon & ProtoGCD \\ \midrule
    Texture & 46.62$\pm$1.27 & 39.79$\pm$0.98 & \textbf{21.75$\pm$1.64} & 91.60$\pm$0.20 & 93.70$\pm$0.21 & \textbf{94.60$\pm$0.38} & 98.37$\pm$0.05 & 98.61$\pm$0.05 & \textbf{98.75$\pm$0.14} \\
    Places365 & 66.37$\pm$1.60 & 67.82$\pm$1.54 & \textbf{56.47$\pm$1.65} & \textbf{87.00$\pm$0.43} & 86.88$\pm$0.29 & 85.09$\pm$0.46 & \textbf{97.50$\pm$0.11} & 97.48$\pm$0.06 & 96.67$\pm$0.13 \\
    iNaturalist & 70.30$\pm$1.45 & 69.87$\pm$1.57 & \textbf{52.29$\pm$1.28} & 86.28$\pm$0.49 & 86.29$\pm$0.31 & \textbf{87.72$\pm$0.57} & 97.28$\pm$0.11 & 97.20$\pm$0.06 & \textbf{97.31$\pm$0.16} \\
    ImageNet-O & 63.47$\pm$0.90 & 61.70$\pm$0.76 & \textbf{48.91$\pm$0.98} & 85.75$\pm$0.39 & 87.23$\pm$0.40 & \textbf{87.89$\pm$0.46} & 97.02$\pm$0.12 & 97.10$\pm$0.11 & \textbf{97.27$\pm$0.10} \\
    OpenImage-O & 64.34$\pm$1.32 & 60.84$\pm$1.02 & \textbf{46.64$\pm$1.10} & 86.56$\pm$0.45 & 88.43$\pm$0.27 & \textbf{89.45$\pm$0.55} & 97.35$\pm$0.12 & 97.17$\pm$0.06 & \textbf{97.59$\pm$0.20} \\ \midrule
    Mean & 62.22 & 60.00 & \textbf{45.21} & 87.44 & 88.51 & \textbf{88.95} & 97.50 & 97.51 & \textbf{97.52} \\ \bottomrule
    \end{tabular}
    }
\end{table*}

\subsection{Detailed OOD Experimental Results}
We provide detailed OOD results in Table~\ref{tab:appendix-test-ood-cifar100} and Table~\ref{tab:appendix-test-ood-imagenet}, including the standard derivation and the AUPR-IN metric. As Table~\ref{tab:appendix-test-ood-cifar100} shows, ProtoGCD consistently outperforms other counterparts for OOD detection.

\subsection{More Visualization Results}
In the main text, we provide the feature visualization of the CIFAR10 dataset. Here, we further visualize the features of the CUB dataset with more classes. ProtoGCD could obtain feature representations with improved intra-class compactness and inter-class separation. In Fig.~\ref{fig:vis-tsne-cub}, the clusters of \texttt{Long\_tailed\_Jaeger}, \texttt{Tennessee\_Warbler} and \texttt{Loggerhead\_Shrike} in GCD are dispersed, while in our methods are more compact. Besides, classes like \texttt{White\_crowned\_Sparrow}, \texttt{Tree\_Sparrow} and \texttt{European\_Goldfinch} in GCD~\cite{vaze2022gcd} tend to overlap and become intertwined, posing challenges to distinguish among them, while our method achieves clear cluster boundaries and separated clusters.

\begin{figure*}[!th]
    \centering
    \includegraphics[width=.98\textwidth]{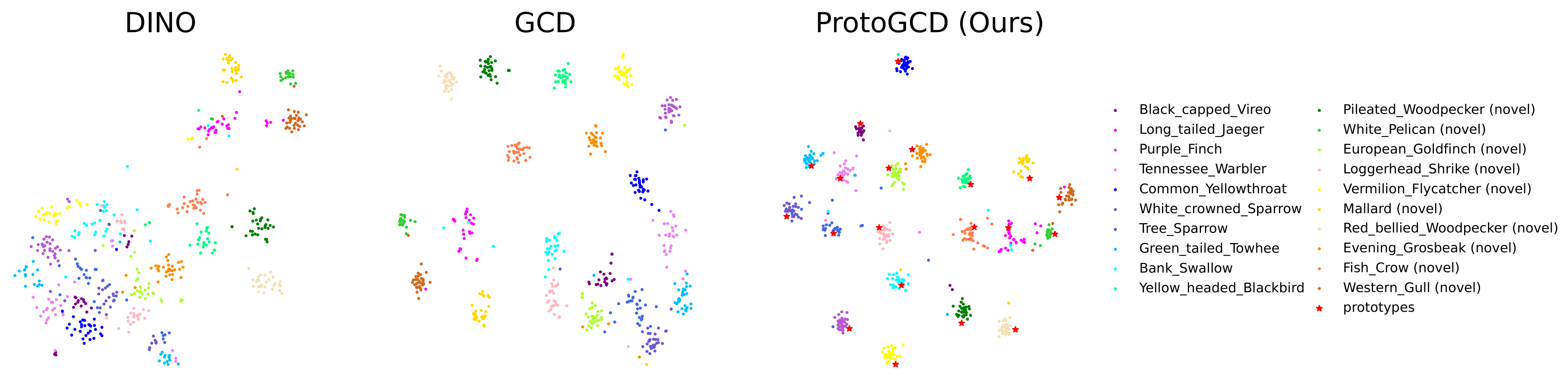}
    \vspace{-5pt}
    \caption{Visualizations of the feature space on CUB. Features of old classes are depicted in cool colors (\eg, $\color{Orchid}{\bullet}$, $\color{blue}{\bullet}$, $\color{cyan}{\bullet}$, $\color{SeaGreen}{\bullet}$) while novel categories in warm colors (\eg, $\color{Yellow}{\bullet}$, $\color{orange}{\bullet}$, $\color{pink}{\bullet}$, $\color{YellowGreen}{\bullet}$). Additionally, the learnable prototypes are denoted as ${\color{red}{\star}}$. Our method provides improved inter-class separation and intra-class compactness.}
    \label{fig:vis-tsne-cub}
\end{figure*}

\section{In-depth Discussion of Entropy Regularization on Herb} \label{sec:appendix-entropy-herb}
Although entropy regularization $\mathcal{L}_\text{entropy}$ has implicitly imposed the assumption of a uniform distribution on the dataset, which might conflict with the long-tailed distributions for Herb. We have conducted a detailed sensitivity analysis of $\mathcal{L}_\text{entropy}$ on the Herb dataset, as in Fig.~9 of the main text. Overall, despite the Herb dataset being a long-tailed dataset, the results indicate a huge degradation in the absence of marginal entropy maximization $\mathcal{L}_\text{entropy}$ (44.5\% $\to$ 29.4\%, as shown in `0.0' of Fig.~9). Even when imposing a small weight, \eg, 0.1, there is a notable performance enhancement (29.4\% $\to$ 36.2\%). In summary, the most suitable weight $\lambda_\text{entropy}$ is approximately 2.0. From the experimental results, we argue that $\mathcal{L}_\text{entropy}$ is still a relatively applicable regularization in GCD. Some explanations are discussed as follows:

\begin{itemize}
    \item $\mathcal{L}_\text{entropy}$ is a soft regularization rather than the hard constraint. It is noteworthy that $\mathcal{L}_\text{entropy}$ is essentially different from the hard constraint, \eg, UNO~\cite{Fini_2021_ICCV} that rigidly follows equipartition constraints via the Sinkhorn-Knopp algorithm~\cite{cuturi2013sinkhorn}. The hard constraint could drastically damage the result. For example, UNO has a very weak performance on Herb in Table~3 of the main paper. In comparison, by incorporating $\mathcal{L}_\text{entropy}$ as a differential part of the overall learning objective, we could adjust the weight $\lambda_\text{entropy}$ to balance its influence. If $\mathcal{L}_\text{entropy}$ is completely discarded via $\lambda_\text{entropy}=0$, the model could be restricted to trivial solutions, leading to significant performance degradation. Conversely, if $\lambda_\text{entropy}$ is too large, it contradicts the long-tailed distribution of Herb. As a result, we could choose a proper $\lambda_\text{entropy}$ to obtain desirable results, for example, $\sim 2.0$ in Fig.~9.
    \item For old and new classes, there is a gap between the model's marginal probabilities $\overline{\mathbf{p}}$ and the model's predicted classes $\hat y$. Formally, let $\overline p^\text{old}=\sum_{c\in\mathcal{C}_{old}}\overline{\mathbf{p}}^{(c)}$ and $\overline p^\text{new}=\sum_{c\in\mathcal{C}_{new}}\overline{\mathbf{p}}^{(c)}$ denote the predicted probabilities for old and new classes, both are scalars and $\overline p^\text{old}+\overline p^\text{new}=1$. Then let $r^\text{old}$ and $r^\text{new}$ denote the proportions of samples that the model classified as old and new classes, \ie, $r^\text{old}=\frac{1}{N}\sum_{i=1}^N 1(\hat y_i<=K^\text{old}),r^\text{new}=\frac{1}{N}\sum_{i=1}^N 1(\hat y_i>K^\text{old})$ and $r^\text{old}+r^\text{new}=1$. Here $\hat y_i=\arg\max_k p(y=k|\mathbf{z}_i)$ denotes the predicted class of the $i$-th sample. Due to the confidence gap between old and new classes, the model generally exhibits higher confidence in old classes (because old classes are partially labeled while new classes are fully unlabeled). Consequently, there exists a disparity between $r^\text{old}$ and $\overline p^\text{old}$, so as to $r^\text{new}$ and $\overline p^\text{new}$. \textbf{The entropy regularization $\mathcal{L}_\text{entropy}$ is directly applied to $\overline p^\text{old}$ and $\overline p^\text{new}$, while the actual long-tailed distribution is associated with $r^\text{old}$ and $r^\text{new}$.} To sum up, considering the confidence gap between old and new classes and the weak confidence calibration performance in GCD, employing a maximum entropy constraint remains a relatively suitable approach. Similar findings have been reported in a recent work~\cite{ma2024happy}. We believe that addressing the gap between old and new classes and reducing the disparity between $\overline{p}$ and $r$ will be a valuable open problem in GCD.
\end{itemize}

\begin{table*}[!t]
    \setlength{\tabcolsep}{25pt}
    \centering
    \renewcommand{\arraystretch}{1.38}
    \caption{The summarized differences between \simgcd{SimGCD} and \protogcd{ProtoGCD} from various perspectives.}
    \vspace{-5pt}
    \label{tab:diff-simgcd-protogcd}
    \resizebox{.85\textwidth}{!}{
    \begin{tabular}{@{}ccc@{}}
\toprule
\textbf{Perspectives} & \simgcd{SimGCD} & \multicolumn{1}{c}{\protogcd{ProtoGCD}} \\ \midrule
Modeling & Purely Discriminative & Hybrid = Generative + Discriminative \\ \hline
Prototypes & $\ell_2$-normed Classifier & Class-wise Distribution \\ \hline
Pseudo-Labeling & Self-distillation & Dual-level Adaptive Pseudo-Labeling \\ \hline
Regularization & Entropy Maximization & Entropy Maximization + Inter-class Separation \\ \hline
Extensions & N/A & Class Number Estimation + OOD Detection \\ \bottomrule
\end{tabular}
    }
\end{table*}

\section{Detailed Comparison with SimGCD} \label{sec:appendix-compare-simgcd}
SimGCD~\cite{wen2023parametric} is a recent parametric-based GCD method. Here, we provide a comprehensive comparison between SimGCD and our ProtoGCD.

\noindent{\textbf{(a) Differences in the model structure design.}}

\begin{itemize}
    \item \textbf{About prototypical classifier.} Although both SimGCD and ProtoGCD utilize prototypes, there is a significant distinction in the meaning of the term `prototype'. \simgcd{SimGCD} refers to its classifier as a prototypical classifier merely due to implementing $\ell_2$ normalization and omitting the bias term upon conventional classifier. So there is no fundamental difference from the traditional classifier. Overall, \simgcd{SimGCD} can be regarded as a purely \textbf{discriminative} model. By contrast, the prototypes in our \protogcd{ProtoGCD} represent the class-wise probability distributions (Eq.~(1) in the main text), \ie, von Mises–Fisher (vMF) distribution~\cite{mardia2000directional}. It is a form of generative modeling. Then, we derive the posterior predictive probabilities in Eq.~(2). The learning mechanism incorporates both discriminative learning with pseudo-labels and generative learning with inter-class separation loss $\mathcal{L}_\text{sep}$ (in Eq.~(14)) and \emph{prototype confidence}. Overall, \protogcd{ProtoGCD} is a \textbf{hybrid} model that combines both generative and discriminative modeling.
    \item Moreover, contrastive learning~\cite{pmlr-v119-chen20j,NEURIPS2020_d89a66c7} is a versatile technique that has been widely adopted in the literature of GCD~\cite{vaze2022gcd,pu2023dynamic,Zhao_2023_ICCV,wen2023parametric}, which helps ensure basic feature representations, so we follow their common practice in our method.
\end{itemize}

\noindent\textbf{(b) Differences in the loss function design.}
\begin{itemize}
    \item \textbf{About regularization terms.} \protogcd{ProtoGCD} primarily comprises two regularization terms, \ie, marginal entropy maximization $\mathcal{L}_\text{entropy}$ and inter-class (prototype) separation regularization $\mathcal{L}_\text{sep}$. Here, $\mathcal{L}_\text{sep}$ is our main novelty. Similar to contrastive learning, entropy regularization $\mathcal{L}_\text{entropy}$ is also commonly employed in the literature of GCD~\cite{wen2023parametric,NEURIPS2023_3f52ab43}, which helps to alleviate trivial solutions in clustering. However, many previous methods, including \simgcd{SimGCD}, rely solely on entropy maximization as a constraint, and neglect the constraints within the feature space, resulting in less separable clusters. To overcome this issue, we propose to explicitly decrease inter-class overlapping via the separation regularization $\mathcal{L}_\text{sep}$. In this way, \protogcd{ProtoGCD} could obtain more suitable representations for GCD and remarkable accuracy for both old and new classes. Besides, the prototype separation loss $\mathcal{L}_\text{sep}$ aligns with our generative modeling, which helps reduce the overlap between distributions of different classes and makes them more separable.
    \item \textbf{About the pseudo-labeling mechanism.} The cross-view prediction is a general framework, while the design of pseudo-labels within this framework is of vital importance. In this regard, our \protogcd{ProtoGCD} have significant differences from \simgcd{SimGCD}. Specifically, \simgcd{SimGCD} simply employs the off-the-shelf self-distillation borrowed from DINO~\cite{caron2021emerging}, which fails to consider the specific characteristics of GCD, resulting in suboptimal performance. In this task, there is an inherent imbalance in labeling conditions between old and new classes, leading to an obvious confidence gap among samples. As a result, the informativeness for pseudo-labeling varies remarkably among different samples. Besides, at early training stages, the model's capabilities are relatively weak and could bring larger noise to pseudo-labels compared with later training stages, so the optimal configuration for pseudo-labels is continuously evolving. These issues motivate us to propose dual-level adaptive pseudo-labeling (DAPL) in \protogcd{ProtoGCD}. Our method is specifically designed to consider varying confidence levels among samples and varying model capabilities across learning stages, and could effectively mitigate biases while achieving efficient self-learning.
\end{itemize}

\noindent\textbf{(c) Other contribution of \protogcd{ProtoGCD}.}
\simgcd{SimGCD} solely focuses on the GCD task. In contrast, we provide theoretical analysis for our \protogcd{ProtoGCD}, and we further devise a method to estimate the number of classes and extend \protogcd{ProtoGCD} to detect OOD samples. 

To conclude, we summarize the differences between these two methods in Table~\ref{tab:diff-simgcd-protogcd}.

\section{More Discussion about Inter-Class Separation Regularization} \label{sec:appendix-discuss-sep}
Although the dispersion loss $\mathcal{L}(m,n)$ (Eq.~(6) in the DCCL paper~\cite{pu2023dynamic}) and our inter-class separation loss $\mathcal{L}_\text{sep}$ (Eq.~(14) in our paper) share a similar goal, our approach is generally more efficient and stable. Specifically, DCCL~\cite{pu2023dynamic} is a non-parametric method following the EM-like framework, where the class-wise conception representations (analogous to prototypes in ProtoGCD) are non-learnable and updated via the exponential moving average (EMA). In each iteration, DCCL requires sampling multiple instances for each conception label, averaging their features, and subsequently computing the dispersion loss. This sampling and averaging process is inefficient, and if the number of samples per class is insufficient, it may lack representativeness, leading to instability. Additionally, DCCL relies on the threshold $\tau^M$ to filter the conception pairs with high uncertainty. Tuning this hyper-parameter might increase the experimental burden. By contrast, our method directly applies a separation loss to learnable prototypes $\{\boldsymbol{\mu}_c\}_{c=1}^K$ (see $\mathcal{L}_\text{sep}$ in Eq.~(14) of our paper), which requires no sampling and averaging process and is computationally simple. $\mathcal{L}_\text{sep}$ enables end-to-end training, making it highly efficient. Furthermore, the learnable prototypes in our method effectively represent each class, eliminating issues about insufficient representation due to limited samples, thereby ensuring the stability of ProtoGCD.

\end{document}